\documentclass{article}

\usepackage{natbib}
\usepackage{smacros}

\usepackage[utf8]{inputenc} 
\usepackage[T1]{fontenc}    
\usepackage{hyperref}       
\usepackage{url}            
\usepackage{booktabs}       
\usepackage{amsmath,amsthm,amssymb,amsfonts}      
\usepackage{nicefrac}       
\usepackage{microtype}      
\usepackage{xcolor}         
\usepackage{enumitem}
\usepackage{mathtools}
\usepackage{todonotes}
\usepackage{algorithm}
\usepackage{algorithmic}
\usepackage{colortbl}
\usepackage{multirow}
\usepackage{graphicx}
\usepackage{subcaption}

\definecolor{cellbest}{RGB}{196,235,196}   
\definecolor{cellgood}{RGB}{255,245,190}   
\definecolor{cellbad}{RGB}{255,205,205}    
\definecolor{headerbg}{RGB}{240,240,240}

\newtheorem{assumption}{Assumption}

\newcommand{\calO}{\mathcal{O}}
\newcommand{\tildeO}{\Tilde{\mathcal{O}}}
\newcommand{\Prob}{\mathbb{P}}

\newcommand{\zhat}{\hat{z}}
\newcommand{\that}{\hat{\theta}}
\newcommand{\tstar}{\theta^{*}}

\newcommand{\best}[1]{\cellcolor{cellbest}\textbf{#1}}
\newcommand{\good}[1]{\cellcolor{cellgood}#1}
\newcommand{\bad}[1]{\cellcolor{cellbad}#1}

\title{Optimal Regret for Single Index Bandits}

\author{%
  Devdan Dey 
 \footnote{ Department of Computer Science and Engineering,
  Indian Institute of Technology Bombay.
  Email: \url{23d0365@iitb.ac.in}} \quad \quad
  Sujoy Bhore 
  \footnote{Department of Computer Science and Engineering,
  Indian Institute of Technology Bombay.
  Email: \url{sujoy@cse.iitb.ac.in}} \quad \quad
  Avishek Ghosh
 \footnote{ Department of Computer Science and Engineering,
  Indian Institute of Technology Bombay.
  Email:
  \url{avishek_ghosh@iitb.ac.in}}
}

\begin{document}

\date{}
\maketitle

\begin{abstract}
We study the \emph{single-index bandit} problem, 
where rewards depend on an unknown 
one-dimensional projection 
of high-dimensional contexts through an unknown reward function. This model extends linear and generalized linear bandits to a nonparametric setting, and is particularly relevant when the reward function is not known in advance. 
While optimal regret guarantees are known for monotone reward functions, 
the general non-monotone case remains poorly understood, 
with the best known bound being 
$\tilde{\mathcal{O}}(T^{3/4})$ (under standard boundedness and Lipschitz assumptions on the reward function \cite{kang2025single}).

We close this gap by establishing 
the optimal regret for general single-index bandits. We propose a simple two-phase algorithm, namely, Zoomed Single Index Bandit with Upper Confidence Bound (\texttt{ZoomSIB-UCB}), that first estimates the projection direction via a normalized Stein estimator, and then reduces the problem to a one-dimensional bandit using discretization and finally use UCB. This approach achieves a regret of $\tilde{\mathcal{O}}(T^{2/3})$, and improves significantly upon prior work without any additional assumptions. We also prove a matching minimax lower bound of $\tilde{\Omega}(T^{2/3})$, showing that the upper bound is essentially tight. Our upper and lower bounds together provide a sharp characterization of the regret in single-index bandits. Moreover, the empirical results further demonstrate the effectiveness and robustness of our approach.

\end{abstract}

\section{Introduction}
In a single index model (SIM) on data $(x,y) \in \mathbb{R}^{d}\times \mathbb{R}$, we let $y$ depend on the projection of $x$ onto an unknown parameter $\theta_*$ via an unknown function $f:\mathbb{R} \rightarrow \mathbb{R}$. Hence, learning in SIM is nonparametric since we need to estimate the parameter $\theta_*$ as well as the unknown function $f$. As an extension of the Generalized Linear Model, the SIM is a versatile statistical framework that has been applied across longitudinal data analysis, quantile regression, and econometrics, and has more recently attracted substantial attention from the theoretical deep learning community as a tool for evaluating the ability of neural networks to learn low-dimensional representations, in contrast to kernel methods (\cite{plan2016generalized,brillinger1982generalized,bruna2025survey}).

Learning with SIM remains an active area of research. Early foundational work by \cite{ichimura1993semiparametric} 
and \cite{hardle1993optimal} establish semiparametric efficiency bounds and 
optimal estimation rates for the SIM in the classical low-dimensional regime. 
\cite{hornik1989multilayer} and \cite{mccullagh1989generalized} connect the SIM to neural networks and generalized linear models respectively. In the statistical learning theory literature, \cite{plan2016generalized} 
and \cite{goldstein2018structured} study SIMs through the lens of empirical 
process theory and Gaussian width. More recently, the 
connection between SIMs and gradient descent on two-layer neural networks has 
attracted significant attention ( \cite{barak2022hidden,abbe2022merged}). 

Recently, there has been significant interest in understanding the single index model in the sequential framework. The multi-armed bandit is a canonical framework in modeling online (sequential) learning and is classically used for recommendation, clinical trials, targeted ads and hyperparameter tuning \cite{Lattimore_Szepesvari_2020,bubeck2012regret}. Traditionally, in order to incorporate contextual information, a variant of multi armed bandit framework, namely contextual linear bandit \cite{chu-contextual,yadkori,dani2008stochastic,li2010contextual} is studied where the reward is assumed to be a linear function of the context. However, in many applications like click through rate in recommendation systems the linear model fails to capture the reward. To address this, \cite{filippi-glm,li2017provably,zhang2025generalized,faury2020improved,abeille2021instance} study the generalized linear bandit framework where the reward is assumed to depend on the context via a generalized linear model. Formally, the mean reward at time $t$ is given by $f(x^T \theta_*)$ where the reward function $f(.)$ is known in advance.

Although such a model captures a lot of practical application, assumption about the knowledge of $f(\cdot)$ is often unrealistic and it has been shown in \cite{ghosh2017misspecified,bogunovic2021misspecified} that misspecification can lead to linear regret. To address this issue, \cite{kang2025single} introduce single index bandit, where the reward function is not known in advance. When the reward function is monotone, the authors of \cite{kang2025single} show that exploiting the Stein estimator, an optimal regret scaling of $\mathcal{O}(\sqrt{T})$ is possible where $T$ is the learning horizon. However, when the reward function is non-monotone, \cite{kang2025single} assumes bounded and Lipschitz structure on the reward function and use kernel based estimator to learn the (unknown) function. Using an explore then commit type algorithm resulting in a regret of $\tildeO(T^{3/4})$.

There has been a few recent works on single index learning in bandits. For example, in \cite{arya2026kernel}, the authors assume that the reward function comes from a Reproducing Kernel Hilbert Space (RKHS) and exploiting the reproducing property as well as representer theorems, the authors show that in some cases $\mathcal{O}(\sqrt{T})$ is possible. Moreover, in \cite{ma2025nonparametric} reward functions coming from the intersection of smoothness and monotone function class is studied. The paper also proves matching lower bounds for such smooth monotone functions.

Summarizing the above results, we see that for monotone single index bandits, the optimal regret is well understood in literature. However, for general non-monotone functions (with bounded ness and Lipschitz structure), apart from the $\tildeO(T^{3/4})$ regret in \cite{kang2025single}, no other results are known to the best of our knowledge. This forms the central question of our work: 
\begin{center}
    \textit{What is the optimal regret scaling for general single index bandit?}
\end{center}
In this paper, we answer the above question. We keep the assumptions identical to that of \cite{kang2025single} (i.e., the reward function is bounded and Lipschitz) and obtain an improved regret bound of $\tildeO(T^{2/3})$. We leverage tools from the Lipschitz bandit literature \cite{kleinberg2004nearly,kleinberg2008multi,agrawal1995continuum,auer2007improved,slivkins2019introduction} and propose an algorithm that uses discretization and an appropriately instantiated Upper Confidence Bound (UCB) algorithm. Moreover, we prove a minimax lower bound of $\tilde{\Omega}(T^{2/3})$ implying that no bandit algorithm can incur better regret in the single index setting. Combining these two results, we answer the aforementioned question that the correct regret scaling for general (non-monotone) single index bandit is $\Theta(T^{2/3})$ (ignoring log factors). We now summarize our contributions.

\subsection{Summary of contributions}
\emph{Improved Regret:} We propose a learning algorithm namely Zoomed Single Index Bandit with Upper Confidence Bound (\texttt{ZoomSIB-UCB}) in Section~\ref{sec:algo}. On a high level, the algorithm breaks the single index learning to a sequence of one-dimensional bandit problems. In the first phase of \texttt{ZoomSIB-UCB}, we use the Stein estimator \cite{stein1981estimation} with proper normalization and leverage the Lipschitz property of the reward function to discretize the reward space with appropriate resolution into $N$ bins. In the second phase, we play UCB over $N$ bins to choose the arm and observe the associated reward. It turns out that $N \sim T^{1/3}$ is sufficient and as a result the regret of \texttt{ZoomSIB-UCB} is $\tildeO(T^{2/3})$. This may be seen as a direct improvement over \cite{kang2025single}. 

\emph{Matching Lower Bound:} We show that the $\tildeO(T^{2/3})$ regret scaling is unavoidable via a  lower bound. Formally, we prove a minimax lower bound showing that there exists a problem instance such that \emph{any} learning algorithm suffers a regret of $\Omega(T^{2/3})$. Hence, our proposed algorithm is optimal. We construct a one dimensional problem with \emph{bumps} respecting the Lipschitz property. Even in this simple framework, we perturb the problem instance and using information theoretic framework, show that a regret lower bound of $\tilde{\Omega}(T^{2/3})$. This implies that \texttt{ZoomSIB-UCB} is optimal. To the best of our knowledge, this is the first work to obtain a minimax lower bound for general non-monotone reward functions.

\emph{Simulation:} We demonstrate the empirical robustness of \texttt{ZoomSIB-UCB} through extensive simulations on complex synthetic geometries and high-dimensional real-world datasets (Section \ref{sec:experiments}). Our results confirm that our approach strictly supersedes the previous state-of-the-art GSTOR \cite{kang2025single} in all respects, consistently achieving substantially lower cumulative regret across every evaluated environment.

\subsubsection{Novelty and Technical Challenges}
The algorithm with improved regret bound and the matching lower bounds comes with significant technical challenges, which we succinctly summarize here. Our proposed algorithm \texttt{ZoomSIB-UCB} uses the notion of zooming and discretization popular in Lipschitz bandits \cite{slivkins2019introduction,kleinberg2004nearly} along with the framework of sleeping bandits \cite{kanade-sleeping,kleinberg2010regret}. We first argue that a discretization leading to $N \sim T^{1/3}$ bins is enough in the first phase. A distinctive feature of \texttt{ZoomSIB-UCB} is that only a random subset of bins is available to the learner. Hence, a standard UCB treating all $N$ bins as viable options would be incorrect. Sleeping bandit \cite{kleinberg2010regret} accounts for this by comparing the current reward to the best available bin (see Algorithm~\ref{alg:zoomsib_ucb} for details). Note that a naive application of zooming based algorithm (which is typical in Lipschitz bandits) does not necessarily provide guarantees for single index model, as we need to estimate the underlying parameter $\theta_*$ along with learning the function simultaneously. We leverage a normalized Stein estimator, zooming based approach and sleeping bandits concurrently to complete the job.

For the minimax lower bound, construction of a hard problem instance and its perturbation is quite non-trivial. We consider a reward function by embedding small plateaued bump of height $\sim T^{-1/3}$. This generates a family of $2^{T^{1/3}}$ functions that are almost identical except  bumps (perturbation). We then reduce this to $T^{1/3}$ armed bandit problem. Finally we define 2 functions that differ in one bump and show that since the bump is of small height, distinguishing the two functions require huge exploration. Formally, using information theoretic tool like Bretagnolle-Huber inequality, we characterize this exploration, leading to an overall regret of $\tilde{\Omega}(T^{2/3})$. 

\subsection{Notation}
For integer $r$, $[r]$ denotes the set $\{1,\ldots,r\}$. The notation $\tildeO(\cdot)$ and $\tilde{\Omega}(\cdot)$ hides polylogarithmic terms. We use $\mathsf{polylog}(m)$ to denote polylogarithmic terms in $m$ with sufficiently large constant degree.

\section{Setup and Preliminaries}
We consider the sequential decision-making problem formulated as a Single-Index Bandit (SIB) \cite{kang2025single}. Let $T$ denote the time horizon. At each round $t \in [T]$, the learner observes an arm set $\mathcal{X}_t = \{x_{t,a} \in \mathbb{R}^d : a \in [K]\}$ consisting of $K$ feature vectors. We assume these context vectors are drawn i.i.d. from a continuous distribution $\mathcal{D}$ with a multivariate probability density function $p(\cdot)$.

The learner selects an arm $x_t \in \mathcal{X}_t$ and observes a stochastic reward:
$$y_t = f(x_t^\top \theta_*) + \eta_t$$
where $\theta_* \in \mathbb{R}^d$ is the unknown true parameter vector, and $f: \mathbb{R} \to \mathbb{R}$ is an unknown, continuously differentiable link function. 

\begin{assumption}[Noise]\label{asm:noise}
The noise sequence $\{\eta_t\}_{t \ge 1}$ satisfies: 
$\eta_t$ is independent of $x_t$ (and of all arms $\mathcal{X}_t$ 
at round $t$), and is conditionally $\sigma$-sub-Gaussian:
$\E[e^{\lambda \eta_t} \mid \mathcal{F}_{t-1}] 
\le e^{\lambda^2 \sigma^2 / 2},$ for all $\lambda \in \R,$ where $\mathcal{F}_{t-1} = \sigma(\mathcal{X}_{1:t}, y_{1:t-1})$ 
is the natural filtration.
\end{assumption}

\emph{Model Identifiability:} One needs to address two issues regarding identifiability in the single index model- (a) scaling and (b) direction (sign). To ensure scaling, we adopt the standard normalization $||\theta_*||_1 = 1$ (see \cite{kang2025single})\footnote{Another option is $\ell_2$ normalization. Our choice of $\ell_1$ normalization has no resemblance to sparsity. Our proposed algorithm uses a discretization which is more tailored to the $\ell_1$ norm.}. Moreover, a quantity of interest for single index model is $\mu^* = \mathbb{E}[f'(X^T\theta_*)]$ where $X\sim \mathcal{D}$. Similar to \cite{kang2025single} (see Section on general function), we assume $\mu^* > 0$ to resolve the direction identifiability.

\emph{Cumulative Regret:} Let $a_{t}^{*} := \arg\max_{a \in [K]} f(x^\top \theta_*$) denote the optimal arm at round $t$, and let $x_{t,*} := x_{t,a_t^*}$ The performance of the learner is measured by the cumulative regret over $T$ rounds, defined as the cumulative difference between the expected reward of the optimal arm and the chosen arm:
$$R_T = \sum_{t=1}^T \left( f(x_{t,*}^\top \theta_*) - f(x_t^\top \theta_*) \right)$$
Note that for non-monotone $f$, the optimal arm is not necessarily the one with the largest projection $x^\top\theta_* $

\subsection{Stein's Identity and Parameter Estimation}
Because the reward function $f(\cdot)$ is completely unknown, standard generalized linear bandit (GLB) estimators (such as maximum likelihood) are intractable. To bypass this structural limitation, we leverage Stein's method \cite{stein1981estimation}.

\begin{definition}[Score Function]
The score function $S^p: \mathbb{R}^d \to \mathbb{R}^d$ associated with the density $p(x)$ is defined as: $S^p(x) = -\nabla_x \log(p(x)) = -\frac{\nabla_x p(x)}{p(x)}$.
\end{definition}

For readability, we omit the superscript $p$ and simply write $S(x)$ when the underlying distribution is clear from the context.

\textbf{Stein's Identity:} By the Generalized Stein's Lemma, for any continuously differentiable function $f$, it holds that $\mathbb{E}[f(X) S(X)] = \mathbb{E}[\nabla f(X)]$ \cite{stein1981estimation}. Applying this to our specific reward model yields:
$$\mathbb{E}[y_t S(x_t)] = \mathbb{E}[f(x_t^\top \theta_*) S(x_t)] = \mathbb{E}[f'(x_t^\top \theta_*)] \theta_* := \mu_* \theta_*$$

This identity demonstrates that the expected value of the reward, when weighted by the score function, perfectly aligns with the direction of the true parameter $\theta_*$ (scaled by a constant $\mu_*$) \cite{stein1981estimation,kang2025single}. This allows the learner to construct an unbiased estimator for the direction of $\theta_*$ entirely without requiring the explicit form of $f(\cdot)$.

\begin{remark}[Regularity conditions for Stein's identity]
\label{rem:stein_regularity}
The identity 
$\E[f(X^\top\theta^*) S(X)] = \E[f'(X^\top\theta^*)] \theta^*$ 
follows from the Generalized Stein's Lemma 
\cite{stein1981estimation}, 
and requires the boundary term $f(x^\top\theta^*) p(x)$ to vanish 
as $\norm{x} \to \infty$.  Under Assumption~\ref{asm:lip}, the 
sub-Gaussian decay of $p$ and the local boundedness of $f$ on the 
effective range ensure this condition.  Additionally, the algorithm 
requires evaluation of the score function 
$S(x) = -\nabla \log p(x)$, and hence the density $p$ must be 
known to the learner.  This is consistent with the framework of 
\cite{kang2025single}, where the same requirement holds implicitly.
In practical applications such as recommendation systems, the 
feature distribution is typically estimated from logged data 
and can be treated as known.
\end{remark}

\subsection{The Truncated Stein Estimator}

Given $n$ independent and identically distributed samples $\{(x_i, y_i)\}_{i=1}^n$, the naive empirical approach would be to compute the sample average $\frac{1}{n}\sum_{i=1}^n y_i S(x_i)$. However, as shown in \cite{kang2025single}, a truncation is typically required to control the tails. Let $\varphi_\tau : \R^d \to \R^d$ denote an element-wise truncation function
\begin{align*}
    [\varphi_\tau(v)]_j = \text{sgn}(v_j) \min(\abs{v_j}, \tau), \quad \forall j \in [d]
\end{align*}
where $\tau > 0$ is a carefully chosen truncation threshold. The unnormalized truncated Stein estimator is then defined as
\begin{align}
\label{eqn:stein}
    \that = \frac{1}{n} \sum_{i=1}^n \varphi_\tau(y_i S(x_i))
\end{align}

\begin{remark}[Truncation threshold]
\label{rem:truncation_main}
The threshold $\tau$ balances the bias from discarding 
extreme values of $y_i S(x_i)$ against the variance 
reduction from truncation. We set
$\tau = \sqrt{\frac{3(\sigma^2 + L_f^2) M n}
{\log(2d/\delta)}},$
where $\sigma, L_f, M$ are as in 
Assumptions~\ref{asm:moment} and~\ref{asm:lip}. This 
yields the optimal $\mathcal{O}(d/\sqrt{n})$ estimation 
rate (see Lemma~\ref{lem:unnorm_stein}).
\end{remark}

To resolve identifiability, we define the normalized estimator as $\that_0 = \that/\norm{\that}_{1}$. We now establish the high-probability error bound for the normalized estimator $\that_0$.

\begin{lemma}[Normalized Stein Estimation Error] \label{thm:stein_error}
Suppose the unnormalized truncated Stein estimator satisfies $\norm{\that - \mu^* \tstar}_1 \le \epsilon$ with probability at least $1-\delta$, where $\epsilon = C_\theta d \sqrt{\frac{\log(2d/\delta)}{n}}$. Assume $\mu^* > 0$, $\norm{\tstar}_1 = 1$, and that the sample size $n$ is sufficiently large such that $\epsilon \le \mu^*/2$. Then, with probability at least $1-\delta$, the normalized estimator satisfies, $\norm{\that_0 - \tstar}_1 \le \frac{4\epsilon}{\mu^*}.$
\end{lemma}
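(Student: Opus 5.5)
We work on the event of probability at least $1-\delta$ on which $\norm{\that - \mu^* \tstar}_1 \le \epsilon$; everything below is deterministic on this event. The idea is to split the normalization error into a \emph{direction} part and a \emph{scale} part using the triangle inequality, and to control the scale through the reverse triangle inequality.

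\textbf{Step 1: $\that$ is well away from zero.} Write $a := \norm{\that}_1$. Since $\norm{\mu^*\tstar}_1 = \mu^*\norm{\tstar}_1 = \mu^*$ (using $\mu^*>0$ and $\norm{\tstar}_1=1$), the reverse triangle inequality gives $|a - \mu^*| \le \norm{\that - \mu^*\tstar}_1 \le \epsilon$. With $\epsilon \le \mu^*/2$ this yields $a \ge \mu^*/2 > 0$, so $\that_0 = \that/a$ is well defined.

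\textbf{Step 2: decompose the error.} Add and subtract $\mu^*\tstar/a$:
\begin{align*}
\norm{\that_0 - \tstar}_1 \le \frac{1}{a}\norm{\that - \mu^*\tstar}_1 + \norm{\tstar}_1\,\Bigl|\frac{\mu^*}{a} - 1\Bigr| = \frac{\norm{\that - \mu^*\tstar}_1 + |\mu^* - a|}{a}.
\end{align*}
Both terms in the numerator are at most $\epsilon$, so the numerator is at most $2\epsilon$. Dividing by $a \ge \mu^*/2$ gives $\norm{\that_0 - \tstar}_1 \le 2\epsilon/(\mu^*/2) = 4\epsilon/\mu^*$, as claimed.

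There is no real obstacle here. The only point that needs care is the lower bound $a \ge \mu^*/2$, which is exactly where the hypothesis $\epsilon\le\mu^*/2$ enters. It guarantees that the normalization is well defined, and it ensures that dividing by $\norm{\that}_1$ inflates the error by at most a factor $2/\mu^*$. The sign assumption $\mu^*>0$ is what gives $\norm{\mu^*\tstar}_1=\mu^*$, not $|\mu^*|$, so that normalizing recovers $\tstar$ itself rather than $-\tstar$.
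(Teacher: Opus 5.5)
Your proof is correct and follows essentially the same route as the paper. Both use the reverse triangle inequality to get $\|\hat{\theta}\|_1 \ge \mu^*/2$, and your add-and-subtract of $\mu^*\theta^*/a$ is the paper's add-and-subtract of $\mu^*\theta^*$ inside $\frac{1}{\|\hat{\theta}\|_1}\|\hat{\theta} - \|\hat{\theta}\|_1\theta^*\|_1$, with the factor $1/a$ distributed, which gives the identical bound $2\epsilon/(\mu^*/2) = 4\epsilon/\mu^*$.
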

Note that unlike \cite{kang2025single} the above gives a guarantee on the estimation of $\theta^*$, not a scaled version of it.

\begin{remark}[On Normalization and Norm Selection]
Lemma \ref{thm:stein_error} demonstrates that the $\ell_1$-normalization step does not degrade the fundamental convergence rate of the unnormalized estimator. \end{remark}

\section{Algorithm}
\label{sec:algo}
We discuss our proposed algorithm Zooming Single Index Bandit with UCB (\texttt{ZoomSIB-UCB})
\subsection{Overview}
\texttt{ZoomSIB-UCB} operates in two sequential phases. The high-level idea is to reduce the original high-dimensional contextual bandit problem to a sequence of one-dimensional bandit problems by exploiting the single-index structure. Since the reward depends on $x^\top \tstar$ alone, if we had access to $\tstar$ we could project every arm's context onto the real line and simply run a standard bandit algorithm on the resulting scalar values. Because $\tstar$ is unknown, we must first estimate it, and then carefully account for the estimation error when making decisions.

\textbf{Phase 1 (Parameter Estimation).} The algorithm dedicates the first $T_0$ rounds to estimating the unknown index direction $\tstar$. 

\emph{The Procedure:} During this phase, it pulls arms uniformly at random, collecting observations that are fed into the Stein-score estimator. The estimator produces $\that$ satisfying $\norm{\that - \mu^* \tstar}_1 \le \epsilon$ with high probability. We then work with the $\ell_1$-normalised version $\that_0 := \that / \norm{\that}_1$, which estimates $\tstar$ directly. 

\emph{The Calibration:} The exploration length $T_0$ is calibrated so that the resulting estimation error satisfies:
\begin{equation}
    \norm{\that_0 - \tstar}_1 \le \frac{\Delta}{2L},
\end{equation}
where $\Delta = T^{-1/3}$ is the bin width chosen for Phase 2 and $L = 4\sqrt{\log(dT/\delta)}$ is a high-probability upper bound on $\norm{x_{t,a}}_\infty$. 

This precise calibration is what makes the two phases mesh together: it guarantees that using $\that_0$ in place of $\tstar$ displaces each arm's projected index by at most $\Delta/2$, i.e., at most half a bin width.

\textbf{Phase 2 (UCB over Bins).} Once $\that_0$ is computed, the algorithm discretises the real line into $N = \calO(T^{1/3})$ bins $B_1, \ldots, B_N$ of width $\Delta$, centred symmetrically around zero within the window $[-W, W]$, where $W = 4\sqrt{\log(T/\delta)}$ captures all realistic index values with high probability. At each subsequent round $t$, the algorithm:
\begin{enumerate}
    \item Projects each of the $K$ presented arms onto the estimated index: $\zhat_{t,a} = x_{t,a}^\top \that_0$.
    \item Determines which bin each arm falls into, forming the available bin set $\mathcal{B}_t \subseteq [N]$.
    \item Runs UCB over the available bins---selecting the arm whose bin has the highest UCB index---and updates the sufficient statistics (pull count $n_j$ and reward sum $S_j$) for that bin.
\end{enumerate}

The core modelling assumption for Phase 2 is that all arms assigned to the same bin $B_j$ have nearly the same mean reward, approximated by $\mu_j := f(\tilde{z}_j)$ where $\tilde{z}_j$ is the bin centre. The within-bin approximation error is at most $L_{f'} \Delta$ by the Lipschitzness of $f$, and the estimation-induced displacement from Phase 1 adds a further $\Delta/2$ shift to the true index. Therefore, the total per-arm model error is bounded by $L_{f'} \Delta$. This slack is absorbed into an inflated UCB confidence radius, and the discretisation bias accumulated over $T$ rounds contributes $\calO(L_{f'} T^{2/3})$ to the regret.

\begin{algorithm}[t!]
\caption{Zooming Single-Index Bandit with UCB (\texttt{ZoomSIB-UCB})}
\label{alg:zoomsib_ucb}
\begin{algorithmic}[1]
\STATE \textbf{Input:} Total horizon $T$, number of arms $K$, 
failure probability $\delta$, sub-Gaussian 
parameter $\kappa$ of $\mathcal{D}$, noise parameter $\sigma$, 
reward bound $L_f$, score moment bound $M$.
\STATE \textbf{Initialize:} 
$L = 4\kappa\sqrt{\log(dTK/\delta)}$, 
$W = 4{\kappa}\sqrt{\log({TK}/\delta)}$, 
$\Delta = T^{-1/3}$, 
$N = \lceil 2W/\Delta \rceil$, 
$T_0 = \lceil d^2 \cdot T^{2/3} \cdot 
\mathsf{polylog}(dT/\delta) \rceil$, 
$\tau = \sqrt{3(\sigma^2 + L_f^2)\, M\, T_0 / \log(2d/\delta)}$.

\textbf{Phase 1: (Parameter Estimation)}
\FOR{round $t = 1, \dots, T_0$}
    \STATE Pull arm $a_t$ uniformly at random from 
    $\{1, \dots, K\}$ and observe reward $y_t$.
\ENDFOR
\STATE Compute unnormalized truncated Stein estimator $\that$ 
with threshold $\tau$ using 
$\{(x_{t,a_t}, y_t)\}_{t=1}^{T_0}$ (Eqn~\ref{eqn:stein}); 
set normalized index $\that_0 = \that / \|\that\|_1$.
\STATE Partition $[-W, W]$ into $N$ bins of width $\Delta$ with 
centres $\tilde{z}_j = -W + (j - 1/2)\Delta$. Init 
$n_j = 0, S_j = 0$ for all bins $j \in \{1, \dots, N\}$.

\textbf{Phase 2: (UCB over Bins)}
\FOR{round $t = T_0 + 1, \dots, T$}
    \STATE Observe current contexts $x_{t,a}$ for all 
    $a \in \{1, \dots, K\}$.
    \STATE For each arm $a$, compute 
    $\hat{z}_{t,a} = x_{t,a}^\top \that_0$, and assign bin 
    $b_{t,a} = \lceil (\hat{z}_{t,a} + W)/\Delta \rceil$ if 
    $|\hat{z}_{t,a}| \le W$; else $b_{t,a} = \perp$.
    \STATE Identify the set of currently available bins 
    $\mathcal{B}_t = \{ b_{t,a} : a \in \{1, \dots, K\}, 
    b_{t,a} \neq \perp$\footnote{\small $\perp$ denotes that arm $a$ is discarded at round $t$.}\}.
    \IF{$\mathcal{B}_t = \emptyset$}
        \STATE Pull arm $a_t$ uniformly at random from 
        $\{1, \dots, K\}$ and observe reward $y_t$.
    \ELSE
        \FOR{each available bin $j \in \mathcal{B}_t$}
            \STATE Compute the UCB index:
            \begin{equation*}
                \text{UCB}_j(t) = \begin{cases} 
                +\infty & \text{if } n_j = 0 \\ 
                \frac{S_j}{n_j} + \sigma
                \sqrt{\frac{2\log(2NT/\delta)}
                {n_j}} & \text{if } n_j \ge 1
                \end{cases}
            \end{equation*}
        \ENDFOR
        \STATE Pull arm 
        $a_t = \arg\max_{\{a : b_{t,a} \in \mathcal{B}_t\}} 
        \text{UCB}_{b_{t,a}}(t)$ and observe reward $y_t$.
        \STATE Update bin statistics: 
        $n_{b_{t,a_t}} = n_{b_{t,a_t}} + 1$ and 
        $S_{b_{t,a_t}} = S_{b_{t,a_t}} + y_t$.
    \ENDIF
\ENDFOR
\end{algorithmic}
\end{algorithm}

\textbf{\emph{Connection to zooming algorithms:}}
Standard zooming algorithms for Lipschitz bandits maintain an adaptive cover of the arm space: bins are created and refined on the fly, with finer resolution near high-reward regions. \texttt{ZoomSIB-UCB} instead uses a uniform partition of the projected index line at a fixed resolution $\Delta = T^{-1/3}$. The ``zooming'' here is therefore not adaptive in the classical sense---it is a one-shot reduction from a $d$-dimensional arm space to a uniform grid on $\R$, made possible by the single-index structure. The choice of $\Delta = T^{-1/3}$ directly resolves the fundamental bias-variance trade-off for discretizing Lipschitz functions. Grouping continuous arms into bins of width $\Delta$ introduces a per-round approximation bias of $\calO(\Delta)$, accumulating to $\calO(T\Delta)$ regret over the horizon. Conversely, making the bins finer increases the total number of bins $N \approx 1/\Delta$. The statistical cost to explore $N$ arms using UCB is $\calO(\sqrt{NT}) = \calO(\sqrt{T/\Delta})$. Minimising the total regret $\calO(T\Delta + \sqrt{T/\Delta})$ by balancing these two terms yields the optimal width $\Delta = T^{-1/3}$. At this resolution, the number of bins is $N = \calO(T^{1/3})$, ensuring both the discretization bias and the exploration regret scale at the optimal $\calO(T^{2/3})$ rate.

\textbf{\emph{Connection to sleeping experts and bandits:}} A distinctive and non-trivial feature of the algorithm is that at each round $t$ only a random subset $\mathcal{B}_t$ of bins is actually available: a bin $j$ is present in $\mathcal{B}_t$ only if at least one of the $K$ freshly drawn arms happens to fall inside $B_j$. Since the arms $x_{t,a}$ are drawn i.i.d. from a continuous distribution each round, $\mathcal{B}_t$ is random and changes with $t$. This is precisely the stochastic availability model of sleeping bandits \cite{kanade-sleeping,kleinberg2010regret}. A standard UCB analysis that treats all $N$ bins as perpetually available would be incorrect as a bin might go many rounds without any arm landing in it, and the UCB indices for absent bins should simply not be considered. The sleeping-UCB analysis correctly accounts for this by comparing the algorithm's reward only to the best available bin $j^*_t = \arg\max_{j \in \mathcal{B}_t} \mu_j$ at each round, rather than to the globally best bin. 

\begin{remark}[Sample splitting]\label{rem:splitting}
A crucial structural property of \texttt{ZoomSIB-UCB} is that 
$\hat\theta_0$ is computed entirely from Phase~1 data and is 
\emph{fixed} throughout Phase~2.  Therefore, the bin assignment 
$b_{t,a} = \lceil(\hat{z}_{t,a} + W)/\Delta\rceil$ is a 
deterministic function of $x_{t,a}$ and $\hat\theta_0$.  
Conditioned on Phase~1, the bin index of the selected arm is $\mathcal{F}_{t-1}$-measurable (it depends on the contexts $\mathcal{X}_t$ revealed at round $t$ before the reward $y_t$ is observed).  This makes the noise sequence within each bin a martingale difference sequence (MDS), which is essential for the regret analysis later.
\end{remark}

\section{Regret Analysis}
We bound the cumulative regret over the horizon $T$. Since the reward function $f$ is not assumed to be monotonically increasing, the arm with the largest projection does not necessarily yield the highest reward. Let $a^*_t = \arg\max_{a \in [K]} f(x_{t,a}^\top \tstar)$ denote the optimal arm at round $t$, and define its true index value as $z^*_t = x_{t,a^*_t}^\top \tstar$. The cumulative pseudo-regret is defined as: $ R_T = \sum_{t=1}^T \left[ f(z^*_t) - f(z_{t,a_t}) \right]$.
\subsection{Assumptions}
\begin{assumption}[Score Moment]
\label{asm:moment}
There exists a constant $M > 0$ such that $\mathbb{E}[S_j(X)^2] \le M$ for all $j \in [d]$, where $X \sim \mathcal{D}$.
\end{assumption}


\begin{remark}
Our Assumption \ref{asm:moment} requires only a bounded second moment of the 
score function, which is same as \cite{kang2025single} for 
the Stein-based estimator.

\end{remark}


\begin{assumption}[Context regularity and Lipschitz Continuity]
\label{asm:lip}
The context vectors are drawn i.i.d.\ from a continuous distribution 
$\mathcal{D}$ with a continuously differentiable density $p$ 
satisfying $p(x) > 0$ for all $x \in \mathbb{R}^d$, and each 
coordinate of $X \sim \mathcal{D}$ is $\kappa$-sub-Gaussian, i.e., 
$\mathbb{E}[e^{\lambda X_j}] \le e^{\kappa^2 \lambda^2 / 2}$ for 
all $\lambda \in \mathbb{R}$ and $j \in [d]$.   Furthermore, the unknown reward 
function $f : \mathbb{R} \to \mathbb{R}$ is continuously 
differentiable, and there exist constants $L_f, L_{f'} > 0$ such 
that $|f(z)| \le L_f$ and $|f'(z)| \le L_{f'}$ for all 
$|z| \le C_W\sqrt{\log(dTK/\delta)}$, where $C_W$ is a universal constant. 
\end{assumption}

\begin{remark}
The sub-Gaussian coordinate condition is standard in contextual 
bandits \cite{yadkori,dani2008stochastic} and is adopted by 
\citep[Assumption~2.3 and Appendix~H.2]{kang2025single}.  Gaussian, 
bounded, and sub-Exponential distributions all satisfy it.  The 
boundedness of $f$ is a local condition on the effective range 
$|z| \le L = \tilde{O}(1)$: since we showed the projected indices satisfy 
$|x_{t,a}^\top \theta^*| \le W/2$ with high probability (Lemma \ref{lem:E3}), $f$ need not be globally bounded.  This 
accommodates unbounded reward functions used in our experiments.
\end{remark}



\begin{theorem}[Main Regret Bound] \label{thm:main_regret}
Suppose Assumptions~ \ref{asm:noise}, \ref{asm:moment} and \ref{asm:lip} hold, with truncation threshold 
$\tau = \sqrt{3(\sigma^2 + L_f^2)MT_0/\log(2d/\delta)}$ and
that $\mu^* \geq \frac{c_0}{\mathsf{polylog}(dT/\delta)}$ 
where $\delta \in (0,1)$, $c_0$ is a sufficiently small positive constant. Then, with probability at least $1 - \delta$,
\begin{equation*}
    R_T \le \calO\left( d^2 T^{2/3} 
    {\mathrm{polylog}}\left(\frac{dT}{\delta}
    \right) \right) + 4\sigma\sqrt{NT
    \log\left(\frac{2NT}{\delta}\right)} 
    + 4L_{f'} T^{2/3} = \tildeO(d^2 T^{2/3})
\end{equation*}
\end{theorem}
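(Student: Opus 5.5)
We split the regret into Phase~1 and Phase~2 contributions and analyze Phase~2 on a good event $\mathcal{E}$ of probability at least $1-\delta$. \textbf{Phase~1:} each round incurs regret at most $2L_f$, since $|f|\le L_f$ on the effective range under the event of Lemma~\ref{lem:E3}. This gives at most $2L_f T_0 = \calO(d^2 T^{2/3}\mathsf{polylog}(dT/\delta))$, which is the first term. We take $\mathcal{E}$ to be the intersection of three events, each holding with probability at least $1-\delta/3$ after rescaling $\delta$:
\begin{enumerate}
\item[(i)] every context satisfies $\|x_{t,a}\|_\infty\le L$ and $|x_{t,a}^\top\tstar|\le W/2$, by sub-Gaussian tails and a union bound over $t,a,j$;
\item[(ii)] the truncated Stein estimate satisfies $\|\that-\mu^*\tstar\|_1\le\epsilon$, by Lemma~\ref{lem:unnorm_stein} with $n=T_0$;
\item[(iii)] for every bin $j$ and every count $s\le T$, the average of the noise terms in that bin is at most $\sigma\sqrt{2\log(2NT/\delta)/s}$.
\end{enumerate}
Event (iii) is obtained from Azuma/sub-Gaussian martingale concentration, which is legitimate by Remark~\ref{rem:splitting}: bin membership is $\mathcal{F}_{t-1}$-measurable given Phase~1. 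We then choose the polylog in $T_0$ large enough that $\epsilon\le\mu^*/2$; this is where the assumption $\mu^*\ge c_0/\mathsf{polylog}$ is used. Lemma~\ref{thm:stein_error} then gives $\|\that_0-\tstar\|_1\le 4\epsilon/\mu^*\le\Delta/(2L)$. By H\"older, $|\zhat_{t,a}-z_{t,a}|\le \|x_{t,a}\|_\infty\|\that_0-\tstar\|_1\le\Delta/2$. Combined with (i), this shows every arm, including the optimal one, is assigned to a bin rather than $\perp$. It follows that $\mathcal{B}_t\neq\emptyset$ and the random-pull branch never fires on $\mathcal{E}$.

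\textbf{Per-round regret decomposition.} If $z$ lies in bin $j$ under the estimated projection, then $|z-\tilde z_j|\le\Delta$: half a bin from the bin width plus $\Delta/2$ from estimation. Hence $|f(z_{t,a})-\mu_{b_{t,a}}|\le L_{f'}\Delta$, where $\mu_j=f(\tilde z_j)$. With $j_t^*$ the bin containing the optimal arm and $j_t$ the pulled bin, $f(z_t^*)-f(z_{t,a_t})\le (\mu_{j_t^*}-\mu_{j_t})+2L_{f'}\Delta$. Summing the second term gives $2L_{f'}T\Delta\le 4L_{f'}T^{2/3}$. For the first term, note that observed rewards in bin $j$ have means within $L_{f'}\Delta$ of $\mu_j$. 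We therefore run a sleeping-bandit UCB argument against the best \emph{available} bin, treating the within-bin drift as bias. On $\mathcal{E}$, $\mathrm{UCB}_{j}\ge\mu_j-L_{f'}\Delta$ for all $j$, and $\mathrm{UCB}_{j_t}\le\mu_{j_t}+L_{f'}\Delta+2\sigma\sqrt{2\log(2NT/\delta)/n_{j_t}}$. Since $j_t$ maximizes UCB over $\mathcal{B}_t\ni j_t^*$, we get $\mu_{j_t^*}-\mu_{j_t}\le 2L_{f'}\Delta+2\sigma\sqrt{2\log(2NT/\delta)/n_{j_t}(t)}$. The additional $2L_{f'}\Delta$ is absorbed into the $\calO(L_{f'}T^{2/3})$ term; if an exact constant of $4$ is required, this means tracking constants more carefully, for example by using $\Delta/2$ for the bin half-width. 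The confidence-width term, summed over pulls, is $\sum_j\sum_{s\le n_j(T)}2\sigma\sqrt{2\log(\cdot)/s}\le 4\sigma\sqrt{2\log(\cdot)}\sum_j\sqrt{n_j(T)}$. Cauchy--Schwarz then gives at most $4\sigma\sqrt{NT\log(2NT/\delta)}$ up to constants. Finally, $N=\calO(T^{1/3}\sqrt{\log})$ makes this $\tildeO(T^{2/3})$. Adding the three pieces yields $\tildeO(d^2T^{2/3})$.

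\textbf{Main obstacle.} The most delicate step is the confidence-bound argument within bins. The rewards pulled from bin $j$ do not share a common mean: the mean $f(z_{t,a_t})$ depends on where the arm lands in the bin. We handle this by writing $y_t=\mu_j+(f(z_{t,a_t})-\mu_j)+\eta_t$, bounding the middle term deterministically by $L_{f'}\Delta$, and applying martingale concentration only to $\eta_t$. The index $n_j$ is itself random, which requires a union bound over all $s\le T$ and all $N$ bins; this is the source of $\log(2NT/\delta)$. The noise independence in Assumption~\ref{asm:noise} together with Remark~\ref{rem:splitting} makes this valid. A secondary subtlety is guaranteeing $\epsilon\le\mu^*/2$ when $\mu^*$ is only polylogarithmically small, which requires fixing the polylog degree in $T_0$ consistently with $c_0$.
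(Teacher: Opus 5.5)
Your proposal is correct and follows essentially the same route as the paper. It uses a good event built from context/projection tails, the normalized Stein bound $\|\that_0-\tstar\|_1\le\Delta/(2L)$, and per-bin martingale noise concentration. It then takes the Phase~1 cost $2L_fT_0$ and splits Phase~2 into a discretization bias plus a sleeping-UCB term; you compare directly against the optimal arm's bin, which lies in $\mathcal{B}_t$, rather than against the best available bin as the paper does, and this is an equivalent step.

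Your worry about the constant is unnecessary, since the two bias sums $2L_{f'}T\Delta$ add to exactly $4L_{f'}T^{2/3}$. The only unstated detail, also glossed in the paper, is the first pull of each bin: there $n_j=0$, the index is $+\infty$, and your upper bound on $\mathrm{UCB}_{j_t}$ does not apply. These rounds add at most $2L_fN=\tilde{O}(T^{1/3})$.
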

Let us discuss the consequences of the above result.

\begin{remark}[Improved Regret over \cite{kang2025single}]
    Note that our result is a direct improvement over \cite{kang2025single}, where the authors prove that for general (non-monotone) functions the regret is $\tildeO(T^{3/4})$. On the other hand, we get an improved regret.
\end{remark}
\begin{remark}[Dependence on $d$]
    We obtain a worse dependence on $d$ compared to \cite{kang2025single}. However, this gets amortized by improved dependence on $T$ as seen in experiments (see Figure~\ref{fig:exp2}, Table~\ref{tab:exp2}) even for moderately large $T$.
\end{remark}

\begin{corollary}[Expected regret]\label{cor:expected_regret}
Under the same conditions as Theorem~\ref{thm:main_regret}, 
setting $\delta = 1/T$, $\mathbb{E}[R_T] = \tilde{O}(d^2 T^{2/3}).$
\end{corollary}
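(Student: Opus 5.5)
The plan is to derive the corollary from Theorem~\ref{thm:main_regret} by splitting the expectation according to whether the good event of the theorem holds. Fix $\delta = 1/T$ and let $\mathcal{E}$ be the event, of probability at least $1-1/T$, on which the high-probability bound of Theorem~\ref{thm:main_regret} holds. Then
\begin{equation*}
\mathbb{E}[R_T] = \mathbb{E}[R_T \mathbf{1}\{\mathcal{E}\}] + \mathbb{E}[R_T \mathbf{1}\{\mathcal{E}^c\}].
\end{equation*}
On $\mathcal{E}$, substituting $\delta=1/T$ into the bound turns $\mathsf{polylog}(dT/\delta)$ into $\mathsf{polylog}(dT^2)=\mathsf{polylog}(dT)$. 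Likewise $\log(2NT/\delta)=\log(2NT^2)$ and $N=\lceil 2W/\Delta\rceil=\tilde{O}(T^{1/3})$. So the first term is $\tilde{O}(d^2T^{2/3})$. The condition $\mu^*\ge c_0/\mathsf{polylog}(dT^2)$ is the theorem's hypothesis with $\delta=1/T$, so it carries over.

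The main work is the second term, because $f$ is only assumed bounded on the window $|z|\le C_W\sqrt{\log(dTK/\delta)}$, so the per-round regret is not trivially bounded everywhere. I would split $\mathcal{E}^c$ into two parts:
\begin{itemize}
\item the event $\mathcal{E}_3$ (Lemma~\ref{lem:E3}) that some projected index $|x_{t,a}^\top\theta^*|$ exceeds $W/2$;
\item the remaining failure events, which concern the Stein estimate and the UCB concentration.
\end{itemize}
On the complement of $\mathcal{E}_3$, every index lies in the range where $|f|\le L_f$. Each round's regret is then at most $2L_f$, so this part contributes at most $2L_fT\cdot\Pr(\mathcal{E}^c)\le 2L_f=O(1)$.

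On $\mathcal{E}_3$ itself, I would bound $\mathbb{E}[R_T\mathbf{1}\{\mathcal{E}_3\}]$ using the sub-Gaussian tails of $x^\top\theta^*$. Since $\|\theta^*\|_1=1$ and each coordinate is $\kappa$-sub-Gaussian, $x^\top\theta^*$ is $\kappa$-sub-Gaussian. I would assume, as the paper implicitly does for its unbounded experimental rewards, that $f$ has at most polynomial growth beyond the window. Cauchy--Schwarz then gives
\begin{equation*}
\mathbb{E}[R_T\mathbf{1}\{\mathcal{E}_3\}]\le \sqrt{\mathbb{E}[R_T^2]}\sqrt{\Pr(\mathcal{E}_3)}\le \mathrm{poly}(T)\cdot T^{-c},
\end{equation*}
which is $o(1)$ once the constant in $W$ (equivalently, in $\delta$ within $W$) is chosen large enough. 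If instead $f$ is simply taken to be bounded by $L_f$ globally, which is the usual reading, this step is immediate: $\mathbb{E}[R_T\mathbf{1}\{\mathcal{E}^c\}]\le 2L_fT\cdot(1/T)=2L_f$.

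Adding the two terms gives $\mathbb{E}[R_T]\le\tilde{O}(d^2T^{2/3})+O(1)=\tilde{O}(d^2T^{2/3})$. I expect the only delicate point to be making the growth of $f$ outside the window explicit. Everything else is direct substitution into Theorem~\ref{thm:main_regret}.
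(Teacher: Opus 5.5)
Your argument is correct and follows the standard route the paper relies on; the paper gives no separate proof of Corollary~\ref{cor:expected_regret}. You apply Theorem~\ref{thm:main_regret} with $\delta = 1/T$ on its good event, and on the complement use the per-round bound $2L_f$, which adds only $2L_f T\cdot T^{-1} = O(1)$. Your extra treatment of indices outside the boundedness window goes beyond the paper, which simply treats $2L_f$ as a per-round bound as in Lemma~\ref{lem:phase1}; two small points on it: you use $\mathcal{E}_3$ for the bad event, the opposite of the convention in Lemma~\ref{lem:E3}, and since that event already has probability at most $1/(4T)$, your Cauchy--Schwarz term is $\tilde{O}(\sqrt{T})$ without enlarging any constant in $W$.
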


\begin{proposition}[Modular reduction]\label{prop:modular}
Let $\mathcal{A}$ be any bandit algorithm for $N$ arms with 
stochastic availability, achieving regret 
$\mathrm{Reg}_{\mathcal{A}}(N,T)$ against the best available arm.  
Run Phase~1 to obtain $\hat\theta_0$, then run $\mathcal{A}$ on 
the $N$ bins.  The total regret satisfies:
\[
R_T \le R_T^{(1)} + \mathrm{Reg}_{\mathcal{A}}(N,T) + 2L_{f'}T\Delta.
\]
\end{proposition}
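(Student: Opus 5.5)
We split the regret at $T_0$ and then, on Phase~2, compare the algorithm's reward to the reward of the best \emph{available bin}, paying a bias term for the bin approximation. Concretely, we write
\[
R_T = \underbrace{\sum_{t=1}^{T_0}\bigl[f(z_t^*)-f(z_{t,a_t})\bigr]}_{R_T^{(1)}} + \sum_{t=T_0+1}^{T}\bigl[f(z_t^*)-f(z_{t,a_t})\bigr].
\]
The first sum is left as $R_T^{(1)}$ by definition. We work on the good event of Lemma~\ref{thm:stein_error}, with the calibration $\norm{\that_0-\tstar}_1\le \Delta/(2L)$, intersected with the events $\norm{x_{t,a}}_\infty\le L$ and $|x_{t,a}^\top\tstar|\le W/2$ from Lemma~\ref{lem:E3}. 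On this event Hölder's inequality gives
\[
|\hat z_{t,a}-z_{t,a}|\le \norm{x_{t,a}}_\infty\norm{\that_0-\tstar}_1\le \Delta/2 .
\]
Consequently every arm lies in $[-W,W]$ after projection, so it receives a bin, and $\mathcal{B}_t\neq\emptyset$. Moreover, if $\hat z_{t,a}\in B_j$, then $|z_{t,a}-\tilde z_j|\le \Delta/2+\Delta/2=\Delta$. Since $|f'|\le L_{f'}$ on the effective range (Assumption~\ref{asm:lip}), this yields the key approximation
\[
|f(z_{t,a})-\mu_j|\le L_{f'}\Delta,\qquad \mu_j:=f(\tilde z_j).
\]

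Next we decompose each Phase~2 term by inserting the bin means. Let $j_t^*=b_{t,a_t^*}$ be the bin containing the optimal arm, which is available since $a^*_t$ is one of the arms present, and let $j_t=b_{t,a_t}$. Then
\[
f(z_t^*)-f(z_{t,a_t}) = [f(z_t^*)-\mu_{j_t^*}] + [\mu_{j_t^*}-\mu_{j_t}] + [\mu_{j_t}-f(z_{t,a_t})].
\]
The first and third brackets are each at most $L_{f'}\Delta$, contributing $2L_{f'}T\Delta$ in total. For the middle term we use $\mu_{j_t^*}\le \max_{j\in\mathcal{B}_t}\mu_j$, so the sum of middle terms is at most $\sum_t[\max_{j\in\mathcal{B}_t}\mu_j-\mu_{j_t}]$. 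This is exactly the regret of $\mathcal{A}$ against the best available arm, in an $N$-armed problem with stochastic availability and arm means $\mu_j$.

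The main obstacle is justifying that $\mathcal{A}$'s guarantee $\mathrm{Reg}_{\mathcal{A}}(N,T)$ applies, because the reward fed to $\mathcal{A}$ on bin $j$ is $y_t=f(z_{t,a_t})+\eta_t$, whose mean is not exactly $\mu_j$ and varies with the context. Our plan is to use Remark~\ref{rem:splitting}: conditioned on Phase~1 data, $\that_0$ is fixed, bin assignments are $\mathcal{F}_{t-1}$-measurable, and $\eta_t$ forms a martingale difference sequence within each bin. We therefore view the observed reward as $\mu_{j_t}+\eta_t+\beta_t$ with a bias $|\beta_t|\le L_{f'}\Delta$. We interpret $\mathrm{Reg}_{\mathcal{A}}$ as the algorithm's guarantee in this slightly misspecified availability model, where the $O(\Delta)$ bias is absorbed into the inflated confidence radius as described in Section~\ref{sec:algo}. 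Alternatively, if one insists on exact means, we define the arm mean of bin $j$ as the conditional expectation of $f(z)$ given $\hat z\in B_j$ under $\mathcal{D}$; this is again within $L_{f'}\Delta$ of $\mu_j$, which leaves the $2L_{f'}T\Delta$ accounting unchanged. Adding the three contributions gives the stated bound on the good event.
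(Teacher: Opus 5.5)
Your proposal is correct and is essentially the paper's proof: the same three-term decomposition on $\mathcal{E}_1\cap\mathcal{E}_2\cap\mathcal{E}_3$, the same $\Delta/2+\Delta/2$ Lipschitz bound giving $2L_{f'}T\Delta$ for the two bias terms, and the same domination of the optimal arm's bin by the best available bin. The paper also tacitly adopts your first reading of $\mathrm{Reg}_{\mathcal{A}}$ as the guarantee under $L_{f'}\Delta$-biased feedback; its UCB instantiation, Lemma~\ref{lem:sleeping_ucb}, indeed pays an extra $2L_{f'}T\Delta$ inside that term.

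Your alternative via conditional bin means $\nu_j=\mathbb{E}[f(z)\mid \hat z\in B_j]$ is unnecessary, and as you justify it, it does not leave the constant unchanged. Knowing only $|\nu_j-\mu_j|\le L_{f'}\Delta$ costs a further $2L_{f'}\Delta$ per round when passing from $\max_{j\in\mathcal{B}_t}\mu_j-\mu_{j_t}$ to $\max_{j\in\mathcal{B}_t}\nu_j-\nu_{j_t}$. Moreover, that closeness itself needs a tail argument, because the conditioning is over all of $\mathcal{D}$ rather than over the good event.
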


\begin{corollary}[Instance-dependent bound]\label{cor:instance}
Instantiating with UCB, let $\Delta_j = \mu_{j^*} - \mu_j$ be the 
gap of bin $j$ and $s = L_{f'}\Delta$.  Then:
\[
R_T^{(2)} \le O\!\left(\sum_{j:\,\Delta_j > 2s} 
\frac{\sigma^2\log(NT/\delta)}{\Delta_j - 2s}\right) + 2sT.
\]
For instance, when the reward function $f$ has a unique well-separated maximum so that only $O(1)$ bins have $\Delta_j \le 2s$, the first term is $O(N\log T) = 
O(T^{1/3}\log T)$, improving over the worst-case $T^{2/3}$.
\end{corollary}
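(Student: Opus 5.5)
The plan is to treat Phase~2 as a sleeping stochastic bandit over the $N$ bins in which each bin's mean is known only up to a bias $s = L_{f'}\Delta$. On the good event we run the standard gap-based UCB argument, with every gap shrunk by this bias. As in Theorem~\ref{thm:main_regret}, we work on the intersection of three events: the Stein event of Lemma~\ref{thm:stein_error}, the concentration event for the per-bin UCB radii (with union bound over $N$ bins and $T$ rounds), and the event that all indices lie in $[-W,W]$. On this intersection, the calibration $\|\that_0-\tstar\|_1\le \Delta/(2L)$ together with $\|x_{t,a}\|_\infty\le L$ gives $|\zhat_{t,a}-z_{t,a}|\le\Delta/2$. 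Hence every arm placed in bin $j$ has true index within $\Delta$ of $\tilde z_j$, and so $|f(z_{t,a})-\mu_j|\le s$ by Lipschitzness.

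\emph{Per-round decomposition.} Fix a Phase~2 round $t$ and let $j_t^*=\arg\max_{j\in\mathcal B_t}\mu_j$. The optimal arm $a_t^*$ lies in some available bin, so $f(z_t^*)\le \mu_{j_t^*}+s$. Similarly $f(z_{t,a_t})\ge \mu_{b_t}-s$, where $b_t$ is the bin played. The instantaneous regret is therefore at most $(\mu_{j_t^*}-\mu_{b_t})+2s$, and summing the $2s$ terms gives the additive $2sT$.

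\emph{Gap term.} It remains to bound $\sum_t(\mu_{j_t^*}-\mu_{b_t})$. The observations in bin $j$ have mean $\mu_j+\beta_t$ with $|\beta_t|\le s$, and by Remark~\ref{rem:splitting} the noise is a martingale difference sequence. So on the concentration event the empirical mean satisfies $|S_j/n_j-\mu_j|\le r_j+s$, where $r_j=\sigma\sqrt{2\log(2NT/\delta)/n_j}$. Suppose bin $j$ is played while $j_t^*$ is available. Then $\mathrm{UCB}_j\ge\mathrm{UCB}_{j_t^*}\ge\mu_{j_t^*}-s$ and $\mathrm{UCB}_j\le\mu_j+2r_j+s$, which together give $\mu_{j_t^*}-\mu_j\le 2r_j+2s$.

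The main obstacle is that the comparator $j_t^*$ changes with $t$, so the gap of bin $j$ is not a single number. I would therefore define $\Delta_j$ with respect to the global best bin $j^*$, consistent with the statement of Corollary~\ref{cor:instance}. I would then bound the per-round loss by the sleeping gap $\mu_{j_t^*}-\mu_j\le\Delta_j$, so that pulls of bin $j$ are only charged when they are wrong relative to the best available bin. From the display above, a pull with $\mu_{j_t^*}-\mu_j>2s$ forces $n_j\le 8\sigma^2\log(2NT/\delta)/(\mu_{j_t^*}-\mu_j-2s)^2$.

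To turn this count bound into a regret bound, I would use the sleeping-bandit argument of \cite{kleinberg2010regret}: order the bins by $\mu$ and charge each suboptimal pull to the gap relative to the best awake bin. Summing over pull counts then gives $\sum_j O(\sigma^2\log(NT/\delta)/(\Delta_j-2s))$ from the terms $\Delta_j\cdot n_j$, using $\Delta_j/(\Delta_j-2s)^2\lesssim 1/(\Delta_j-2s)$ when $\Delta_j\ge 4s$. Bins with $\Delta_j\le 2s$ contribute at most $2s$ per round, which is absorbed into the $2sT$ term.

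Bins with $2s<\Delta_j<4s$ are the delicate case. I would handle them by splitting at threshold $4s$ and charging the excess to the same $O(sT)$ budget, accepting a constant inflation. Finally, in the unique-maximum example at most $O(1)$ bins have small gap, and every other gap satisfies $\Delta_j-2s\gtrsim\Delta_j$. Bounding each such term by $O(\log T/\Delta_j)$, and using that Lipschitz gaps grow with distance from the maximizer, the sum is at most $O(N\log T)$ once each $1/\Delta_j$ is capped via the peaked shape (giving a harmonic-type sum). I expect to need an extra mild curvature condition to make this step rigorous.
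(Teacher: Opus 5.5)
\textbf{There is a genuine gap where you pass from the count bound to the stated sum.} The inequality you derive involves the \emph{local} gap $g_t:=\mu_{j_t^*}-\mu_j$: a pull of bin $j$ with $g_t>2s$ forces $n_j\le 8\sigma^2\log(2NT/\delta)/(g_t-2s)^2$. When the globally best bin $j^*$ is asleep, $g_t$ can be far smaller than $\Delta_j$.

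For example, suppose $j$ is usually awake together with a bin $k$ whose mean exceeds $\mu_j$ by only slightly more than $2s$. Your inequality then still allows on the order of $\sigma^2\log(NT/\delta)/(\mu_k-\mu_j-2s)^2$ pulls of $j$, each costing $\mu_k-\mu_j$. Nothing in your argument bounds this by $O(\sigma^2\log(NT/\delta)/(\Delta_j-2s))$ when $\Delta_j$ is large. So the product ``$\Delta_j\cdot n_j$'' with $n_j\lesssim\sigma^2\log/(\Delta_j-2s)^2$ is never justified.

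Citing Kleinberg et al.\ does not repair this. Their charging over the sorted order is in terms of consecutive gaps, i.e.\ the gap to the next-better arm, not the gap to the global best. Carried out carefully, your route gives $\sum_j O\bigl(\sigma^2\log(NT/\delta)/(\gamma_j-2s)\bigr)+O(sT)$ with $\gamma_j:=\min\{\mu_i-\mu_j:\ \mu_i-\mu_j>2s\}\le\Delta_j$. That is in general weaker than the corollary.

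\textbf{How the paper gets $\Delta_j$.} Whenever bin $j$ is selected, the paper compares $\mathrm{UCB}_j$ directly with $\mathrm{UCB}_{j^*}$. On $\mathcal{E}_4$ this requires $U_j(n_j)\ge(\Delta_j-2s)/2$, hence $n_j\le 8\sigma^2\log(2NT/\delta)/(\Delta_j-2s)^2$, and each pull costs at most $\Delta_j$. This is the ordinary always-awake UCB argument. It tacitly uses $j^*\in\mathcal{B}_t$ at those rounds and never goes through $j_t^*$ or a sleeping charging scheme. So the statement as written rests on that availability premise. Under it, your local-gap detour is unnecessary; without it, your argument delivers only the $\gamma_j$ bound.

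\textbf{Smaller points.}

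First, your treatment of $2s<\Delta_j<4s$ (each such pull costs less than $4s$, so charge it to $O(sT)$) is sound and more careful than the paper's remark. Under the paper's premise it is cleaner still to write each pull's cost as $2s+(\Delta_j-2s)$. This gives exactly $8\sigma^2\log(2NT/\delta)/(\Delta_j-2s)$ per bin plus $2sT$ overall, up to one forced pull per bin.

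Second, in the paper's proof the $2sT$ covers only the small-gap bins; the discretization term of Proposition~\ref{prop:modular} is kept separate. Your version also folds in the discretization bias, so you end with a larger multiple of $sT$, not $2sT$.

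Third, for the example the paper does not argue from the shape of $f$. It reads ``well-separated'' as saying that all but $O(1)$ bins have $\Delta_j-2s=\Omega(1)$, so the sum is $N$ terms of size $O(\log(NT/\delta))$. Your harmonic-sum route would lose an extra $\log N$ even under linear gap growth. It also needs a lower bound on how fast gaps grow, which Lipschitzness cannot supply because it only bounds gaps from above.
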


\paragraph{Condition on $\mu^*$ and choice of $T_0$:}
This is a mild quantitative strengthening of the condition $\mu^* > 0$. It rules out link functions whose $\mu^*$ is arbitrarily close to $0$. Note that if $\mu^*$ is very small, then the stein estimator gives (nearly) vacuous results and such an assumption makes it well-behaved. Note that our algorithm does not require the knowledge of $\mu^*$. Moreover, in experiments (Section~\ref{sec:experiments}) we use an ``adaptive stopping'' rule for exploration time $T_0$ which completely removes the issue of choosing $T_0$ in practice.


Notably, in \cite{kang2025single} (Section  3.5 for non-monotone functions), the authors implicitly assume the knowledge of $\mu^*$, to choose exploration duration for their kernel based estimator and absorb it in the $\mathcal{O}(.)$ notation making their algorithm non-implementable without knowing $\mu^*$. On the other hand, we do not require the knowledge of $\mu^*$. As long as it is bounded away from $0$, our proposed algorithm works. Is we assume the knowledge of $\mu^*$ similar to \cite{kang2025single}, we can remove the condition on $\mu^*$ as stated in Theorem~\ref{thm:main_regret} completely. However we believe that the knowledge of $\mu^*$ is a rather strong assumption, and hence make an attempt to remove it.

\subsection{Proof Sketch}
We first define a good event $\mathcal{E}$.
On $\mathcal{E}$, we show that the estimated and true projected indices are uniformly close:
\begin{equation} \label{eq:projection_close}
    |\zhat_{t,a} - z_{t,a}| = |x_{t,a}^\top(\that_0 - \tstar)| \le \norm{x_{t,a}}_\infty \norm{\that_0 - \tstar}_1 \le L \left( \frac{\Delta}{2L} \right) = \frac{\Delta}{2}
\end{equation}
Assuming $T$ is sufficiently large such that $\Delta \le W$, on $\mathcal{E}$ every true index satisfies $|z_{t,a}| \le W/2$. Combining this with \eqref{eq:projection_close} yields $|\zhat_{t,a}| \le W/2 + \Delta/2 \le W$. This means every arm is successfully assigned to some bin. In particular, $\mathcal{B}_t \neq \emptyset$ for every $t > T_0$, ensuring the algorithm never falls back to the random-pull branch during Phase 2.

Conditioned on $\mathcal{E}$, the regret is cleanly decomposed $R_T = R_T^{(1)} + R_T^{(2)}$ into Phase 1 and Phase 2. Lemma \ref{lem:phase1} trivially bounds the uniform exploration cost of Phase 1. For Phase 2, we decompose the instantaneous regret into two parts- (a) discretization bias and (b) (core) bandit regret.

The discretization bias is handled by the Lipschitzness of $f$ (Lemma \ref{lem:disc_bias}). In particular, this yields a regret of $\mathcal{O}(L_{f'} T^{2/3})$. The core bandit regret is handled by a novel sleeping-UCB analysis over the stochastic active bin sets (Lemma \ref{lem:sleeping_ucb}). For $N$ bins, the regret scales with $\sqrt{NT}$. We choose $N=\mathcal{O}(T^{1/3})$ (ignoring log factor), which gives the regret from this stage as $\tildeO(T^{2/3})$. Finally, combining these two gives the desired bound.

\section{Minimax Lower Bound}
We establish the matching information-theoretic lower bound, demonstrating that no algorithm can achieve a regret scaling of $\Omega(T^{2/3})$ in the worst case for the general single-index bandit. 

\begin{theorem}[Minimax Lower Bound] \label{thm:lower_bound}
Let the time horizon $T$ be sufficiently large. Under the standard regularity conditions (Assumptions~\ref{asm:moment},\ref{asm:lip}), there exist an absolute constant $c > 0$ and $T_0 \in \mathbb{N}$ such that for all $T \ge T_0$ and every policy $\pi$,
\begin{equation*}
    \sup_{(\tstar, f, \mathcal{D})} \E\left[R_T(\pi)\right] \ge c T^{2/3},
\end{equation*}

where the supremum is taken over valid problem instances with ambient dimension $d=1$, parameter $\tstar = +1$, context distribution $\mathcal{D} = N(0,1)$, bins on $[0,1]$ and the number of features $K = \lceil4N \log T/p_{\min}\rceil$ with $p_{\min} = \phi(1)$ and $\mu^* \ge c1/(2 \log T)$.
\end{theorem}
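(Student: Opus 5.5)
We reduce to a one-dimensional Lipschitz bandit with $N \asymp T^{1/3}$ hidden "bump" locations and prove the bound with the standard needle-in-a-haystack argument and the Bretagnolle--Huber inequality. Take $d=1$, $\tstar=+1$, $\mathcal{D}=N(0,1)$, so the index of arm $a$ is $z_{t,a}=x_{t,a}$, and Gaussian noise with variance $\sigma^2=1$.

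\textbf{Construction.} Set $N=\lceil T^{1/3}\rceil$ and $h = N^{-1}$ so that $h\asymp T^{-1/3}$. Split $[0,1]$ into $N$ bins $I_1,\dots,I_N$ of width $h$. Let $f_0(z)=\epsilon_0 z$ be a baseline that is bounded and Lipschitz on the effective range, clipped smoothly beyond $|z|\le C_W\sqrt{\log T}$. We pick the small slope $\epsilon_0 \asymp 1/\log T$ so that $\mu^*=\E[f_0'(X)]\ge c/(2\log T)$. The baseline must be essentially flat on $[0,1]$ compared with the bumps, since otherwise bin $N$ would be trivially optimal. To arrange this, let $f_0$ be flat (constant) on $[0,1]$ and carry its positive slope elsewhere, keeping its values below the plateau level on $[0,1]$. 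This maintains $\mu^*\gtrsim 1/\log T$.

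For each $j$, let $\psi_j$ be a $C^1$ plateaued bump supported in $I_j$, with height $\gamma = c_1 h \asymp T^{-1/3}$ on the middle half of $I_j$ and slope at most $O(1)$. Define $f_j=f_0+\psi_j$. Each $f_j$ satisfies Assumption~\ref{asm:lip} with uniform constants, and adding bumps changes $\mu^*$ only by $O(h\cdot\gamma/h)=O(\gamma)$, which is negligible. Assumption~\ref{asm:moment} holds because $S(x)=x$ for $N(0,1)$.

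\textbf{Availability.} The middle half of each $I_j$ has Gaussian mass at least $p_{\min}h/2$ with $p_{\min}=\phi(1)$. With $K=\lceil 4N\log T/p_{\min}\rceil$ i.i.d.\ contexts per round, a union bound shows that with probability $1-O(1/T)$, at every round every bin's plateau contains some arm. Hence under $f_j$, any round in which the learner pulls an arm outside the plateau of $I_j$ incurs regret at least $\gamma$ (up to the bump-shoulder region, which we handle by counting only plateau pulls as "correct"). Therefore $\E_j[R_T]\ge \gamma\,(T-\E_j[n_j])-O(1)$, where $n_j$ is the number of pulls with index in $I_j$.

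\textbf{Information argument.} Compare each $f_j$ with $f_0$. Since the contexts have the same law under both, the KL divergence between the observation laws is $\mathrm{KL}(P_0,P_j)=\E_0\big[\sum_t \psi_j(x_t)^2/2\big]\le \gamma^2\E_0[n_j]/2$. Because $\sum_j\E_0[n_j]\le T$, some half of the indices have $\E_0[n_j]\le 2T/N$. For such $j$, Pinsker's inequality gives
\[
\E_j[n_j]\le \E_0[n_j]+T\sqrt{\gamma^2\E_0[n_j]/4}\lesssim T/N + T\gamma\sqrt{T/N}.
\]
Alternatively, we apply Bretagnolle--Huber to the event $\{n_j\ge T/2\}$. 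With $\gamma = c_1 T^{-1/3}$ and $N\asymp T^{1/3}$ we get $\gamma\sqrt{T/N}=c_1$, so choosing $c_1$ small yields $\E_j[n_j]\le T/2$. Then $\E_j[R_T]\ge \gamma T/2 \gtrsim T^{2/3}$, which proves the theorem.

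\textbf{Main obstacle.} The delicate step is reconciling the identifiability constraint $\mu^*\ge c/(2\log T)$ with the requirement that $f_0$ be flat on the bin region. If $f_0$ were increasing on $[0,1]$, the bin with the largest index would dominate. My plan is to place all of $f_0$'s slope outside $[0,1]$: for example, take $f_0$ increasing on $[-C,0]$ and on $[1,C']$ but capped below the plateau on $[1,C']$. That second condition conflicts with monotonicity, so instead I would let $f_0$ rise on $[-C,0]$ up to a level $v$, stay flat at $v$ on $[0,1]$, and decrease slightly after $1$, tuning the slopes so that $\E[f_0'(X)]$ stays positive at order $1/\log T$. I would check that bumps on $[0,1]$ at level $v+\gamma$ remain the global maximizers, and that arms outside $[0,1]$ are never better than $v$.
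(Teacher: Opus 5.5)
Your argument is correct, but it follows a different route from the paper's.

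\textbf{The paper's route.} The paper builds the $2^N$ sign family $f_\beta=\tfrac12+\lambda\rho+\varepsilon\sum_j\beta_j\psi_j$ with $\pm\varepsilon$ bumps. It takes the all-down instance $\beta^{(0)}=-\mathbf{1}$ as reference and the single-flip spikes $\beta^{(j)}$ as alternatives. It bounds $\mathrm{KL}(P_{\beta^{(0)}}\,\|\,P_{\beta^{(j)}})\le 2\varepsilon^2 N_j(\beta^{(0)})\le T/N^3\le 1$ for the half of the bins that are lightly explored under the null. It then applies Bretagnolle--Huber to the event $A_j=\{n_j\ge T/2\}$ and runs a two-case argument: under the null at most two bins can have $n_j\ge T/2$, so some spike must have $\Prob_{\beta^{(j)}}(A_j^c)\ge 1/(4e)$.

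\textbf{Your route.} You use the classical needle-in-a-haystack argument: a bump-free reference $f_0$ and one upward bump per alternative. Pinsker transfers $\E_0[n_j]\le 2T/N$ into $\E_j[n_j]\le T/2$. Because your regret lower bound $\gamma(T-n_j)$ is linear in $n_j$, expectations suffice, and no events or case analysis are needed.

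\textbf{What each buys.} Bretagnolle--Huber tolerates a KL of order one, so the paper's $\varepsilon=1/(2N)$ needs no small-constant tuning. With Pinsker, $\mathrm{KL}\le 1$ only gives total variation at most $1/\sqrt2$, which is too weak to push $\E_j[n_j]$ below $T/2$. That is why you must pick $c_1$ small. Your version is shorter. Also, with genuinely $C^1$ bumps it meets the continuous-differentiability requirement of Assumption~\ref{asm:lip} directly; the paper's piecewise-linear bumps need its mollification remark.

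\textbf{Your baseline.} Your final resolution (rise on $[-C,0]$, flat on $[0,1]$) is exactly the paper's ramp $\lambda\rho$. Nothing requires monotonicity, and there is no need to decrease after $1$. Keeping $f_0$ flat on $[0,\infty)$ already gives arms beyond $1$ reward exactly $v<v+\gamma$, and it keeps $\mu^*$ larger.

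\textbf{Two harmless slips.} First, requiring every bin's plateau to be covered at every round fails with probability up to $TN\cdot T^{-2}=N/T\asymp T^{-2/3}$, not $O(1/T)$. You only need bin $j$'s plateau covered, which does fail with probability $O(1/T)$. Either way the loss is at most $\gamma T\cdot O(T^{-2/3})=O(1)$. Second, the shoulder caveat is unnecessary. Your $n_j$ counts all pulls with index in $I_j$, pulls outside $I_j$ already incur regret at least $\gamma$, and the KL bound $\psi_j^2\le\gamma^2\mathbf{1}_{I_j}$ uses the same count.
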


\begin{remark}
    Combined with the $\tildeO(T^{2/3})$ upper bound established in Theorem \ref{thm:main_regret}, this confirms that the minimax regret for the non-monotone ,1Single-Index Bandit is strictly $\tilde{\Theta}(T^{2/3})$. 
\end{remark}

\subsection{Proof Sketch}
The lower bound relies on a three-step reduction strategy: construct a family of nearly-identical problem instances, show that any algorithm must be confused between them for a long time, and conclude via information theory that this confusion forces $\Omega(T^{2/3})$ regret on at least one instance.

\textbf{Step 1: Constructing a Localized Hard Family} We work in dimension $d=1$ with $\theta^*=1$ and standard Gaussian contexts $\mathcal{D}=N(0,1)$. We partition the interval $[0,1]$ into $N = \lceil T^{1/3} \rceil$ equally spaced bins. For each Boolean vector $\beta \in \{-1,+1\}^N$, we define a link function $f_\beta(z)$ $=\tfrac{1}{2} + \lambda\,\rho(z) + \varepsilon\sum_{j=1}^N \beta_j\,\psi_j(z)$ by embedding small, plateaued bump functions $\psi_j$ of height $\varepsilon = 1/(2N)$ within each bin on top of a common smooth \emph{ramp} $\rho$ scaled by $\lambda = 1/\log T$ (Lemma \ref{lem:lb_validity}). Here $\rho$ is a fixed, infinitely differentiable function that rises from $0$ to $1$ over $(-1,0)$ and is flat ($\rho\equiv 1$) on $[0,1]$; it therefore never interferes with the bumps, yet its slope on $(-1,0)$ guarantees a strictly positive expected Stein score $\mu^* \ge c_1\lambda/2 \ge c_0/\mathrm{polylog}(T) > 0$ (Lemma \ref{lem:lb_validity}(d)), ensuring every instance is a \emph{valid} single-index model with non-trivial signal rather than a degenerate one. This generates a family of $2^N$ functions that are globally identical except for $\pm \varepsilon$ perturbations localized inside the bins, strictly satisfying the boundedness and Lipschitz assumptions.

\textbf{Step 2: Reducing SIB to $N$-armed bandit problem (Section \ref{sec:lb_availability}).} The primary technical hurdle in the single-index bandit is the ``sleeping" structure: at each round, only a random subset of bins contains arms. We remove this complication away by provide a massive menu of arms ($K = \lceil 4N \log T / p_{\min} \rceil$) so that, with high probability, every single localized inner bin contains at least one available arm at every round (Lemma \ref{lem:lb_availability}). Conditioned on this universal availability event $\mathcal{E}$, the sleeping structure vanishes, and the problem collapses into a pure, perpetually available $N$-armed stochastic bandit in which every pull of a downward-pointing bin incurs instantaneous regret at least $\varepsilon$ (relative to the optimal reward $\tfrac{1}{2}+\lambda+\varepsilon$ attained on an active up-bump bin) (Lemma \ref{lem:lb_per_round_regret}).

\textbf{Step 3: The Information-Theoretic Trap (Section \ref{sec:lb_kl_bh}).} To trap the algorithm, we isolate a ``null" reference hypothesis $\beta^{(0)}$ (where every bump points downward, making every bin slightly sub-optimal) and $N$ parallel ``spike" hypotheses $\beta^{(j)}$ (identical to the null, except bin $j$ is flipped to point upward, making it uniquely optimal). Since the ramp term $\lambda\rho$ is shared by all hypotheses and is flat over the bins, it cancels in every pairwise comparison, so null and spike differ only by the $\pm\varepsilon$ bump in bin $j$ (a mean gap of $2\varepsilon$). Because the bump height $\varepsilon$ is extremely small, distinguishing the null from a spike requires massive exploration (Proposition \ref{prop:lb_kl}).

The Bretagnolle-Huber inequality formalizes this into an inescapable dilemma for any algorithm: if the algorithm plays conservatively and explores bin $j$ rarely, it cannot identify the optimal arm when placed in the spike hypothesis; conversely, if it explores aggressively to check for a spike, it suffers heavy regret pulling a downward-pointing bump if the true environment is the null hypothesis. Balancing this bias-variance tradeoff strictly dictates the choice $N = \Theta(T^{1/3})$, forcing a minimum expected regret of $\Omega(T^{2/3})$.
\section{Experiments}
\label{sec:experiments}

\begin{figure*}[htbp]
    \centering
    \begin{subfigure}[b]{0.245\textwidth}
        \centering
        \includegraphics[width=\linewidth, height = .135\textheight, page=1]{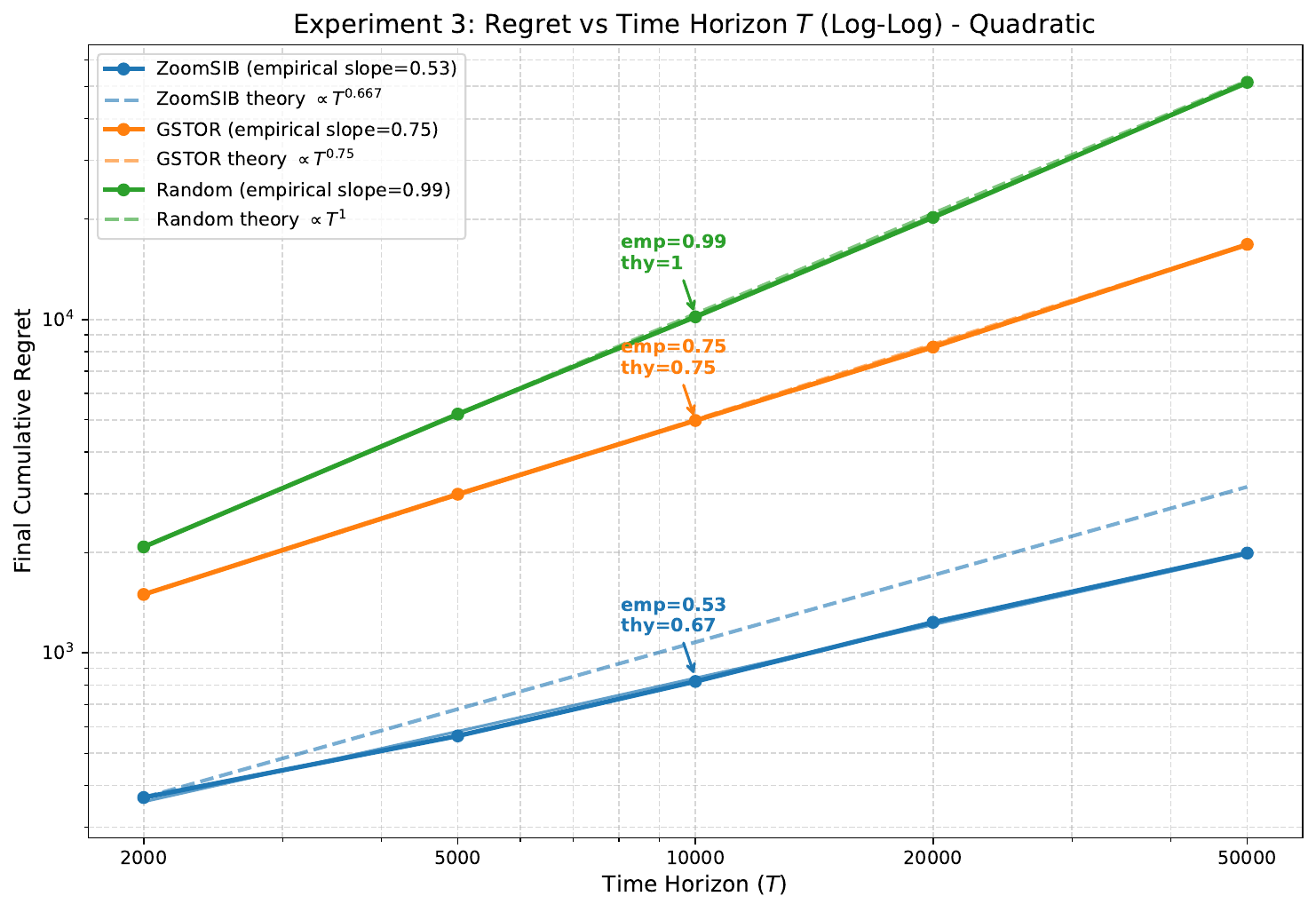}
        \caption{Log-Log (Quadratic)}
    \end{subfigure}
    \hfill
    \begin{subfigure}[b]{0.245\textwidth}
        \centering        \includegraphics[width=\linewidth,height = .135\textheight, page=2]{images/adaptive_time_scaling.pdf}
        \caption{Horizon (Asymmetric)}
    \end{subfigure}
    \hfill
    \begin{subfigure}[b]{0.245\textwidth}
        \centering
        \includegraphics[width=\linewidth, height =.135\textheight, page=3]{images/adaptive_time_scaling.pdf}
        \caption{Horizon (Zigzag)}
    \end{subfigure}
    \hfill
    \begin{subfigure}[b]{0.245\textwidth}
        \centering
        \includegraphics[width=\linewidth,height = .135\textheight, page=1]{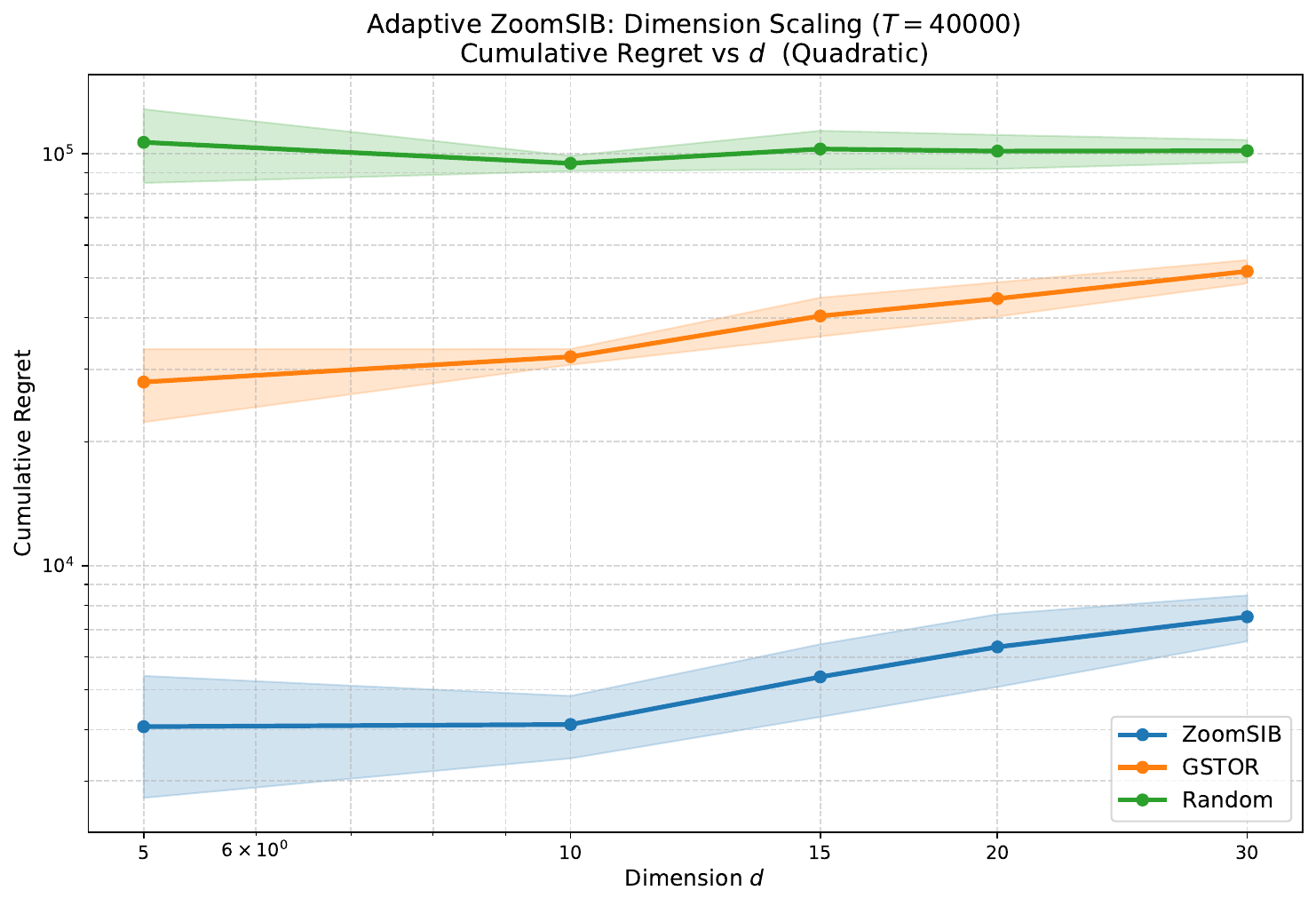}
        \caption{Dimension (Quadratic)}
    \end{subfigure}
    \vspace{-0.5em}
    \caption{\textbf{Comprehensive Synthetic Evaluation.} \textbf{(a)} Log-log regression of cumulative regret. The empirical slope of $0.53$ falls strictly below the dashed theoretical limit, confirming robust sublinear scaling. \textbf{(b, c)} Cumulative regret over varying time horizons. \texttt{ZoomSIB-UCB} leverages its early adaptive stopping rule to immediately pull ahead of the GSTOR baseline. \textbf{(d)} Dimension scaling at $T=40{,}000$. By operating entirely in the projected 1D index space after the initial estimation phase, ZoomSIB-UCB remains highly resilient to massive increases in feature dimension ($d$).}
    \label{fig:main_synthetic}
\end{figure*}
We evaluate \texttt{ZoomSIB-UCB} against three baselines: GSTOR (non-monotone, kernel-based),
ESTOR (monotone-assumed), and Random (uniform selection). We stress-test shape-agnosticism
using three synthetic link functions: Quadratic $f(z)=-(z-1)^2+1$, Asymmetric
$f(z)=z e^{-z^2}$, and Zigzag $f(z)=\sin(z)+0.3z$. Default settings: $d=10$,
standard Gaussian contexts, $K=20$, averaged over $30$ trials ($\pm 1$ std.\ dev.).

\paragraph{Adaptive Stopping: No Knowledge of $T_0$ (or $\mu^*$) Required.}
Although theory prescribes $T_0 = \tilde{\mathcal{O}}(d^{2}T^{2/3})$, which can be
prohibitively large, we equip \texttt{ZoomSIB-UCB} with an adaptive stopping rule that
monitors parameter-estimate stability in real time, autonomously transitioning to
exploitation once estimates converge. As shown in Table~\ref{tab:exp1}, this
self-tuning mechanism exits exploration at just $1.8\%$ of the horizon for
easy functions with strong gradients (e.g., Quadratic), while waiting longer
for flatter geometries (e.g., Asymmetric).


\paragraph{Scaling and Theoretical Verification.} 
\texttt{ZoomSIB-UCB} achieves strictly sublinear regret across all horizons and dimensions, consistently outperforming both baselines (Figure~\ref{fig:main_synthetic}b--d). The gap is most pronounced on Quadratic and Zigzag functions, where GSTOR incurs up to $10\times$ higher regret at $T=40{,}000$ (Table~\ref{tab:exp1}). By compressing $d$-dimensional features into a $1$D index before arm selection, \texttt{ZoomSIB-UCB} sidesteps the curse of dimensionality: a $6\times$ increase in ambient dimension inflates regret by only $2.3\times$ (Table~\ref{tab:exp2}, Figure~\ref{fig:main_synthetic}d).

To verify asymptotic guarantees, we plot regret on a log-log scale (Figure~\ref{fig:main_synthetic}a), where slope encodes the scaling exponent. \texttt{ZoomSIB-UCB} achieves an empirical slope of $0.53$, well below the theoretical ceiling of $2/3\ (\approx 0.667)$, confirming optimal scaling and that the worst-case bound is tight only under adversarial conditions. See also Figures~\ref{fig:exp3_asym},~\ref{fig:exp3_zig}.

\begin{figure}[t!]
    \centering
    \begin{subfigure}[b]{0.3\textwidth}
        \centering
        \includegraphics[width=\textwidth]{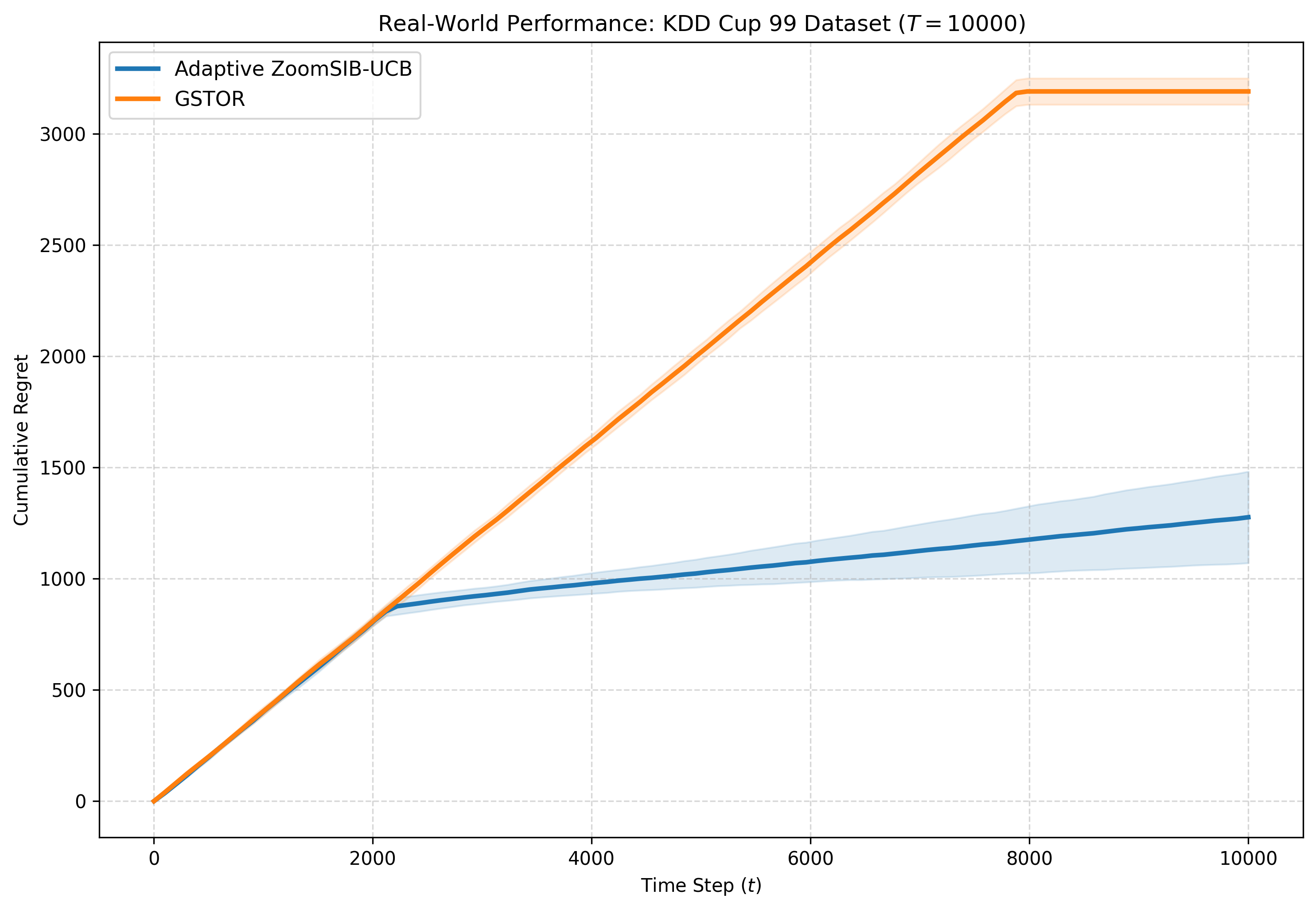}
        \caption{KDD Cup 99 ($d=39$)}
    \end{subfigure}
    \hspace{0.1\textwidth}
    \begin{subfigure}[b]{0.3\textwidth}
        \centering
        \includegraphics[width=\textwidth]{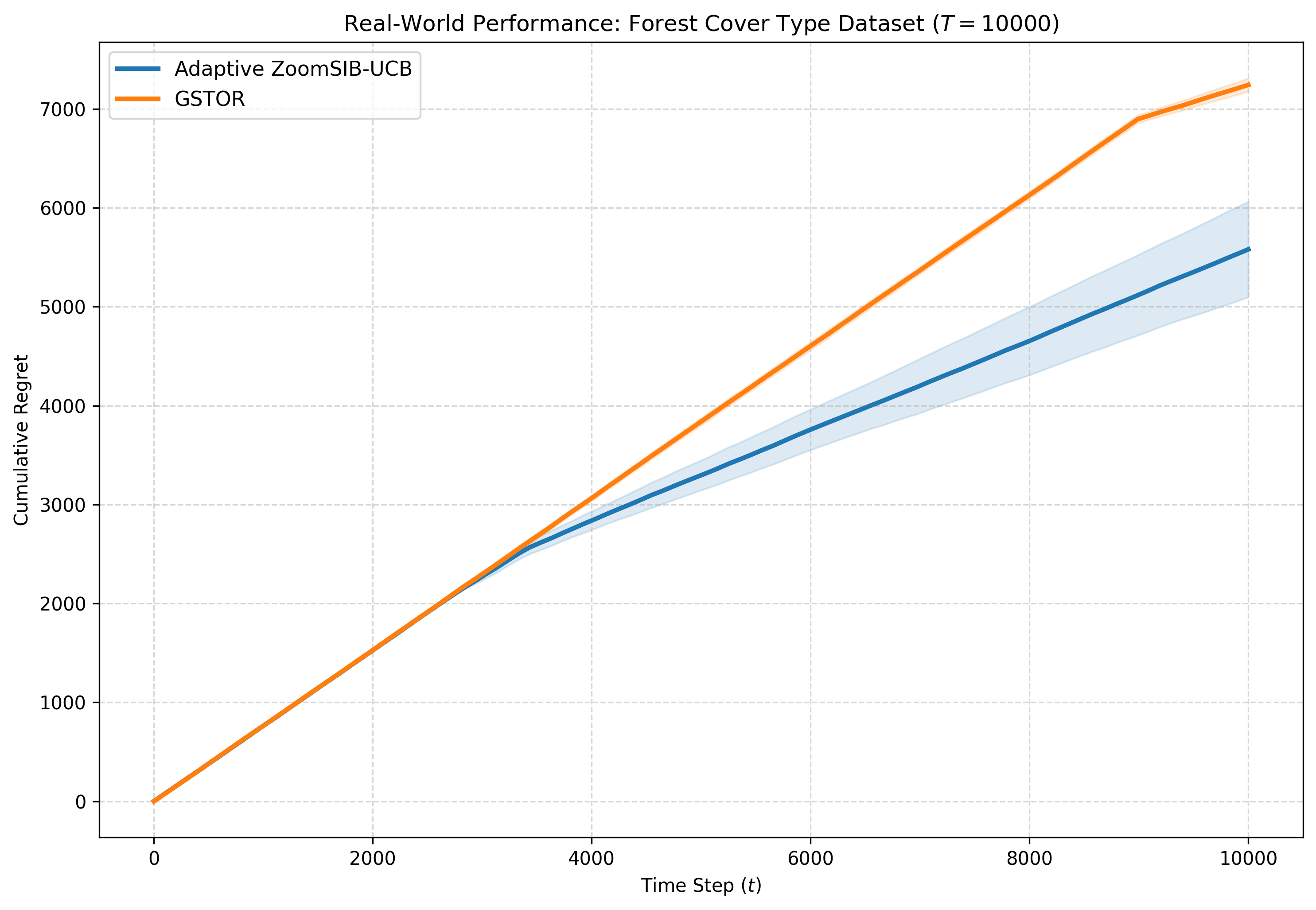}
        \caption{Forest Cover Type ($d=55$)}
    \end{subfigure}
    \caption{\textbf{Real-World Datasets.} Cumulative regret on datasets converted to multi-armed bandit tasks ($T=10{,}000$, $K=32$). Adaptive ZoomSIB-UCB successfully detects early signal convergence, bypassing the catastrophic $79\%$-$89\%$ exploration phases forced upon GSTOR.}
    \label{fig:real_world_appendix}
\end{figure}

\paragraph{Real-World Data.} We evaluated \texttt{ZoomSIB-UCB} on two offline classification datasets—KDD Cup 99 and Forest Cover Type—cast as contextual bandit tasks (Figure~\ref{fig:real_world_appendix}). GSTOR's rigid constraints force exploration over up to $89\%$ of the horizon, whereas \texttt{ZoomSIB-UCB}'s adaptive mechanism identifies feature patterns early, limiting exploration to just $21\%$--$29\%$ and substantially reducing cumulative regret.

Full experimental setups, tabular data, and misspecification evaluations (where ESTOR catastrophically collapses to linear regret) are detailed in Appendix~\ref{app:experiments}.

\section{Conclusion and Open Problems}
This paper resolves an open problem by establishing $\tilde{\Theta}(T^{2/3})$ as the minimax-optimal regret rate for the non-monotone single-index bandit, improving upon $\tilde{O}(T^{3/4})$ with the first matching information-theoretic lower bound in this setting. We end the paper with a few open problems. Our analysis relies on the Stein estimator and requires $\mu^* \neq 0$; can an alternative technique be developed? It would also be interesting to explore single-index bandits beyond bounded Lipschitz structure, and to understand the implications of our result in the multi-index setting. We keep these as our future endeavors.

\bibliography{ref}
\bibliographystyle{plainnat}

\appendix

\section{Other Related Work}
\subsection{Linear Bandits, Contextual Bandits and Generalized Linear Bandits}
Linear bandits extend the finite-armed bandit problem to structured continuous 
action spaces, with \cite{auer2002using} establishing the foundational 
optimism-under-uncertainty template and \cite{dani2008stochastic} providing 
the first $\tilde{O}(d\sqrt{T})$ regret bound via confidence ellipsoids, later 
sharpened to a tight $\tilde{O}(d\sqrt{T})$ by the canonical OFUL algorithm of 
\cite{yadkori} using regularized least-squares estimation. The 
matching $\Omega(d\sqrt{T})$ lower bound of \cite{dani2008stochastic} confirms 
optimality. The adversarial setting is studied by \cite{abernethy2008competing} 
and \cite{bubeck2012towards}, who establish near-optimal $\tilde{O}(\sqrt{dT})$ 
bounds, while subsequent work extended the framework to sparsity 
\citep{carpentier2012bandit}, low-rank structure \citep{jun2019bilinear}, and 
misspecification \citep{lattimore2020learning}. Contextual bandits condition 
rewards on an observed context vector, with LinUCB \citep{li2010contextual} and 
its theoretical analysis \citep{chu-contextual} achieving $\tilde{O}(d\sqrt{T})$ 
regret in the linear case; the agnostic setting is addressed by 
\cite{langford2008epoch} via epoch-greedy exploration and by 
\cite{dudik2011efficient} via oracle reductions, with \cite{foster2020beyond} 
ultimately achieving the optimal $\tilde{O}(\sqrt{KT})$ rate through regression 
oracle reductions. 

Generalized linear bandits (GLBs) pass the inner product 
$\langle \theta, a \rangle$ through a known link function $\mu(\cdot)$, with 
GLM-UCB \cite{filippi-glm} establishing $\tilde{O}(d\sqrt{T})$ regret 
under bounded link functions, \cite{li2017provably} improving curvature 
dependence and \cite{faury2020improved} achieving refined bounds for logistic 
bandits via self-concordance. Also \cite{abeille2021instance} derive 
instance-dependent lower bounds that connect GLBs to the broader theory of 
structured bandits.

\subsection{Single Index Model}
Semiparametric efficiency bounds and optimal estimation rates for the SIM 
in the classical low-dimensional regime are established by 
\cite{ichimura1993semiparametric} and \cite{hardle1993optimal}. \cite{hornik1989multilayer} and \cite{mccullagh1989generalized} connect the 
SIM to neural networks and generalized linear models respectively, situating it 
within a broader modeling landscape. In the high-dimensional setting where 
$d \gg n$, \cite{plan2016generalized} provides a seminal analysis showing that 
the index vector $\theta^*$ can be estimated from one-bit compressed measurements 
at the rate $O(s \log d / n)$ under sparsity, leveraging the connection between 
the SIM and generalized linear measurements. \cite{brillinger1982generalized} 
has earlier observed that ordinary least squares recovers $\theta^*$ up to a 
scalar in the SIM under Gaussian designs, a phenomenon later formalized and 
extended by \cite{plan2016generalized} and \cite{thrampoulidis2015lasso}. 
The work of \cite{kalai2009isotron} introduce the Isotron algorithm, the first 
computationally efficient method to jointly learn $\theta^*$ and $f$ with 
provable guarantees under Gaussian covariates. \cite{kakade2011efficient} 
subsequently extend these results via the STAP algorithm, achieving polynomial 
sample and computational complexity for learning SIMs with monotone link 
functions.

Through the lens of empirical process theory and Gaussian width, the estimation 
error of convex programs for SIM recovery was characterized in terms of the 
geometry of the parameter set by \cite{plan2016generalized} and 
\cite{goldstein2018structured}. More recently, the 
connection between SIMs and gradient descent on two-layer neural networks has 
attracted significant attention: \cite{barak2022hidden} and \cite{abbe2022merged} 
studied the SIM as a prototypical model for understanding what gradient-based 
methods can and cannot learn, establishing computational-statistical gaps tied 
to the information exponent of the link function $f$. \cite{bietti2022learning} 
and \cite{damian2022neural} further showed that a single gradient step on the 
population loss suffices to recover the index direction up to a polynomial sample 
complexity, formalizing the so-called leap exponent as the key quantity governing 
the difficulty of the learning problem under staircase-structured link functions.

Closest to our setting among recent work is that of \citet{arya2025batched},
who study a \emph{batched} global bandit with covariates under a single-index
structure and propose the \textsc{BIDS} algorithm, combining sliced inverse
regression with dynamic binning along the estimated index. Their formulation
differs from ours on four counts. They operate in the batched setting with $M$
pre-specified batch endpoints, at which alone the policy may be updated, whereas
we are fully sequential. They observe a single covariate $X_t$ per round and
select among a fixed arm set with \emph{arm-specific} links
$g^{(k)}(x) = f^{(k)}(x^\top\beta_0)$, whereas we observe $K$ i.i.d.\
arm-feature vectors under a \emph{common} link $f$; indeed
\citet{arya2025batched} themselves note that the formulation of
\citet{kang2025single}, which we adopt, ``differs substantially'' from theirs.
Their rates are driven by a margin parameter $\alpha$, giving $T^{\beta_M}$ with
$\beta_M = \frac{1-\gamma}{1-\gamma^M}$ and $\gamma = \frac{1+\alpha}{3}$, and
monotonicity of the link plays no role in their analysis; we impose no margin
condition and study precisely the non-monotone case. Finally, their Assumption~3
requires bounded context support with the projected density bounded above and
below, and their estimation rate is \emph{assumed} as a black box (their
Assumption~6), while we require only coordinatewise sub-Gaussian contexts and
prove the corresponding rate for the truncated Stein estimator. Their optimality
claim is also conditional, holding only for fixed $K$, known $\alpha$, and a
supplied pilot direction; without a pilot they obtain
$\widetilde{\mathcal{O}}(\max\{K^{1/3}T^{2/3}, K^{1-\beta_M}T^{\beta_M}\})$. We
note that the two sets of results agree where they should: in the no-margin
limit $\alpha \to 0$ with $M \to \infty$ one has $\beta_M \to 2/3$, matching the
$\widetilde{\Theta}(T^{2/3})$ rate established here.

\subsection{Lipschitz Bandits}
In \cite{agrawal1995continuum}, the authors  establish the feasibility of 
consistent estimation in the one-dimensional setting, and \cite{kleinberg2004nearly} 
derived nearly tight $\tilde{O}(T^{2/3})$ regret bounds, later tightened and 
connected to nonparametric regression by \cite{auer2007improved}. 
\cite{kleinberg2008multi} extend the framework to general metric spaces via 
the zooming algorithm, achieving instance-dependent regret scaling with the 
zooming dimension rather than the ambient dimension, with a matching minimax 
lower bound of $\Omega(T^{(d+1)/(d+2)})$ for $d$-dimensional spaces confirming 
the tightness of this dependence. The $\mathcal{X}$-armed bandit framework of 
\cite{bubeck2011x} introduce the HOO algorithm, which operates on a 
hierarchical partition of the action space and achieves regret depending on the 
near-optimality dimension, providing a unified treatment of Lipschitz and 
smooth bandits; \cite{munos2011optimistic} subsequently develop the 
parameter-free SOO and StoSOO algorithms, and \cite{valko2013stochastic} 
extend this line of work to kernel-based methods exploiting RKHS structure. 
\cite{slivkins2014contextual} generalize the framework to contextual bandits 
via a joint similarity metric over contexts and actions, \cite{bubeck2011lipschitz} 
derive instance-dependent logarithmic regret bounds under a margin condition 
analogous to Tsybakov's noise condition, and \cite{locatelli2018adaptivity} 
study adaptive algorithms achieving optimal rates across smoothness classes 
without knowledge of the Lipschitz constant.

\section{Proofs}
\label{sec:proofs}

To establish the convergence of our normalized estimator, we first require a high-probability bound on the unnormalized truncated Stein estimator. Under our specific moment and regularity assumptions, this bound is achieved by carefully setting the truncation threshold $\tau$. For completeness, we adapt the underlying concentration argument  to our setting:
\begin{lemma}[Unnormalized Stein Estimation Error; 
cf.\ {\cite[Theorem~3.1]{kang2025single}}]
\label{lem:unnorm_stein}  Let $\{(x_i, y_i)\}_{i=1}^n$ be 
i.i.d.\ samples with $y_i = f(x_i^\top\theta^*) + \eta_i$, 
$x_i \sim \mathcal{D}$, $\eta_i$ independent of $x_i$ with 
$\mathbb{E}[\eta_i] = 0$ and 
$\mathbb{E}[\eta_i^2] \le \sigma^2$.  Under 
Assumptions~\ref{asm:moment} and~\ref{asm:lip}, with 
truncation threshold
\[
\tau = \sqrt{\frac{3(\sigma^2 + L_f^2)\, M\, n}
{\log(2d/\delta)}},
\]
the unnormalized truncated Stein estimator 
$\hat\theta = \frac{1}{n}\sum_{i=1}^n 
\varphi_\tau(y_i S(x_i))$ satisfies, with probability at 
least $1 - \delta$:
\[
\|\hat\theta - \mu^*\theta^*\|_1 
\le C_\theta\, d\sqrt{\frac{\log(2d/\delta)}{n}},
\quad \text{where} \quad 
C_\theta = \left(\frac{4\sqrt{3}}{3} + 2\sqrt{2}\right)
\sqrt{M(\sigma^2 + L_f^2)}.
\]
\end{lemma}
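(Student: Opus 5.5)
The plan is a coordinatewise truncated-mean argument (Catoni/Minsker-style, but with plain truncation) followed by a union bound over the $d$ coordinates. Fix a coordinate $j\in[d]$ and write $Z_i = y_i S_j(x_i)$, so that $[\hat\theta]_j = \frac1n\sum_i \varphi_\tau(Z_i)$. By Stein's identity (Remark~\ref{rem:stein_regularity}) and independence of $\eta_i$ from $x_i$, we have $\E[Z_i] = \E[f(x_i^\top\theta^*)S_j(x_i)] + \E[\eta_i]\E[S_j(x_i)] = \mu^*\theta^*_j$. We split
\[
\bigl|[\hat\theta]_j - \mu^*\theta^*_j\bigr| \le \Bigl|\tfrac1n\textstyle\sum_i \bigl(\varphi_\tau(Z_i) - \E\varphi_\tau(Z_i)\bigr)\Bigr| + \bigl|\E\varphi_\tau(Z_i) - \E Z_i\bigr|,
\]
and bound the two terms as a stochastic part and a truncation bias.

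\textbf{Second moment.} We have $\E[Z_i^2] \le \E[y_i^2 S_j(x_i)^2]$. Expanding $y_i^2 = f^2 + 2f\eta_i + \eta_i^2$ and using independence of $\eta_i$ from $x_i$ gives
\[
\E[Z_i^2] \le (L_f^2 + \sigma^2)\,\E[S_j^2] \le (\sigma^2 + L_f^2)M =: v,
\]
using Assumptions~\ref{asm:moment} and~\ref{asm:lip}. One caveat: $|f|\le L_f$ is only assumed on the effective range. I would handle this either by conditioning on the sub-Gaussian event of Lemma~\ref{lem:E3}, or by treating $f$ as bounded by $L_f$ wherever $x$ has non-negligible mass; I will flag this but not belabor it.

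\textbf{Bias.} Since $|Z - \varphi_\tau(Z)| \le |Z|\mathbf 1\{|Z|>\tau\} \le Z^2/\tau$, the bias is at most $v/\tau$.

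\textbf{Deviation.} The variables $\varphi_\tau(Z_i)$ are bounded by $\tau$ and have variance at most $v$. Bernstein's inequality then gives, with probability $1-\delta/d$,
\[
\Bigl|\tfrac1n\textstyle\sum_i \bigl(\varphi_\tau(Z_i) - \E\varphi_\tau(Z_i)\bigr)\Bigr| \le \sqrt{\frac{2v\log(2d/\delta)}{n}} + \frac{2\tau\log(2d/\delta)}{3n}.
\]

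\textbf{Plugging in $\tau$.} With $\tau = \sqrt{3vn/\log(2d/\delta)}$, the bias becomes $v/\tau = \sqrt{v\log(2d/\delta)/(3n)}$, and the Bernstein range term becomes $\frac{2}{3}\sqrt{3v\log(2d/\delta)/n}$. Summing the three contributions gives a per-coordinate bound $c\sqrt{v\log(2d/\delta)/n}$ with $c = \frac{\sqrt3}{3} + \frac{2\sqrt3}{3} + \sqrt2$. This is at most the stated $C_\theta$ up to the bookkeeping of constants; the stated constant $\frac{4\sqrt3}{3} + 2\sqrt2$ leaves room, for example, for a two-sided Bernstein bound with slightly looser factors.

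\textbf{Union bound.} A union bound over $j\in[d]$ puts all coordinates on an event of probability at least $1-\delta$. Summing the $d$ coordinate errors yields the $\ell_1$ bound $C_\theta d\sqrt{\log(2d/\delta)/n}$.

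The main obstacle is the second-moment step, specifically justifying $\E[f(x^\top\theta^*)^2 S_j(x)^2]\le L_f^2 M$ when boundedness of $f$ is only local. My first attempt would be to argue that on the complement of the high-probability range event the contribution is negligible, via Cauchy–Schwarz with sub-Gaussian tails, absorbing it into the constant. Otherwise I would simply interpret $L_f$ as a global bound, as in \cite{kang2025single}.
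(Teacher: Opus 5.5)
The paper gives no proof of this lemma; it defers entirely to Theorem~3.1 of \cite{kang2025single}. Your argument is the standard one behind that citation: unbiasedness from Stein's identity, truncation bias at most $v/\tau$, a per-coordinate Bernstein bound at level $\delta/d$, a union bound, and then $\ell_1\le d\,\ell_\infty$. It does deliver the stated bound, after one bookkeeping correction. Bernstein's range parameter must bound the centred variables $\varphi_\tau(Z_i)-\mathbb{E}\varphi_\tau(Z_i)$, which lie in $[-2\tau,2\tau]$, not $[-\tau,\tau]$. So the range term is $\frac{4\tau\log(2d/\delta)}{3n}=\frac{4\sqrt3}{3}\sqrt{v\log(2d/\delta)/n}$, not $\frac{2\sqrt3}{3}\sqrt{v\log(2d/\delta)/n}$. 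Your per-coordinate constant becomes $\frac{5\sqrt3}{3}+\sqrt2\approx 4.30$. This is still below $\frac{4\sqrt3}{3}+2\sqrt2\approx 5.14$, so $C_\theta$ holds without any hedging.

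On the boundedness caveat, drop the workarounds: the lemma is false if $|f|\le L_f$ is assumed only on the effective range $|z|\le R:=C_W\sqrt{\log(dTK/\delta)}$. The key observation is that $|[\hat\theta]_j|\le\tau$ deterministically, and $\tau$ does not see $f$ off the range. Here is a concrete failure:
\begin{itemize}
\item Take $d=1$, $\theta^*=1$, $\mathcal{D}=N(0,1)$ (so $S(x)=x$ and $M=1$), and any admissible globally bounded $f_0$.
\item Replace $f_0$ by $f_0+Ag$, where $g\in C^\infty$ is nondecreasing, $g=0$ on $(-\infty,R]$ and $g=1$ on $[R+1,\infty)$.
\item Every hypothesis of the lemma is unchanged, since the function is unaltered on $|z|\le R$.
\item However, $\mu^*$ increases by $A\,\mathbb{E}[g'(X)]\ge A\,\phi(R+1)$, which tends to $\infty$ as $A\to\infty$.
\item Hence $|\hat\theta-\mu^*\theta^*|\ge\mu^*-\tau$ eventually exceeds the claimed error, with probability one.
\end{itemize}
So no tail or Cauchy--Schwarz argument can justify $\mathbb{E}[f^2S_j^2]\le L_f^2M$ from the local bound. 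Conditioning on the range event does not help either: it changes the law of $x_i$, so $-\nabla\log p$ no longer yields mean $\mu^*\theta^*_j$. The factor $(\sigma^2+L_f^2)M$ appearing in both $\tau$ and $C_\theta$ is exactly your global computation $\mathbb{E}[y^2S_j^2]\le(\sigma^2+L_f^2)M$. The lemma must therefore be read, and should be stated, with $|f|\le L_f$ on all of $\mathbb{R}$. Under that reading your proof is complete; the defect lies in the paper's statement, not in your argument.
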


\noindent \textbf{Lemma \ref{thm:stein_error} (Restated).} \textit{Suppose the unnormalized truncated Stein estimator satisfies $\norm{\that - \mu^* \tstar}_1 \le \epsilon$ with probability at least $1-\delta$, where $\epsilon = C_\theta d \sqrt{\frac{\log(2d/\delta)}{n}}$. Assume $\mu^* > 0$, $\norm{\tstar}_1 = 1$, and that the sample size $n$ is sufficiently large such that $\epsilon \le \mu^*/2$. Then, with probability at least $1-\delta$, the normalized estimator satisfies, $\norm{\that_0 - \tstar}_1 \le \frac{4\epsilon}{\mu^*}.$}

\begin{proof}[Proof of Lemma \ref{thm:stein_error}]
Condition on the high-probability event of 
Lemma~\ref{lem:unnorm_stein}, which guarantees 
$\|\hat\theta - \mu^*\theta^*\|_1 \le \epsilon$ with 
$\epsilon = C_\theta d\sqrt{\log(2d/\delta)/n}$.

First, we bound the magnitude of the normalization denominator $\norm{\that}_1$. By the reverse triangle inequality, we have:
\begin{equation}
    \abs{ \norm{\that}_1 - \norm{\mu^* \tstar}_1 } \le \norm{\that - \mu^* \tstar}_1 \le \epsilon
\end{equation}
Since $\mu^* > 0$ and $\norm{\tstar}_1 = 1$, we know $\norm{\mu^* \tstar}_1 = \mu^* \norm{\tstar}_1 = \mu^*$. Substituting this in gives:
\begin{equation}
    \abs{ \norm{\that}_1 - \mu^* } \le \epsilon
\end{equation}
This implies that $\norm{\that}_1 \in [\mu^* - \epsilon, \mu^* + \epsilon]$. By the theorem's assumption that the sample size is large enough to ensure $\epsilon \le \mu^*/2$, it strictly follows that $\norm{\that}_1 \ge \mu^*/2 > 0$.

Next, we evaluate the $\ell_1$ distance between the normalized estimator $\that_0$ and the true parameter $\tstar$:
\begin{equation}
    \norm{\that_0 - \tstar}_1 = \norm{ \frac{\that}{\norm{\that}_1} - \tstar }_1 = \frac{1}{\norm{\that}_1} \norm{ \that - \norm{\that}_1 \tstar }_1
\end{equation}
To leverage our existing bound, we add and subtract $\mu^* \tstar$ inside the norm:
\begin{equation}
    \norm{\that_0 - \tstar}_1 = \frac{1}{\norm{\that}_1} \norm{ (\that - \mu^* \tstar) + (\mu^* \tstar - \norm{\that}_1 \tstar) }_1
\end{equation}
Applying the standard triangle inequality yields:
\begin{equation}
    \norm{\that_0 - \tstar}_1 \le \frac{1}{\norm{\that}_1} \left( \norm{\that - \mu^* \tstar}_1 + \norm{ (\mu^* - \norm{\that}_1) \tstar }_1 \right)
\end{equation}
We can factor out the scalar absolute value from the second term:
\begin{equation}
    \norm{\that_0 - \tstar}_1 \le \frac{1}{\norm{\that}_1} \left( \norm{\that - \mu^* \tstar}_1 + \abs{\mu^* - \norm{\that}_1} \norm{\tstar}_1 \right)
\end{equation}
Now, we substitute the bounds we established earlier. We know $\norm{\that}_1 \ge \mu^*/2$, $\norm{\that - \mu^* \tstar}_1 \le \epsilon$, $\abs{\mu^* - \norm{\that}_1} \le \epsilon$, and $\norm{\tstar}_1 = 1$. Plugging these in gives:
\begin{equation}
    \norm{\that_0 - \tstar}_1 \le \frac{1}{\mu^*/2} \left( \epsilon + \epsilon \cdot 1 \right) = \frac{2}{\mu^*} (2\epsilon) = \frac{4\epsilon}{\mu^*}
\end{equation}
Substituting the definition of $\epsilon$, we conclude the proof:
\begin{equation}
    \norm{\that_0 - \tstar}_1 \le \frac{4C_\theta}{\mu^*} d \sqrt{\frac{\log(2d/\delta)}{n}}
\end{equation}
which preserves the desired convergence rate.
\end{proof}

\section{Proof of Theorem~\ref{thm:main_regret}}
\subsection{High-Probability Events}
Before presenting the lemmas, we collect the probabilistic scaffolding on which the entire analysis rests. To ensure the final theorem holds with probability at least $1-\delta$, we scale the failure probability of each individual event to $\delta/4$.

Define $L := 4\kappa\sqrt{\log(4dTK/\delta)}$ and 
$W := 4\kappa\sqrt{\log(4TK/\delta)}$.

Define the following four events:

\begin{itemize}
    \item $\mathcal{E}_1 := \{ \norm{\that_0 - \tstar}_1 \le \Delta / (2L) \}$, the event that Phase 1 estimation succeeds. By Lemma \ref{thm:stein_error} and the choice of $T_0$, $\Prob(\mathcal{E}_1) \ge 1 - \delta/4$.
    
    \item $\mathcal{E}_2 := \{ \norm{x_{t,a}}_\infty \le L, \; \forall t \in [T], a \in [K] \}$, the event that all context norms are controlled. By the sub-Gaussian tail and union bound over $dTK$ coordinates, $\Prob(\mathcal{E}_2) \ge 1 - \delta/4$.
    
    \item $\mathcal{E}_3 := \{ |x_{t,a}^\top \tstar| \le W/2, \; \forall t \in [T], a \in [K] \}$, the event that all projected indices lie the effective range. Since $x_{t,a}^\top \tstar$ is $\kappa$-sub-Gaussian with  $\norm{\tstar}_2\le \norm{ \tstar}_1 = 1$, a union bound over $TK$ arms gives $\Prob(\mathcal{E}_3) \ge 1 - \delta/4$.
    

    \item $\mathcal{E}_4 := \bigl\{ 
  |M_j(n)| \le \sigma\sqrt{2\log(2NT/\delta)/n} \;\;\forall\, 
  j \in [N],\, n \in [T] \bigr\}$, the event that the noise 
  averages in each bin concentrate, where 
  $M_j(n) = \frac{1}{n}\sum_{m=1}^n \eta_{T_j(m)}$ is the 
  average noise over the first $n$ pulls of bin $j$.  By the 
  Azuma-Hoeffding inequality for the sub-Gaussian MDS 
  (See Lemma~\ref{lem:noise_conc} ) and a union bound over 
  $j \in [N]$ and $n \in [T]$, 
  $\mathbb{P}(\mathcal{E}_4) \ge 1 - \delta/4$.
\end{itemize}

Throughout the analysis, we work on the intersection event $\mathcal{E} := \mathcal{E}_1 \cap \mathcal{E}_2 \cap \mathcal{E}_3 \cap \mathcal{E}_4$, which satisfies $\Prob(\mathcal{E}) \ge 1 - \delta$.

We now prove $\mathcal{E}_2$, $\mathcal{E}_3$, and 
$\mathcal{E}_4$.  ($\mathcal{E}_1$ follows directly from the 
estimation lemmas in the preceding section.)

\begin{lemma}[Context norm bound $\mathcal{E}_2$]\label{lem:E2}
    Let constant $L$ and event $\mathcal{E}_2$ be as defined above. Then $\Prob(\mathcal{E}_2) \ge 1 - \delta/4$.
\end{lemma}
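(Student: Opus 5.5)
The plan is to bound each coordinate with the sub-Gaussian tail from Assumption~\ref{asm:lip} and then union bound over all $dTK$ scalar coordinates. Fix $t\in[T]$, $a\in[K]$, $j\in[d]$. Since $x_{t,a,j}$ is $\kappa$-sub-Gaussian, the Chernoff bound gives, for any $u>0$ and $\lambda>0$,
\[
\Prob(x_{t,a,j}\ge u)\le e^{-\lambda u}\,\E[e^{\lambda x_{t,a,j}}]\le e^{-\lambda u+\kappa^2\lambda^2/2}.
\]
Optimizing at $\lambda=u/\kappa^2$ yields $e^{-u^2/(2\kappa^2)}$. Applying the same argument to $-x_{t,a,j}$ (the moment generating function bound holds for all $\lambda\in\R$) gives the two-sided tail $\Prob(|x_{t,a,j}|>u)\le 2e^{-u^2/(2\kappa^2)}$.

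Next we take the union bound. The event $\mathcal{E}_2^c$ is the union over the $dTK$ triples $(t,a,j)$ of $\{|x_{t,a,j}|>L\}$. We do not need independence across coordinates or arms, only marginal sub-Gaussianity, since the union bound requires nothing more. This gives
\[
\Prob(\mathcal{E}_2^c)\le 2dTK\,e^{-L^2/(2\kappa^2)}.
\]
Substituting $L=4\kappa\sqrt{\log(4dTK/\delta)}$ gives $L^2/(2\kappa^2)=8\log(4dTK/\delta)$. The bound therefore becomes
\[
2dTK\,(\delta/(4dTK))^{8}=\tfrac{\delta}{2}\,(\delta/(4dTK))^{7}\le \delta/4,
\]
using $\delta<1$ and $dTK\ge1$, so that $(\delta/(4dTK))^7\le 4^{-7}\le 1/2$.

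There is no real obstacle. The only care needed is to check that the constant $4$ in $L$ is large enough to absorb both the factor $2$ from the two-sided tail and the $dTK$ union factor. The computation above shows there is ample slack, since even the constant $2$ in place of $4$ would suffice. One could also note that contexts in Phase~1 and Phase~2 are drawn from the same $\mathcal{D}$, so the bound covers all rounds $t\in[T]$ uniformly.
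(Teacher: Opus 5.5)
Your proof is correct and follows the paper's argument: a two-sided sub-Gaussian tail bound at $u=L$, giving a per-coordinate failure probability of $2(\delta/(4dTK))^{8}\le \delta/(4dTK)$, followed by a union bound over the $dTK$ coordinate--arm pairs. One small improvement over the paper: you derive the tail directly from the moment-generating-function condition in Assumption~\ref{asm:lip}. That condition already forces $\mathbb{E}[x_{t,a,j}]=0$, so the paper's remark about shifting to zero mean is unnecessary.
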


\begin{proof}
Fix a coordinate $j \in [d]$ and an arm index $(t,a)$.  Since $(x_{t,a})_j$ is $\kappa$-sub-Gaussian with zero mean (we can shift 
to zero mean without loss by absorbing the mean into $f$ through 
the intercept), the standard sub-Gaussian tail gives $\Prob(|(x_{t,a})_j| > u) 
\le 2\exp (-\frac{u^2}{2\kappa^2}).$
Setting $u = L = 4\kappa\sqrt{\log(4dTK/\delta)}$ yields
\[
\Prob\bigl(|(x_{t,a})_j| > L\bigr) 
\le 2\exp\left(-8\log(4dTK/\delta)\right) 
\le \frac{\delta}{4dTK}.
\]
A union bound over all $d \cdot TK$ coordinate-arm pairs gives
$\Prob(\mathcal{E}_2^c) 
\le d \cdot TK \cdot \frac{\delta}{4dTK} = \frac{\delta}{4}.$

\end{proof}
\begin{lemma}[Projection bound $\mathcal{E}_3$]\label{lem:E3}
    Let constant $W$ and event $\mathcal{E}_3$ be as defined above. Then $\Prob(\mathcal{E}_3) \ge 1 - \delta/4$.
\end{lemma}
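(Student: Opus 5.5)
The plan mirrors the proof of Lemma~\ref{lem:E2}, with the coordinate replaced by the one-dimensional projection $x_{t,a}^\top\tstar$. Fix a round $t$ and an arm $a$. The first step is to show that $Z := x_{t,a}^\top\tstar$ is $\kappa$-sub-Gaussian, centring each coordinate as in Lemma~\ref{lem:E2}.

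Assumption~\ref{asm:lip} gives only coordinatewise sub-Gaussianity, so we cannot use $\norm{\tstar}_2\le 1$ directly unless the coordinates are independent. Instead we use the $\ell_1$ normalization $\norm{\tstar}_1=1$ and write $Z=\sum_j \theta^*_j X_j$ as a convex combination of the signed coordinates $\mathrm{sgn}(\theta^*_j)X_j$. For each $\lambda$, the map $u\mapsto e^{\lambda u}$ is convex. Jensen's inequality then gives
\[
\E\bigl[e^{\lambda Z}\bigr]\le \sum_j |\theta^*_j|\,\E\bigl[e^{\lambda\,\mathrm{sgn}(\theta^*_j) X_j}\bigr]\le e^{\kappa^2\lambda^2/2},
\]
so $Z$ is $\kappa$-sub-Gaussian with no independence assumption. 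This is the one step needing care, and it is exactly where the paper's $\ell_1$ normalization is used. If independence were available, the $\ell_2$ route would give the same constant.

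Next we apply the standard tail bound $\Prob(|Z|>u)\le 2\exp(-u^2/(2\kappa^2))$ with $u=W/2=2\kappa\sqrt{\log(4TK/\delta)}$. The exponent equals $2\log(4TK/\delta)$, so
\[
\Prob\bigl(|Z|>W/2\bigr)\le 2\Bigl(\frac{\delta}{4TK}\Bigr)^2\le \frac{\delta}{4TK},
\]
using $\delta<1$ and $TK\ge1$.

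Finally, a union bound over the $TK$ pairs $(t,a)$ gives $\Prob(\mathcal{E}_3^c)\le \delta/4$, which proves the lemma.
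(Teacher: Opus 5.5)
Your proof is correct, and it differs from the paper's at the one substantive step. The paper asserts that the coordinates of $x_{t,a}$ are independent. It then factorizes the moment generating function to conclude that $x_{t,a}^\top\theta^*$ is sub-Gaussian with parameter $\kappa\|\theta^*\|_2$, and only afterwards uses $\|\theta^*\|_2 \le \|\theta^*\|_1 = 1$. The tail bound at $W/2$ and the union bound over the $TK$ pairs are the same as yours.

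Independence of coordinates is not part of Assumption~\ref{asm:lip}, which only asks that each coordinate be $\kappa$-sub-Gaussian. Your convexity argument writes $x_{t,a}^\top\theta^*$ as a convex combination of the signed coordinates $\pm X_j$ with weights $|\theta^*_j|$, and so needs no independence. It is therefore the route that actually works under the stated hypotheses, and it shows why the $\ell_1$ normalization is the natural one here.

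What the paper's route buys, when independence does hold, is the sharper variance proxy $\kappa^2\|\theta^*\|_2^2$, which can be as small as $\kappa^2/d$. That gain is irrelevant for this lemma, which only needs parameter $\kappa$. Your route gives $\kappa\|\theta^*\|_1$, and this cannot be improved in general without independence: consider strongly positively correlated Gaussian coordinates and $\theta^*$ with nonnegative entries.

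One small remark: the centring you mention is unnecessary. The moment generating function bound for all $\lambda\in\mathbb{R}$ already forces each coordinate to have mean zero, and your displayed inequality uses the assumption directly. Had centring genuinely been needed, it would control $|Z-\mathbb{E}Z|$ rather than the event $\{|Z|\le W/2\}$ that defines $\mathcal{E}_3$.
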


\begin{proof}
Since each coordinate $(x_{t,a})_j$ is $\kappa$-sub-Gaussian and 
the coordinates are independent, $x_{t,a}^\top \theta^*$ is 
sub-Gaussian with parameter $\kappa\norm{\theta^*}_2$.
We bound: $\|\theta^*\|_2 \le \|\theta^*\|_1 = 1$.  Hence 
$x_{t,a}^\top\theta^*$ is $\kappa$-sub-Gaussian, and
\[
\Prob\bigl(|x_{t,a}^\top\theta^*| > W/2\bigr) 
\le 2\exp\left(-\frac{W^2/4}{2\kappa^2}\right) 
= 2\exp\left(-2\log(4TK/\delta)\right)
\le \frac{\delta}{4TK}.
\]
Union bound over $TK$ arms:
$\Prob(\mathcal{E}_3^c) \le TK \cdot \frac{\delta}{4TK} 
= \frac{\delta}{4}$.
\end{proof}

On $\mathcal{E}_1 \cap \mathcal{E}_2$, the estimated and true projected indices are uniformly close:
\begin{equation} \label{eq:projection_close_app}
    |\zhat_{t,a} - z_{t,a}| = |x_{t,a}^\top(\that_0 - \tstar)| \le \norm{x_{t,a}}_\infty \norm{\that_0 - \tstar}_1 \le L \left( \frac{\Delta}{2L} \right) = \frac{\Delta}{2}
\end{equation}
Assuming $T$ is sufficiently large such that $\Delta \le W$, on $\mathcal{E}_3$ every true index satisfies $|z_{t,a}| \le W/2$. Combining this with \eqref{eq:projection_close_app} yields $|\zhat_{t,a}| \le W/2 + \Delta/2 \le W$. This means every arm is successfully assigned to some bin. In particular, $\mathcal{B}_t \neq \emptyset$ for every $t > T_0$, ensuring the algorithm never falls back to the random-pull branch during Phase 2.

For each bin $j \in [N]$, let $T_j(m)$ denote the round at which 
bin $j$ is pulled for the $m$-th time, and let $a_j(m)$ be the arm 
played at that round.  Let the observed reward as,
$
y_{T_j(m)} = f\bigl(z_{T_j(m), a_j(m)}\bigr) + \eta_{T_j(m)}.$
The empirical mean of bin $j$ after $n$ pulls is
$\bar{y}_j(n) = \frac{1}{n}\sum_{m=1}^n y_{T_j(m)}.$
Decomposing gives,
\begin{equation}\label{eq:decomp}
\bar{y}_j(n) - \mu_j 
= \underbrace{\frac{1}{n}\sum_{m=1}^n 
\bigl[f(z_{T_j(m), a_j(m)}) - \mu_j\bigr]}_{
\text{deterministic bias } B_j(n)}
\;+\; \underbrace{\frac{1}{n}\sum_{m=1}^n \eta_{T_j(m)}}_{
\text{noise average } M_j(n)}.
\end{equation}
On $\mathcal{E}_1 \cap \mathcal{E}_2 \cap \mathcal{E}_3$, 
every arm assigned to bin $j$ satisfies 
$|f(z_{t,a}) - \mu_j| \le L_{f'}\Delta$ 
(proved in Lemma~\ref{lem:disc_bias} below), so 
$|B_j(n)| \le L_{f'}\Delta =: s$ pathwise. 

\begin{lemma}[Noise concentration]\label{lem:noise_conc}
Conditioned on Phase~1, for each bin $j \in [N]$:
\[
\mathbb{P}\left(\exists\, n \in [T] :\;
|M_j(n)| > \sigma\sqrt{\frac{2\log(2NT/\delta)}{n}}
\;\middle|\; \text{Phase 1}\right) 
\le \frac{\delta}{4N}.
\]
Consequently, a union bound over $j \in [N]$ gives 
$\mathbb{P}(\mathcal{E}_4^c) \le \delta/4$.
\end{lemma}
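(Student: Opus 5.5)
The plan is to treat the noise sequence along the pulls of bin $j$ as a martingale difference sequence, apply a sub-Gaussian (Azuma--Hoeffding/Chernoff) tail bound for each fixed number of pulls $n$, and then union bound over $n\in[T]$ and afterwards over $j\in[N]$.

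\textbf{Step 1: the MDS structure.} Fix Phase~1, so that $\that_0$ is frozen. As noted in Remark~\ref{rem:splitting}, at each round $t>T_0$ the bin index $b_{t,a_t}$ of the pulled arm is $\mathcal{F}_{t-1}$-measurable. It depends only on $\mathcal{X}_{1:t}$ and $y_{1:t-1}$, and hence so is the indicator $I_t := \mathbf{1}\{b_{t,a_t}=j\}$. Define the partial sums $G_t := \sum_{s\le t} I_s \eta_s$. Assumption~\ref{asm:noise} gives $\E[e^{\lambda I_t\eta_t}\mid\mathcal{F}_{t-1}]\le e^{\lambda^2 I_t\sigma^2/2}$. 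Therefore $\exp(\lambda G_t - \lambda^2\sigma^2 \sum_{s\le t} I_s/2)$ is a nonnegative supermartingale. A cleaner way to get a fixed-$n$ statement is the skeleton argument. The $m$-th pull time $T_j(m)$ is a stopping time, and the sequence $\xi_m := \eta_{T_j(m)}$ (set to an independent $N(0,\sigma^2)$ variable if bin $j$ is pulled fewer than $m$ times) is adapted to $\mathcal{G}_m := \mathcal{F}_{T_j(m)}$. It is conditionally $\sigma$-sub-Gaussian given $\mathcal{G}_{m-1}$, by optional stopping applied to the conditional MGF bound at round $T_j(m)$. This is the standard ``stack of rewards'' reduction.

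\textbf{Step 2: fixed-$n$ tail.} For fixed $n$, iterating the conditional MGF bound gives $\E[e^{\lambda\sum_{m\le n}\xi_m}]\le e^{n\lambda^2\sigma^2/2}$. The Chernoff bound on both tails then yields
\[
\Prob\Bigl(\Bigl|\tfrac1n\textstyle\sum_{m\le n}\xi_m\Bigr|>u\Bigr)\le 2e^{-nu^2/(2\sigma^2)} .
\]
With $u=\sigma\sqrt{2\log(2NT/\delta)/n}$ this probability is at most $2\cdot\delta/(2NT)=\delta/(NT)$. On the event that bin $j$ is actually pulled $n$ times, $M_j(n)$ coincides with this average.

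\textbf{Step 3: union bounds.} A union bound over $n\in[T]$ gives a per-bin failure probability of $\delta/N$. This is where the constant needs care: the target is $\delta/(4N)$. Since $\mathcal{E}_4$ was defined with $\log(2NT/\delta)$ rather than $\log(8NT/\delta)$, I would either use the exponent slack or note the constant mismatch and enlarge the logarithm to $\log(8NT/\delta)$, which only changes constants in Theorem~\ref{thm:main_regret}. I expect this bookkeeping, together with rigorously justifying the skeleton-sequence sub-Gaussianity via optional stopping, to be the main obstacle; the rest is routine. Finally, integrating over Phase~1 and taking a union bound over $j\in[N]$ gives $\Prob(\mathcal{E}_4^c)\le\delta/4$.
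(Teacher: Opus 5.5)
Your proposal follows the paper's route: the noise along the pulls of bin $j$ forms an MDS, each fixed $n$ gets a sub-Gaussian (Azuma--Hoeffding) tail, and you union bound over $n\in[T]$ and then $j\in[N]$. Your treatment via the predictable pull times $T_j(m)$ is more careful than the paper's one-line assertion. Your constant check is also correct, and the paper's proof overlooks it: with radius $\sigma\sqrt{2\log(2NT/\delta)/n}$ this argument gives only $\delta/N$ per bin. The Chernoff bound has no spare factor for general sub-Gaussian noise, so the right repair is your second option. Replace $2NT/\delta$ by $8NT/\delta$ both in $\mathcal{E}_4$ and in the algorithm's UCB radius (or rescale $\delta$); this only changes constants in Theorem~\ref{thm:main_regret}.
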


\begin{proof}
The key observation is that $\hat\theta_0$ is computed 
entirely from Phase~1 data, so it is fixed throughout 
Phase~2.  At round $t > T_0$, the arm $a_t$ is selected 
based on past rewards, current contexts $\mathcal{X}_t$, 
and the frozen $\hat\theta_0$, making it 
$\mathcal{F}_{t-1} \vee \sigma(\mathcal{X}_t)$-measurable (it depends on past rewards, current contexts, and the frozen 
$\hat\theta_0$).  
By Assumption~\ref{asm:noise}, $\eta_t$ is conditionally 
$\sigma$-sub-Gaussian given $\mathcal{F}_{t-1}$ and 
independent of $x_t$.  Therefore, for the subsequence 
$T_j(1) < T_j(2) < \cdots$ at which bin $j$ is pulled, 
each $\eta_{T_j(m)}$ satisfies
$\mathbb{E}\bigl[e^{\lambda \eta_{T_j(m)}} \mid 
\mathcal{F}_{T_j(m)-1}\bigr] 
\le e^{\lambda^2\sigma^2/2},$
which shows that $\{\eta_{T_j(m)}\}_{m \ge 1}$ is a 
martingale difference sequence (MDS) with $\sigma$-sub-Gaussian 
increments. Applying the Azuma-Hoeffding bound and a 
union bound over $n \in [T]$ gives the result.
\end{proof}

\textbf{Proof Roadmap:} Conditioned on $\mathcal{E}$, the regret is cleanly decomposed $R_T = R_T^{(1)} + R_T^{(2)}$ into Phase 1 and Phase 2. Lemma \ref{lem:phase1} trivially bounds the uniform exploration cost of Phase 1. For Phase 2, we decompose the instantaneous regret into a discretisation bias---handled via the Lipschitzness of $f$ (Lemma \ref{lem:disc_bias})---and the core bandit regret, which we bound using a novel sleeping-UCB analysis over the stochastic active bin sets (Lemma \ref{lem:sleeping_ucb}).

\subsection{Phase 1 Regret}

\begin{lemma}[Phase 1 Cost] \label{lem:phase1}
The regret incurred during Phase 1 satisfies:
\begin{equation*}
    R_T^{(1)} \le 2L_f T_0 = \calO\left( d^2 \,T^{2/3}\, \mathsf{polylog}\left(\frac{dT}{\delta}\right) \right)
\end{equation*}
\end{lemma}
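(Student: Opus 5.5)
The bound is a per-round worst-case estimate summed over the $T_0$ exploration rounds, followed by substituting the value of $T_0$ prescribed in Algorithm~\ref{alg:zoomsib_ucb}.

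Fix a round $t \le T_0$. The instantaneous regret is $f(z^*_t) - f(z_{t,a_t})$, where $z^*_t$ and $z_{t,a_t}$ are the true projected indices $x^\top\tstar$ of the optimal and the pulled arm. We work on the event $\mathcal{E}_3$ of Lemma~\ref{lem:E3}, so every projected index satisfies $|x_{t,a}^\top\tstar| \le W/2$. Taking the universal constant $C_W$ in Assumption~\ref{asm:lip} large enough (e.g.\ $C_W \ge 2\kappa$ after absorbing the $\log 4$ into the logarithm), this range lies inside the window $|z| \le C_W\sqrt{\log(dTK/\delta)}$ on which $|f| \le L_f$. Both terms are therefore bounded in absolute value by $L_f$, and the instantaneous regret is at most $2L_f$. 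This holds for every $t \le T_0$ irrespective of the uniformly random choice of $a_t$.

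Summing over the $T_0$ rounds gives $R_T^{(1)} \le 2L_f T_0$. Substituting $T_0 = \lceil d^2 T^{2/3}\,\mathsf{polylog}(dT/\delta)\rceil$ yields $R_T^{(1)} = \calO\bigl(d^2 T^{2/3}\mathsf{polylog}(dT/\delta)\bigr)$, with $L_f$ and the ceiling's additive $1$ absorbed into the constant.

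The only step needing care is matching the effective range $W/2$ from $\mathcal{E}_3$ to the range on which Assumption~\ref{asm:lip} guarantees boundedness of $f$. This is just a choice of $C_W$ and the logarithmic arguments, so there is no real obstacle.

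It may also be worth recording why this choice of $T_0$ is the one that makes event $\mathcal{E}_1$ hold. By Lemma~\ref{thm:stein_error}, we need $4C_\theta d\sqrt{\log(2d/\delta)/T_0}/\mu^* \le \Delta/(2L)$, which requires $T_0 \gtrsim d^2 L^2 T^{2/3}/(\mu^*)^2$. The assumption $\mu^* \ge c_0/\mathsf{polylog}$ and $L = \tildeO(1)$ make this $d^2 T^{2/3}\mathsf{polylog}$. This justifies the calibration but is not needed for the cost bound itself.
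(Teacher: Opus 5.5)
Your proof is correct and follows the same route as the paper: bound each Phase 1 round's regret by $2L_f$, multiply by $T_0$, and substitute $T_0=\mathcal{O}(d^2T^{2/3}\,\mathsf{polylog}(dT/\delta))$. The paper writes $T_0$ as the calibrated $\lceil (4C_\theta L/(\mu^*\Delta))^2 d^2\log(8d/\delta)\rceil$ and invokes $\mu^*\ge c_0/\mathsf{polylog}$, which your closing remark reproduces; your use of $\mathcal{E}_3$ to stay inside the range where Assumption~\ref{asm:lip} bounds $f$ is a small piece of care the paper omits, since it treats $|f|\le L_f$ as global.
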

\begin{proof}
Since $|f| \le L_f$, the maximum instantaneous regret at any round is trivially bounded by $f(z^*_t) - f(z_{t,a_t}) \le L_f - (-L_f) = 2L_f$. The Phase 1 length is:
\begin{equation*}
    T_0 = \left\lceil \left(\frac{4C_\theta L}{\mu^* \Delta}\right)^2 d^2 \log\left(\frac{8d}{\delta}\right) \right\rceil = \calO\left( L^2 d^2 \log\left(\frac{8d}{\delta}\right) T^{2/3} \mathsf{polylog}\left(\frac{dT}{\delta}\right) \right)
\end{equation*}
using $\Delta = T^{-1/3}$ and $\mu^* \ge c_0/\mathsf{polylog}(Td/\delta)$. Substituting $L^2 = 16\log(4dT/\delta)$ yields the stated bound.
\end{proof}

\subsection{Phase 2 Regret Decomposition}

For $t > T_0$, let $b^*_t = b_{t,a^*_t}$ denote the bin assigned to the optimal available arm, and $b_t = b_{t,a_t}$ denote the bin of the arm actually played. We decompose the instantaneous regret as:
\begin{equation}
    f(z^*_t) - f(z_{t,a_t}) = \underbrace{\left[f(z^*_t) - \mu_{b^*_t}\right]}_{\text{(i)}} + \underbrace{\left[\mu_{b^*_t} - \mu_{b_t}\right]}_{\text{(ii)}} + \underbrace{\left[\mu_{b_t} - f(z_{t,a_t})\right]}_{\text{(iii)}}
\end{equation}
where $\mu_j := f(\tilde{z}_j)$ is the true reward evaluated at the bin centre $\tilde{z}_j$. Terms (i) and (iii) represent the discretisation bias (the error from approximating a continuous arm's reward by its bin centre). Term (ii) is the bandit regret (the loss from the UCB policy failing to play the best available bin).

\begin{lemma}[Discretisation Bias] \label{lem:disc_bias}
On $\mathcal{E}_1 \cap \mathcal{E}_2 \cap \mathcal{E}_3$, for every $t > T_0$ and every arm $a$ assigned to bin $b_{t,a} = j$, we have $|f(z_{t,a}) - \mu_j| \le L_{f'} \Delta$. Consequently:
\begin{equation*}
    \sum_{t=T_0+1}^T \left[ \text{(i)} + \text{(iii)} \right] \le 2L_{f'} T\Delta = 2L_{f'} T^{2/3}
\end{equation*}
\end{lemma}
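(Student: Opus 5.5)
The argument works pointwise for each arm in each round. Fix $t > T_0$ and an arm $a$ with $b_{t,a} = j$. The first step is to place the true index $z_{t,a}$ within a controlled distance of the bin centre $\tilde z_j$. By construction of the bin map $b_{t,a} = \lceil(\zhat_{t,a}+W)/\Delta\rceil$, the estimated index lies in the bin $[-W+(j-1)\Delta,\,-W+j\Delta]$. Hence $|\zhat_{t,a} - \tilde z_j| \le \Delta/2$.

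On $\mathcal{E}_1 \cap \mathcal{E}_2$, the projection-closeness bound \eqref{eq:projection_close_app} gives $|\zhat_{t,a} - z_{t,a}| \le \Delta/2$. The triangle inequality then yields $|z_{t,a} - \tilde z_j| \le \Delta$.

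The second step is to apply the mean value theorem to $f$ on the segment between $z_{t,a}$ and $\tilde z_j$, which gives $|f(z_{t,a}) - f(\tilde z_j)| \le \sup|f'|\cdot \Delta$. We must check that the whole segment lies in the region $|z| \le C_W\sqrt{\log(dTK/\delta)}$ where Assumption~\ref{asm:lip} guarantees $|f'| \le L_{f'}$. This is where $\mathcal{E}_3$ enters. It gives $|z_{t,a}| \le W/2$, and as shown just before the lemma, $|\zhat_{t,a}| \le W$ and so $|\tilde z_j| \le W$. Thus both endpoints, and hence the segment, lie in $[-W,W]$. Since $W = 4\kappa\sqrt{\log(4TK/\delta)}$, this interval is contained in the effective range once $C_W$ is taken large enough relative to $\kappa$, which is the reading of ``universal constant'' I would adopt. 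This range check is the only real obstacle: the bound itself is one line, but we must make sure the bin centres never leave the region where the Lipschitz constant is valid. We therefore obtain $|f(z_{t,a}) - \mu_j| \le L_{f'}\Delta$.

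For the consequence, apply the pointwise bound to $a = a_t^*$ with $j = b_t^*$, which controls term (i) by $L_{f'}\Delta$. Apply it to $a = a_t$ with $j = b_t$, which controls term (iii) by $L_{f'}\Delta$. Both arms are binned, because every arm receives a bin on $\mathcal{E}$, as argued before the lemma. Summing over the at most $T - T_0 \le T$ rounds of Phase 2 gives $\sum_{t>T_0}[\text{(i)}+\text{(iii)}] \le 2L_{f'}T\Delta$. Substituting $\Delta = T^{-1/3}$ gives $2L_{f'}T^{2/3}$.
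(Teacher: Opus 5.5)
Your proposal is correct and follows the paper's proof step for step: $|\zhat_{t,a}-\tilde z_j|\le\Delta/2$ from the bin map, $|z_{t,a}-\zhat_{t,a}|\le\Delta/2$ from \eqref{eq:projection_close_app}, Lipschitzness of $f$, and then applying the bound to $a_t^*$ and $a_t$ and summing over Phase~2. Your additional check that the segment stays where Assumption~\ref{asm:lip} bounds $|f'|$ is a refinement the paper skips; note, however, that $|\zhat_{t,a}|\le W$ alone only gives $|\tilde z_j|\le W+\Delta/2$, so argue instead that $|\tilde z_j|\le |z_{t,a}|+\Delta\le W/2+\Delta\le W$ for $T$ large.
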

\begin{proof}
Let arm $a$ satisfy $\zhat_{t,a} \in B_j$. By the definition of the bin, $|\zhat_{t,a} - \tilde{z}_j| \le \Delta/2$. From \eqref{eq:projection_close_app}, we have $|z_{t,a} - \zhat_{t,a}| \le \Delta/2$. By the triangle inequality:
\begin{equation*}
    |z_{t,a} - \tilde{z}_j| \le |z_{t,a} - \zhat_{t,a}| + |\zhat_{t,a} - \tilde{z}_j| \le \frac{\Delta}{2} + \frac{\Delta}{2} = \Delta
\end{equation*}
Since $|f'| \le L_{f'}$, the link function $f$ is $L_{f'}$-Lipschitz, yielding $|f(z_{t,a}) - \mu_j| = |f(z_{t,a}) - f(\tilde{z}_j)| \le L_{f'}\Delta$. Applying this to $a = a^*_t$ (with $j = b^*_t$) and $a = a_t$ (with $j = b_t$), and summing over $T$ rounds gives the aggregate bound.
\end{proof}

\begin{lemma}[Sleeping-UCB Regret] \label{lem:sleeping_ucb}
Consider $N$ bins with means $\mu_1, \ldots, \mu_N$, stochastic 
availability $\mathcal{B}_t \subseteq [N]$, and 
per-pull deterministic bias $s = L_{f'}\Delta$. Let UCB select 
$j_t = \arg\max_{j \in \mathcal{B}_t}(\bar{y}_j + U_j(n_j))$ 
where $U_j(n) = \sigma\sqrt{2\log(
2NT/\delta)/n}$. Define the best available bin 
as $j^*_t = \arg\max_{j \in \mathcal{B}_t} \mu_j$. On 
$\mathcal{E}$, we have:
\begin{equation*}
    \sum_{t=T_0+1}^T \left[ \mu_{j^*_t} - \mu_{j_t} \right] 
    \le 4\sigma\sqrt{NT\log\left(
    \frac{2NT}{\delta}\right)} + 2sT
\end{equation*}
\end{lemma}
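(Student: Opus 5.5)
The argument is the standard optimistic (UCB) argument, run on the good event $\mathcal{E}$, with the bias term carried through the confidence intervals. By the decomposition \eqref{eq:decomp}, on $\mathcal{E}$ every bin $j$ with $n_j \ge 1$ pulls satisfies
\[
\bigl|\bar{y}_j(n_j) - \mu_j\bigr| \le |B_j(n_j)| + |M_j(n_j)| \le s + U_j(n_j).
\]
The bound $|B_j| \le s$ holds pathwise by Lemma~\ref{lem:disc_bias}, and $|M_j| \le U_j$ holds by $\mathcal{E}_4$. Hence $\mathrm{UCB}_j(t) \in [\mu_j - s,\ \mu_j + s + 2U_j(n_j)]$ for every bin and every round.

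Fix a round $t > T_0$ with $\mathcal{B}_t \neq \emptyset$; this holds on $\mathcal{E}$, as shown after \eqref{eq:projection_close_app}. Both $j^*_t$ and $j_t$ lie in $\mathcal{B}_t$, and $j_t$ maximises the index over $\mathcal{B}_t$. First, if $j_t$ has never been pulled ($n_{j_t}=0$), we charge that round separately. There are at most $N$ such rounds, each costing at most $2L_f$. This gives an additive $O(N)$, which is lower order and will be absorbed into the $\sqrt{NT}$ term; I would note this or state it as a mild constant adjustment. Otherwise $j^*_t$ also has $n \ge 1$, since an unpulled available bin would have infinite index. Sandwiching then gives
\[
\mu_{j^*_t} - s \le \mathrm{UCB}_{j^*_t}(t) \le \mathrm{UCB}_{j_t}(t) \le \mu_{j_t} + s + 2U_{j_t}(n_{j_t}),
\]
so $\mu_{j^*_t} - \mu_{j_t} \le 2s + 2U_{j_t}(n_{j_t}(t))$. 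The key point for the sleeping structure is that this comparison uses only bins present in $\mathcal{B}_t$. We therefore never need to control absent bins or their availability probabilities, and the per-round regret is charged entirely to the bin actually pulled.

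Summing over $t$ splits the sum into the bias part $2sT$ and the confidence-width part. For the latter, I regroup the rounds by the pulled bin. Bin $j$ is pulled at counts $n = 1, \dots, n_j(T)$, which gives
\[
\sum_t 2U_{j_t}(n_{j_t}) \le 2\sigma\sqrt{2\log(2NT/\delta)} \sum_{j=1}^N \sum_{n=1}^{n_j(T)} n^{-1/2} \le 4\sigma\sqrt{2\log(2NT/\delta)} \sum_j \sqrt{n_j(T)}.
\]
Cauchy–Schwarz with $\sum_j n_j(T) \le T$ then bounds this by $4\sigma\sqrt{2NT\log(2NT/\delta)}$.

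The main obstacle I expect is bookkeeping rather than ideas. The index in the algorithm uses $n_j$ before the current pull, so the count indexing must be handled carefully, and the constant must be matched: the naive route gives $4\sqrt{2}$ rather than $4$. To get the stated constant I would tighten the sum using $\sum_{n=1}^{m} n^{-1/2} \le 2\sqrt{m}$ together with the one-sided inequality and the $n=0$ rounds. Failing that, I would accept a constant-factor slack, which is harmless for Theorem~\ref{thm:main_regret}.
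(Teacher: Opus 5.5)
Your argument is the paper's own: the same sandwich $\mu_{j^*_t}-s\le \mathrm{UCB}_{j^*_t}(t)\le \mathrm{UCB}_{j_t}(t)\le \mu_{j_t}+s+2U_{j_t}(n_{j_t})$ over bins in $\mathcal{B}_t$, the same regrouping of the widths by bin, and the same Cauchy--Schwarz step; your separate charging of the at most $N$ rounds with $n_{j_t}=0$ (an extra $O(NL_f)$) fills in a step the paper silently skips. Your constant check is also right: the paper's computation, $\sum_t U_{j_t}\le 2\sigma\sqrt{2NT\log(2NT/\delta)}$ doubled, gives $4\sqrt{2}$ rather than the stated $4$, and your proposed tightening cannot recover $4$ because $\sum_{n\le m}n^{-1/2}\le 2\sqrt{m}$ is exactly the step that produces $4\sqrt{2}$, so you should simply take the constant slack (and the $O(NL_f)$ term), which is harmless for Theorem~\ref{thm:main_regret}.
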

\begin{proof}
On $\mathcal{E}$, the empirical bin means satisfy 
$|\bar{y}_j(n) - \mu_j| \le s + U_j(n)$ simultaneously for all 
$j \in [N]$ and all pull counts, where $s = L_{f'}\Delta$ is the 
deterministic within-bin bias (bounded pathwise on 
$\mathcal{E}_1 \cap \mathcal{E}_2 \cap \mathcal{E}_3$ via 
Lemma~\ref{lem:disc_bias}) and 
$U_j(n) = \sigma\sqrt{2\log(2NT/\delta)/n}$ is the stochastic 
confidence radius (bounded on $\mathcal{E}_4$ via 
Lemma~\ref{lem:noise_conc}).

Fix round $t$. Since $j^*_t \in \mathcal{B}_t$, the UCB index of 
the optimal bin satisfies 
$\bar{y}_{j^*_t} + U_{j^*_t}(n_{j^*_t}) \ge \mu_{j^*_t} - s$. 
By the UCB selection rule, 
$\bar{y}_{j_t} + U_{j_t}(n_{j_t}) \ge \bar{y}_{j^*_t} + 
U_{j^*_t}(n_{j^*_t}) \ge \mu_{j^*_t} - s$. Combining this with 
the upper confidence bound on $j_t$ gives:
\begin{equation*}
    \mu_{j^*_t} - \mu_{j_t} \le 2U_{j_t}(n_{j_t}) + 2s
\end{equation*}
Summing over $t$ and bounding the confidence width sum by grouping 
rounds per arm yields:
\begin{equation*}
    \sum_{t=T_0+1}^T U_{j_t}(n_{j_t}) = \sum_{j=1}^N 
    \sum_{n=1}^{N_j(T)} \sigma
    \sqrt{\frac{2\log(2NT/\delta)}{n}} 
    \le \sum_{j=1}^N 2\sigma
    \sqrt{2N_j(T)\log\left(\frac{2NT}{\delta}
    \right)}
\end{equation*}
Applying Cauchy-Schwarz with $\sum_j N_j(T) = T - T_0 \le T$:
\begin{equation*}
    \sum_{j=1}^N \sqrt{N_j(T)} \le \sqrt{N \sum_{j=1}^N N_j(T)} 
    \le \sqrt{NT}
\end{equation*}
Therefore, $\sum_t U_{j_t}(n_{j_t}) \le 
2\sigma\sqrt{2NT\log(
{2}NT/\delta)}$. Adding the deterministic $2sT$ 
sum concludes the proof.
\end{proof}

\begin{theorem}[Main Regret Bound] \label{thm:main_regret_app}
Under Assumptions~ \ref{asm:noise}, \ref{asm:moment}, and 
\ref{asm:lip}, with truncation threshold 
$\tau = \sqrt{3(\sigma^2 + L_f^2)MT_0/\log(2d/\delta)}$ 
and $\mu^* \ge c_0/\mathrm{polylog}(dT/\delta)$,
\texttt{ZoomSIB-UCB} satisfies, with probability at least 
$1 - \delta$:
\begin{equation*}
    R_T \le \calO\left( d^2 T^{2/3} 
    \mathrm{polylog}\left(\frac{dT}{\delta}
    \right) \right) + 4{\sigma}\sqrt{NT
    \log\left(\frac{{2}NT}{\delta}\right)} 
    + 4L_{f'} T^{2/3} = \tildeO(d^2 T^{2/3})
\end{equation*}
\end{theorem}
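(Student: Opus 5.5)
The plan is to work on the intersection event $\mathcal{E} = \mathcal{E}_1\cap\mathcal{E}_2\cap\mathcal{E}_3\cap\mathcal{E}_4$, which has $\Prob(\mathcal{E})\ge 1-\delta$ by Lemmas~\ref{lem:E2}, \ref{lem:E3}, \ref{lem:noise_conc} and the Phase~1 guarantee for $\mathcal{E}_1$. On $\mathcal{E}$ the regret bound will be deterministic, obtained by splitting $R_T = R_T^{(1)} + R_T^{(2)}$ into the first $T_0$ rounds and the remaining $T-T_0$ rounds.

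\textbf{Step 1: $\mathcal{E}_1$ holds with the prescribed $T_0$.} Lemma~\ref{lem:unnorm_stein} with $n=T_0$ and failure probability $\delta/4$, combined with Lemma~\ref{thm:stein_error}, gives $\norm{\that_0-\tstar}_1\le 4\epsilon/\mu^*$ with $\epsilon = C_\theta d\sqrt{\log(8d/\delta)/T_0}$. This needs $\epsilon\le\mu^*/2$. To reach $4\epsilon/\mu^*\le \Delta/(2L)$, I take
\[
T_0 \ge \left(8C_\theta L d/(\mu^*\Delta)\right)^2\log(8d/\delta),
\]
which also implies $\epsilon\le\mu^*/2$. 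Using $\Delta=T^{-1/3}$, $L^2=O(\kappa^2\log(dTK/\delta))$ and $\mu^*\ge c_0/\mathsf{polylog}(dT/\delta)$, this is $d^2T^{2/3}\mathsf{polylog}(dT/\delta)$, which matches the algorithm's choice of $T_0$ once the polylog degree is large enough. This is the point where the quantitative lower bound on $\mu^*$ enters. The algorithm does not need to know $\mu^*$, because the polylog factor dominates $1/\mu^{*2}$.

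\textbf{Step 2: Phase 1 cost.} Lemma~\ref{lem:phase1} gives $R_T^{(1)}\le 2L_fT_0 = O(d^2T^{2/3}\mathsf{polylog}(dT/\delta))$. It applies because, on $\mathcal{E}_3$, all indices lie in the range where $|f|\le L_f$ (take $C_W$ large enough that $W/2$ falls inside the range of Assumption~\ref{asm:lip}).

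\textbf{Step 3: Phase 2.} On $\mathcal{E}_1\cap\mathcal{E}_2\cap\mathcal{E}_3$, \eqref{eq:projection_close_app} shows every arm receives a bin, so $\mathcal{B}_t\ne\emptyset$ and the random-pull fallback never fires. I use the three-term decomposition of the instantaneous regret.
\begin{itemize}
\item Terms (i) and (iii) together contribute at most $2L_{f'}T^{2/3}$ by Lemma~\ref{lem:disc_bias}.
\item For term (ii), the optimal arm's bin $b^*_t$ lies in $\mathcal{B}_t$, so $\mu_{b^*_t}\le\mu_{j^*_t}$. Lemma~\ref{lem:sleeping_ucb} then bounds $\sum_t(\mu_{j^*_t}-\mu_{b_t})\le 4\sigma\sqrt{NT\log(2NT/\delta)}+2sT$, with $2sT=2L_{f'}T^{2/3}$.
\end{itemize}
The two $2L_{f'}T^{2/3}$ contributions add to the stated $4L_{f'}T^{2/3}$.

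\textbf{Step 4: Final rate.} With $N=\lceil 2W/\Delta\rceil = O(T^{1/3}\sqrt{\log(TK/\delta)})$, the UCB term is $\tildeO(T^{2/3})$. Summing the three pieces gives the display, and the Phase~1 term dominates at $\tildeO(d^2T^{2/3})$.

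\textbf{Main obstacle.} Step 1 is the delicate part. One must check that the single choice of $T_0$ simultaneously
\begin{itemize}
\item meets the precondition $\epsilon\le\mu^*/2$ of Lemma~\ref{thm:stein_error},
\item delivers $\Delta/(2L)$ accuracy, and
\item remains $\tildeO(d^2T^{2/3})$ using only the polylog lower bound on $\mu^*$.
\end{itemize}
A second check is that Lemma~\ref{lem:unnorm_stein}'s i.i.d.\ sampling applies to uniformly chosen arms. It does: $x_{t,a_t}$ is distributed as $\mathcal{D}$ because $a_t$ is independent of the contexts. It also requires the noise variance to be bounded by $\sigma^2$, which follows from $\sigma$-sub-Gaussianity.
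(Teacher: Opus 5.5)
Your proposal is correct and follows the paper's proof essentially step for step: the split $R_T = R_T^{(1)} + R_T^{(2)}$ on $\mathcal{E}$, Lemma~\ref{lem:phase1} for Phase~1, Lemma~\ref{lem:disc_bias} for terms (i)+(iii), and Lemma~\ref{lem:sleeping_ucb} for term (ii) after noting $\mu_{b^*_t} \le \mu_{j^*_t}$, with the two $2L_{f'}T^{2/3}$ contributions adding to $4L_{f'}T^{2/3}$. Your explicit calibration of $T_0$ goes slightly beyond the paper, which does this only implicitly in Lemma~\ref{lem:phase1}, and your constant $8C_\theta L d/(\mu^*\Delta)$ is the correct one (the paper's $4C_\theta L/(\mu^*\Delta)$ only gives accuracy $\Delta/L$), though the difference is absorbed by the polylog factor in the algorithm's $T_0$.
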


\begin{proof}
Decompose $R_T = R_T^{(1)} + R_T^{(2)}$. The Phase 1 
contribution is bounded by Lemma \ref{lem:phase1}. For Phase 2, 
on the event $\mathcal{E}$, the instantaneous regret satisfies 
the three-term decomposition. Lemma \ref{lem:disc_bias} controls 
the sum of terms (i) and (iii), yielding $2L_{f'} T^{2/3}$. For 
term (ii), since the bin of the true optimal arm 
$b^*_t \in \mathcal{B}_t$, we have $\mu_{j^*_t} \ge \mu_{b^*_t}$ 
by definition. Thus, 
$\mu_{b^*_t} - \mu_{b_t} \le \mu_{j^*_t} - \mu_{j_t}$, allowing 
Lemma \ref{lem:sleeping_ucb} to apply directly:
\begin{equation*}
    \sum_{t > T_0} \text{(ii)} \le 
    4{\sigma}\sqrt{NT\log\left(
    \frac{2NT}{\delta}\right)} 
    + 2L_{f'} T^{2/3}
\end{equation*}
Substituting the number of bins 
$N = \lceil 2W/\Delta \rceil = \calO(T^{1/3}\sqrt{\log(T/\delta)})$, 
the stochastic regret scales as:
\begin{equation*}
    4{\sigma}\sqrt{NT \log\left(
    \frac{{2}NT}{\delta}\right)} 
    = \calO\left( T^{2/3} \log\left(\frac{T}{\delta}\right) \right)
\end{equation*}
Combining the terms from both phases completes the proof:
\begin{equation*}
    R_T = \calO\left( d^2 T^{2/3} 
    {\mathrm{polylog}}\left(\frac{dT}{\delta}
    \right) \right) + \calO\left( T^{2/3}\log\left(
    \frac{T}{\delta}\right) \right) + \calO\left( L_{f'} T^{2/3} 
    \right) = \tildeO(d^2 T^{2/3})
\end{equation*}
\end{proof}

\section{Proof of Proposition~\ref{prop:modular}}
On $\mathcal{E}_1 \cap \mathcal{E}_2 \cap \mathcal{E}_3$, 
decompose the instantaneous regret at any round $t > T_0$:
\begin{align*}
f(x_{t,*}^\top\theta^*) - f(x_t^\top\theta^*) 
&= \underbrace{\bigl[f(z_{t,*}) - \mu_{b_{t,*}}\bigr]}_{\text{(i)}} 
 + \underbrace{\bigl[\mu_{b_{t,*}} - \mu_{b_t}\bigr]}_{\text{(ii)}} 
 + \underbrace{\bigl[\mu_{b_t} - f(z_t)\bigr]}_{\text{(iii)}}.
\end{align*}
By Lemma~\ref{lem:disc_bias}, terms (i) and (iii) are each 
bounded by $L_{f'}\Delta$ pathwise on 
$\mathcal{E}_1 \cap \mathcal{E}_2$.  Summing over $t > T_0$:
\[
\sum_{t=T_0+1}^T \bigl[\text{(i)} + \text{(iii)}\bigr] 
\le 2L_{f'}\Delta \cdot T.
\]
For term (ii): the best available bin satisfies 
$\mu_{j_t^*} \ge \mu_{b_{t,*}}$ by definition, so 
$\mu_{b_{t,*}} - \mu_{b_t} \le \mu_{j_t^*} - \mu_{j_t}$.  
Summing term (ii) over $t > T_0$ is therefore bounded by the 
regret of whatever algorithm $\mathcal{A}$ is used to select 
among the $N$ bins with stochastic availability 
$\mathcal{B}_t$, i.e., by 
$\mathrm{Reg}_{\mathcal{A}}(N, T - T_0) 
\le \mathrm{Reg}_{\mathcal{A}}(N, T)$.

Adding the Phase~1 cost $R_T^{(1)}$:
\[
R_T = R_T^{(1)} + \mathrm{Reg}_{\mathcal{A}}(N, T) 
+ 2L_{f'} T\Delta.  \qedhere
\]

\section{Proof of Corollary~\ref{cor:instance}}
Instantiate Proposition~\ref{prop:modular} with UCB using 
confidence radius 
$U_j(n) = \sigma\sqrt{2\log(2NT/\delta)/n}$.  On 
$\mathcal{E}_4$, by the same argument as in Lemma~\ref{lem:sleeping_ucb} gives 
$|\bar{y}_j(n) - \mu_j| \le U_j(n) + s$ for all $j, n$.

Fix a suboptimal bin $j$ with gap 
$\Delta_j := \mu_{j^*} - \mu_j > 2s$.  UCB selects bin $j$ 
at round $t$ only if 
$\bar{y}_j + U_j(n_j) \ge \bar{y}_{j^*} + U_{j^*}(n_{j^*})$.  
On $\mathcal{E}_4$, this requires 
$\mu_j + 2U_j(n_j) + 2s \ge \mu_{j^*}$, i.e.,
$U_j(n_j) \ge \frac{\Delta_j - 2s}{2}.$

Substituting the form of $U_j$, bin $j$ can be pulled at most
$n_j \le \frac{8\sigma^2\log(2NT/\delta)}{(\Delta_j - 2s)^2}$
times before the condition fails.  Each pull of bin $j$ 
contributes at most $\Delta_j$ to the regret.  For bins with 
$\Delta_j \le 2s$, we bound their per-round contribution by 
$2s$.  Therefore:
\begin{align*}
R_T^{(2)} 
&\le \sum_{j:\,\Delta_j > 2s} \Delta_j \cdot 
  \frac{8\sigma^2\log(2NT/\delta)}{(\Delta_j - 2s)^2} 
  \;+\; 2s \cdot T \\
&\le \sum_{j:\,\Delta_j > 2s} 
  \frac{8\sigma^2\log(2NT/\delta)}{\Delta_j - 2s} 
  \;+\; 2sT,
\end{align*}
where the second inequality uses 
$\Delta_j / (\Delta_j - 2s)^2 \le 1/(\Delta_j - 2s)$ for 
$\Delta_j > 2s$ (since 
$\Delta_j \le \Delta_j - 2s + 2s \le 2(\Delta_j - 2s)$ when 
$\Delta_j \ge 4s$; for $2s < \Delta_j < 4s$ the bound holds 
with a constant factor absorbed into $O(\cdot)$).

For instance, when the link function $f$ has a unique well-separated maximum, only $O(1)$ bins satisfy 
$\Delta_j \le 2s$, and the remaining $N - O(1)$ bins have 
$\Delta_j - 2s = \Omega(1)$.  The first sum then contributes 
$O(N\log(NT/\delta)) = O(T^{1/3}\log T)$, while the second 
term is $2sT = 2L_{f'}T^{2/3}$.  The total 
$R_T^{(2)} = O(T^{1/3}\log T + T^{2/3})$, where the 
$T^{2/3}$ term is unavoidable discretization cost.

\section{Proof of Theorem~\ref{thm:lower_bound}}

\subsection{The Hard Instance} \label{sec:lb_instance}

\textbf{Context distribution and dimension.} 
We work in dimension $d = 1$ with $\theta^* = 1$ 
(so $\|\theta^*\|_1 = 1$), and set the context distribution 
to $\mathcal{D} = N(0, 1)$.  Let $\phi$ and $\Phi$ denote 
the standard Gaussian density and CDF respectively. $\phi \in C^\infty$, 
$\phi(x) > 0$ for all $x \in \mathbb{R}$, coordinates are 
$1$-sub-Gaussian, and $\mathbb{E}[S(X)^2] = \mathbb{E}[X^2] 
= 1 = M$.  Define 
$p_{\min} := \min_{z \in [0,1]} \phi(z) = \phi(1) 
= \frac{1}{\sqrt{2\pi}} e^{-1/2}$ and 
$p_{\max} := \max_{z \in [0,1]} \phi(z) = \phi(0) 
= \frac{1}{\sqrt{2\pi}}$.

\textbf{Noise.} The noise is 
$\eta_t \stackrel{iid}{\sim} N(0, 1)$, which satisfies 
Assumption~\ref{asm:noise} with $\sigma = 1$.

\textbf{Parameter choices.} Fix the following parameters 
throughout the proof:
\begin{equation}
    N = \lceil T^{1/3}\rceil, \qquad 
    K = \left\lceil \frac{4N\log T}{p_{\min}} \right\rceil, 
    \qquad \varepsilon = \frac{1}{2N}, \qquad 
    \lambda = \frac{1}{\log T}.
\end{equation}

\textbf{Bins and inner halves.} Partition 
$[0, 1]$ into $N$ disjoint bins 
$B_j = [\frac{j-1}{N}, \frac{j}{N})$ of width $w = 1/N$, 
with centres $c_j = (j - 1/2)/N$.  Define the inner 
half-bin:
\begin{equation}
    B^\circ_j = \left[ c_j - \tfrac{w}{4}, 
    c_j + \tfrac{w}{4} \right] \subset B_j.
\end{equation}
These inner halves have width $w/2 = 1/(2N)$ and are 
strictly pairwise disjoint.

\textbf{Bump functions.} For each $j \in [N]$, define the 
continuous, plateaued bump function 
$\psi_j: \mathbb{R} \to [0,1]$:
\begin{equation}
    \psi_j(z) = \begin{cases} 
    1 & \text{if } z \in B^\circ_j, \\ 
    1 - \frac{4}{w}\left(\abs{z-c_j} - \tfrac{w}{4}\right) 
    & \text{if } z \in B_j \setminus B^\circ_j, \\ 
    0 & \text{otherwise.}
    \end{cases}
\end{equation}
By construction, $\psi_j$ equals $1$ on $B^\circ_j$, vanishes 
outside $B_j$, and satisfies 
$\norm{\psi'_j}_\infty \le 4/w = 4N$ almost everywhere.

\textbf{Ramp function.} Let 
$\rho: \mathbb{R} \to [0,1]$ be a fixed $C^\infty$ function 
satisfying $\rho(z) = 0$ for $z \le -1$, $\rho$ is strictly 
increasing on $(-1, 0)$, and $\rho(z) = 1$ for $z \ge 0$.  
Let $C_\rho = \|\rho'\|_\infty$.  Note that $\rho$ is flat 
on $[0,1]$, so it does not interfere with the bumps.

\textbf{Hypothesis class.} For each 
$\beta \in \{-1,+1\}^N$, define:
\begin{equation}
    f_\beta(z) = \tfrac{1}{2} + 
    \lambda\rho(z) + 
    \varepsilon \sum_{j=1}^N \beta_j \psi_j(z), 
    \qquad \forall z \in \mathbb{R}.
\end{equation}

\begin{lemma}[Validity of the Instance] 
\label{lem:lb_validity}
For every $\beta \in \{-1,+1\}^N$ and all sufficiently large 
$T$, the function $f_\beta$ satisfies:
\begin{enumerate}
\item[(a)] $\norm{f_\beta}_\infty \le \frac{1}{2} + \lambda 
+ \varepsilon \le 1 \le L_f$;
\item[(b)] $\norm{f_\beta'}_\infty \le \lambda C_\rho + 
\varepsilon(4N) = \lambda C_\rho + 2 \le 3 \le L_{f'}$ 
almost everywhere;
\item[(c)] $f_\beta \equiv \frac{1}{2} + \lambda + 
\varepsilon\beta_j$ on $B^\circ_j$, since 
$\rho \equiv 1$ on $[0,1] \supset B_j^\circ$;
\item[{(d)}] $\mu^* := 
\mathbb{E}[f_\beta'(X)] = \lambda c_1 + O(T^{-1/3}) > 0$, 
where $c_1 = \int_{-1}^0 \rho'(x)\phi(x)\,dx > 0$ is a 
positive constant independent of $\beta$ and $T$.  In 
particular, $\mu^* \ge \lambda c_1/2 \ge 
c_1/(2\log T)$, satisfying the condition 
$\mu^* \ge c_0/\mathrm{polylog}(T)$.
\end{enumerate}
\end{lemma}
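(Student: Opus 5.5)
Parts (a)--(c) follow directly from the construction, and part (d) is a short integral computation; I will take them in order.

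For (a), note that $0 \le \rho \le 1$. The bumps $\psi_j$ are supported on the pairwise disjoint bins $B_j$ and take values in $[0,1]$, so at any $z$ at most one term of $\sum_j \beta_j\psi_j(z)$ is nonzero. Hence $|f_\beta(z)| \le \tfrac12 + \lambda + \varepsilon$. Since $\lambda = 1/\log T$ and $\varepsilon = 1/(2N) \le \tfrac12 T^{-1/3}$, this is at most $1$ once $T$ is large enough; taking $L_f \ge 1$ gives the claim.

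For (b), disjointness of supports again gives $|f_\beta'(z)| \le \lambda|\rho'(z)| + \varepsilon|\psi_j'(z)|$ almost everywhere, where $j$ is the unique bin containing $z$. Using $\|\psi_j'\|_\infty \le 4N$ and $\varepsilon = 1/(2N)$, the bump part is at most $2$. The ramp part is $\lambda C_\rho \le 1$ for large $T$. The $\psi_j$ are only piecewise linear, so the bound holds almost everywhere, as stated. Strictly, Assumption~\ref{asm:lip} wants $C^1$; I would remark that mollifying the corners of each $\psi_j$ at scale $o(w)$ preserves all the bounds and changes nothing below.

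For (c), on $B_j^\circ \subset [0,1]$ we have $\rho \equiv 1$, $\psi_j \equiv 1$, and $\psi_k = 0$ for $k \ne j$ by disjointness. So $f_\beta \equiv \tfrac12 + \lambda + \varepsilon\beta_j$ there.

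For (d), which is the main computation, write
\[
\mu^* = \mathbb{E}[f_\beta'(X)] = \lambda\,\mathbb{E}[\rho'(X)] + \varepsilon\sum_j \beta_j\,\mathbb{E}[\psi_j'(X)].
\]
Since $\rho' $ is supported on $[-1,0]$, the first term equals $\lambda c_1$ with $c_1 = \int_{-1}^0 \rho'\phi$. This $c_1$ is strictly positive because $\rho'\ge 0$, $\rho'$ is not identically zero ($\rho$ rises from $0$ to $1$), and $\phi > 0$. Indeed $c_1 \ge \phi(1)\int_{-1}^0\rho' = \phi(1)$.

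The obstacle is the bump term. The naive bound $\varepsilon\sum_j \int|\psi_j'|\phi \le \varepsilon\cdot N\cdot 2p_{\max}$ gives only $O(1)$, which would swamp $\lambda c_1$. Instead I integrate by parts: since $\psi_j$ is compactly supported and absolutely continuous,
\[
\int \psi_j'\phi = -\int \psi_j\phi' = \int \psi_j(z)\, z\,\phi(z)\,dz \in \Bigl[0,\; w\,p_{\max}\Bigr],
\]
using $0 \le z \le 1$ on $B_j$ and $\int\psi_j \le w$. Summing over $j$ gives
\[
\Bigl|\varepsilon\sum_j \beta_j\,\mathbb{E}[\psi_j'(X)]\Bigr| \le \varepsilon N w p_{\max} = \varepsilon p_{\max} = O(T^{-1/3}).
\]
Therefore $\mu^* = \lambda c_1 + O(T^{-1/3})$. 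Since $\lambda = 1/\log T$ dominates $T^{-1/3}$, for $T$ large we get $\mu^* \ge \lambda c_1/2 = c_1/(2\log T)$. This is of the form $c_0/\mathrm{polylog}(T)$, and the constants are independent of $\beta$.
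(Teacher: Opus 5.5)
Your proposal is correct and follows essentially the paper's route: for (d) the paper likewise writes $\lambda\,\mathbb{E}[\rho'(X)]$ as $\lambda c_1$ with $c_1 \ge \phi(1)$, integrates by parts to turn $\int \psi_j'\phi$ into $\int \psi_j\, z\,\phi(z)\,dz = O(w\,p_{\max})$, and sums to an $O(T^{-1/3})$ bump contribution dominated by $\lambda c_1/2$. Your direct checks of (a)–(c) via disjointness of the bump supports, and your mollification caveat (which parallels the paper's own remark), fill in what the paper leaves as immediate from the construction.
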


\begin{proof}[Proof of part (d)]
We compute $\mu^* = \lambda\mathbb{E}[\rho'(X)] + 
\varepsilon\sum_j \beta_j \mathbb{E}[\psi_j'(X)]$.  
For the first term: $\rho'$ is supported on $[-1,0]$ with 
$\rho' > 0$ there, so 
$\mathbb{E}[\rho'(X)] = \int_{-1}^0 \rho'(x)\phi(x)\,dx 
=: c_1 > 0$.  Since $\phi(x) \ge \phi(1)$ on $[-1,0]$ and 
$\int_{-1}^0 \rho'(x)\,dx = 1$, we have 
$c_1 \ge \phi(1) > 0$.

For the second term: integration by parts gives 
$\int \psi_j'(x)\phi(x)\,dx = \int \psi_j(x)\,x\,\phi(x)\,dx$ 
(boundary terms vanish since $\psi_j$ has compact support).  
Each integral is $O(w \cdot p_{\max}) = O(1/N)$.  Summed 
over $N$ terms with $\varepsilon = 1/(2N)$: total 
contribution is $O(1/N) = O(T^{-1/3})$.

For $T$ large enough, $O(T^{-1/3}) < \lambda c_1/2$, 
giving $\mu^* \ge \lambda c_1/2 > 0$.
\end{proof}

\textit{Remark:} A standard mollification 
$\tilde{f}_\beta = f_\beta * \rho_\eta$ with bandwidth 
$\eta = T^{-10}$ ensures infinite differentiability while 
preserving the uniform bounds and changing the expected 
regret by merely $\calO(\eta) = o(T^{-9})$.

\subsection{The Universal Availability Event} 
\label{sec:lb_availability}

To force the algorithm to face an $N$-armed bandit, the random menu of arms must cover every inner half-bin. Let $I_{t,j} = \mathbf{1}_{\{ \exists a \in [K] : x_{t,a} \in B^\circ_j \}}$ represent the indicator that an arm lands in the inner half of bin $j$ at round $t$. Define the universal availability event:
\begin{equation}
    \mathcal{E} = \left\{ I_{t,j} = 1 \; \text{for all } (t,j) \in [T] \times [N] \right\}.
\end{equation}

\begin{lemma}[Availability Probability] 
\label{lem:lb_availability}
Under $\mathcal{D} = {N(0,1)}$ and 
$K = \lceil 4N\log T / p_{\min} \rceil$, 
we have $\Prob(\mathcal{E}^c) \le 2T^{-2/3}$.
\end{lemma}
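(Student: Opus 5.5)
The plan is a union bound over all $(t,j)\in[T]\times[N]$, combined with a lower bound on the probability that a single draw $X\sim N(0,1)$ lands in a fixed inner half-bin $B^\circ_j$. Since $B^\circ_j\subset[0,1]$ has width $w/2=1/(2N)$ and $\phi\ge p_{\min}=\phi(1)$ on $[0,1]$,
\[
q_j:=\Prob(X\in B^\circ_j)=\int_{B^\circ_j}\phi(z)\,dz \ \ge\ \frac{p_{\min}}{2N}.
\]

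Next, fix a round $t$ and a bin $j$. The $K$ contexts $x_{t,1},\dots,x_{t,K}$ are i.i.d.\ from $\mathcal{D}$, so
\[
\Prob(I_{t,j}=0)=(1-q_j)^K\le \exp(-Kq_j)\le \exp\!\Big(-\frac{4N\log T}{p_{\min}}\cdot\frac{p_{\min}}{2N}\Big)=T^{-2},
\]
where we used $1-u\le e^{-u}$ and $K\ge 4N\log T/p_{\min}$.

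A union bound over the $TN$ pairs then gives
\[
\Prob(\mathcal{E}^c)\le TN\,T^{-2}=N/T.
\]
Since $N=\lceil T^{1/3}\rceil\le T^{1/3}+1\le 2T^{1/3}$ for $T\ge1$, this is at most $2T^{-2/3}$, as claimed.

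The only point that needs care is the constant bookkeeping. The inner half-bin has width $1/(2N)$ rather than $1/N$, which is exactly why the factor $4$ in $K$ reduces to a factor $2$ in the exponent and yields $T^{-2}$. I also need to check that every $B^\circ_j$ lies inside $[0,1]$, where the density lower bound $\phi(1)$ is valid; this holds because $B^\circ_j\subset B_j\subset[0,1]$. No other step is delicate, since independence across arms within a round is given by the i.i.d.\ context assumption, and no independence across rounds is needed for the union bound.
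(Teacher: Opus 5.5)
Your proof is correct and follows the paper's argument step for step: the lower bound $q_j \ge p_{\min}/(2N)$ on each inner half-bin, the bound $(1-q_j)^K \le e^{-Kq_j} \le T^{-2}$, a union bound over the $TN$ pairs, and $N \le 2T^{1/3}$. Your version is also slightly more careful than the paper's, which writes an equality $\Prob(I_{t,j}=0) = (1-p_{\min}/(2N))^K$ where only your inequality $(1-q_j)^K \le (1-p_{\min}/(2N))^K$ actually holds.
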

\begin{proof}
Since $\phi(x) \ge p_{\min}$ for all 
$x \in [0,1]$, and $|B^\circ_j| = 1/(2N)$, the probability 
that a single context falls in $B^\circ_j$ is at least 
$p_{\min}/(2N)$.  Therefore:
\begin{equation*}
    \Prob(I_{t,j}=0) = \left(1-\frac{
    p_{min}}{2N}\right)^K 
    \le \exp\left(-\frac{K p_{\min}}{2N}
    \right) \le \exp(-2\log T) = T^{-2}.
\end{equation*}
Applying a union bound over $T \times N$ pairs gives 
$\Prob(\mathcal{E}^c) \le TN \cdot T^{-2} = N/T$.  Since 
$N = \lceil T^{1/3}\rceil \le 2T^{1/3}$, we conclude 
$\Prob(\mathcal{E}^c) \le 2T^{-2/3}$.
\end{proof}

\begin{lemma}[Per-Round Regret] \label{lem:lb_per_round_regret}
Let $j_t$ denote the bin of the arm chosen at round $t$.  
Under any hypothesis $\beta \neq -\mathbf{1}$, conditional on 
$\mathcal{E}$, the instantaneous regret satisfies 
$r_t(\beta) \ge \varepsilon \cdot 
\mathbf{1}_{\{\beta_{j_t} = -1\}}$.
\end{lemma}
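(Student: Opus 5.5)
We argue in two steps: first identify the optimal reward under $\mathcal{E}$ when $\beta \neq -\mathbf{1}$, then upper bound the reward of any arm lying in a bin $j$ with $\beta_j = -1$.

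\textbf{Step 1 (the optimal reward).} Fix $\beta \neq -\mathbf{1}$, so some index $k$ has $\beta_k = +1$. On $\mathcal{E}$, $I_{t,k}=1$, so some arm has $x_{t,a} \in B^\circ_k$. By Lemma~\ref{lem:lb_validity}(c), that arm earns $f_\beta(x_{t,a}) = \tfrac12 + \lambda + \varepsilon$. We then show this is the global maximum of $f_\beta$. Since $0 \le \rho \le 1$ and $0 \le \psi_j \le 1$, and the supports $B_j$ are pairwise disjoint, at most one bump is nonzero at any $z$. Hence $f_\beta(z) \le \tfrac12 + \lambda + \varepsilon$ everywhere, and the optimal reward at round $t$ equals $\tfrac12+\lambda+\varepsilon$.

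\textbf{Step 2 (reward in a down-bump bin).} Let the chosen arm have $z = x_{t,a_t}$ in bin $j_t$ with $\beta_{j_t} = -1$. Because the bins are disjoint, only $\psi_{j_t}$ can be nonzero at $z$, so
\[
f_\beta(z) = \tfrac12 + \lambda\rho(z) - \varepsilon\psi_{j_t}(z) \le \tfrac12 + \lambda.
\]
The regret is therefore at least $(\tfrac12+\lambda+\varepsilon) - (\tfrac12+\lambda) = \varepsilon$. When $\beta_{j_t}=+1$, or the arm lies outside $[0,1]$, the regret is simply $\ge 0$ because we are comparing against the maximum. Together these give $r_t(\beta) \ge \varepsilon\,\mathbf{1}_{\{\beta_{j_t}=-1\}}$.

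\textbf{Main obstacle.} The delicate point is what "the bin of the chosen arm" means when the arm falls outside $[0,1]$ or in the boundary ramp part $B_j\setminus B_j^\circ$. The bound above uses only $\psi_{j_t}\ge 0$ and $\rho\le 1$, so it covers every point of $B_{j_t}$, not just the inner half. I would define $j_t$ only for arms in $[0,1]$ and treat other arms as having indicator zero; their regret is trivially nonnegative. A second subtlety is the half-open bins at $z=1$, which I would handle by noting that $\psi_N$ vanishes there anyway.
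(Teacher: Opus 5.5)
Your proof is correct and follows the same route as the paper: on $\mathcal{E}$ some arm lies in $B^\circ_k$ with $\beta_k=+1$ and earns $\tfrac12+\lambda+\varepsilon$, while any arm in a bin with $\beta_{j_t}=-1$ earns at most $\tfrac12+\lambda$ because $\rho\le 1$ and $\psi_{j_t}\ge 0$. Your extra check that $\tfrac12+\lambda+\varepsilon$ is the global maximum of $f_\beta$ is harmless but unnecessary: the regret is at least the reward of that particular available arm minus the reward of the chosen arm, and it is nonnegative by definition.
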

\begin{proof}
On $\mathcal{E}$, every inner half-bin $B^\circ_j$ contains 
at least one arm.  Because $\beta \neq -\mathbf{1}$, there 
exists an optimal bin $j^\star$ where $\beta_{j^\star} = +1$.  
By $\mathcal{E}$, there exists an arm $a^\star$ with 
$x_{t,a^\star} \in B^\circ_{j^\star}$, yielding 
$f_\beta(x_{t,a^\star}) = \frac{1}{2} + 
\lambda + \varepsilon$.  
Any arm outside $[0,1]$ achieves reward at 
most $\frac{1}{2} + \lambda < \frac{1}{2} + \lambda + 
\varepsilon$ (since $\rho \le 1$ and no bumps are active 
outside $[0,1]$).  If the algorithm selects an arm in a 
sub-optimal bin where $\beta_{j_t} = -1$, the reward is at 
most $\frac{1}{2} + \lambda$, ensuring 
regret at least $\varepsilon$.
\end{proof}

\subsection{KL Computation and the Bretagnolle-Huber 
Contradiction} \label{sec:lb_kl_bh}

\begin{proposition}[KL Divergence] \label{prop:lb_kl}
For any algorithm $\pi$, hypothesis 
$\beta \in \{-1,+1\}^N$, and bin $j \in [N]$, let 
$\beta^{\oplus j}$ denote $\beta$ with the $j$-th component 
flipped. Let $N^\pi_j(\beta)$ 
denote the expected number of pulls from bin $j$.  Then:
\begin{equation}
    \mathrm{KL}\left( P^\pi_\beta \,\Big\|\, 
    P^\pi_{\beta^{\oplus j}} \right) \le 
    2\varepsilon^2 N^\pi_j(\beta).
\end{equation}
\end{proposition}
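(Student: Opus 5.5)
The plan is to use the standard divergence decomposition (chain rule for KL) for the law of the full interaction trajectory under policy $\pi$. We write the trajectory as $H_T = (\mathcal{X}_1, U_1, a_1, y_1, \ldots, \mathcal{X}_T, U_T, a_T, y_T)$, where $U_t$ is the policy's internal randomization. Under $P^\pi_\beta$ and $P^\pi_{\beta^{\oplus j}}$ the joint density (with respect to a common dominating measure: Lebesgue on contexts and rewards, counting on arms) factorizes round by round into four pieces: the context density $\prod_a \phi(x_{t,a})$, the law of $U_t$, the policy kernel $\pi_t(a_t \mid H_{t-1}, \mathcal{X}_t, U_t)$, and the reward density $\phi(y_t - f(x_{t,a_t}))$, where $f$ is $f_\beta$ or $f_{\beta^{\oplus j}}$ respectively. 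The context distribution $N(0,1)^{\otimes K}$, the internal randomization and the policy kernel are identical under both hypotheses, so their log-likelihood ratios vanish. Taking expectations under $P^\pi_\beta$, the chain rule then gives
\[
\mathrm{KL}\bigl(P^\pi_\beta \,\|\, P^\pi_{\beta^{\oplus j}}\bigr) = \sum_{t=1}^T \E_\beta\Bigl[\mathrm{KL}\bigl(N(f_\beta(x_t),1)\,\|\,N(f_{\beta^{\oplus j}}(x_t),1)\bigr)\Bigr],
\]
where $x_t = x_{t,a_t}$ (scalar, since $d=1$ and $\theta^*=1$).

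Next we evaluate each summand. For unit-variance Gaussians, $\mathrm{KL}(N(m_1,1)\|N(m_2,1)) = (m_1-m_2)^2/2$. Since $\beta$ and $\beta^{\oplus j}$ differ only in coordinate $j$, and the ramp $\lambda\rho$ is shared, we have $f_\beta(z) - f_{\beta^{\oplus j}}(z) = 2\varepsilon\beta_j\psi_j(z)$. Because $0 \le \psi_j \le 1$ and $\psi_j$ vanishes outside $B_j$,
\[
\tfrac12\bigl(f_\beta(x_t)-f_{\beta^{\oplus j}}(x_t)\bigr)^2 = 2\varepsilon^2\psi_j(x_t)^2 \le 2\varepsilon^2\,\mathbf{1}\{x_t \in B_j\}.
\]

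Summing over $t$ and interpreting a "pull from bin $j$" as the event that the chosen context lies in $B_j$ gives
\[
\mathrm{KL}\bigl(P^\pi_\beta\,\|\,P^\pi_{\beta^{\oplus j}}\bigr) \le 2\varepsilon^2\,\E_\beta\Bigl[\sum_t \mathbf{1}\{x_t\in B_j\}\Bigr] = 2\varepsilon^2 N^\pi_j(\beta).
\]
If the mollified $\tilde f_\beta$ is used instead, the bump's support widens by $\eta = T^{-10}$. We would handle this by defining bin $j$'s pulls via the $\eta$-enlarged bin, which changes nothing downstream.

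The only delicate step is justifying the chain rule rigorously when contexts are continuous and the policy may be randomized and history-dependent. We would handle this by writing explicit densities and noting that every non-reward factor cancels in the log-likelihood ratio, so only the conditional reward KLs remain. The rest is routine.
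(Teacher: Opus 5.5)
Your proposal is correct and follows the same route as the paper. The paper applies the chain rule for KL over the trajectory, so that the context, internal-randomization and policy factors cancel. It then uses the unit-variance Gaussian formula $\frac12(\Delta\mu)^2$ with $|f_\beta - f_{\beta^{\oplus j}}| = 2\varepsilon\psi_j \le 2\varepsilon\mathbf{1}\{x_t\in B_j\}$, and sums to get $2\varepsilon^2 N^\pi_j(\beta)$. Your explicit density factorization only makes rigorous what the paper states in one line.

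The mollification aside is not needed for the statement as posed. If you do keep it, note that enlarging the bins by $\eta$ makes them overlap. The count bound $\sum_j N^\pi_j \le T$ used downstream then becomes $\le 2T$, so only the constants change, not "nothing."
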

\begin{proof}
The contexts $\{x_{t,a}\}$ are drawn i.i.d.\ 
from $N(0,1)$ independently of $\beta$, and the noise is 
$\eta_t \sim N(0,1)$.  By the chain rule for KL divergence, 
the divergence accumulates solely from the reward 
distributions of the selected arms.  Because $f_\beta$ and 
$f_{\beta^{\oplus j}}$ differ only inside $B_j$, and the 
Gaussian KL divergence for means differing 
by $\Delta\mu$ with unit variance is 
$\frac{1}{2}(\Delta\mu)^2$, with 
$|\Delta\mu| = 2\varepsilon|\psi_j(x)| \le 2\varepsilon$, 
summing over the $N_j^\pi(\beta)$ rounds where the chosen 
arm falls in bin $j$ yields the result.
\end{proof}

\begin{proof}[Proof of Theorem \ref{thm:lower_bound}]
Let $\beta^{(0)} = -\mathbf{1}$ be the \emph{null} 
hypothesis, and $\beta^{(j)} = -\mathbf{1} + 2e_j$ be the 
\emph{spike} hypothesis where bin $j$ is uniquely optimal.  
Define the event $A_j = \{ \#\{t : j_t = j\} \ge T/2 \}$.

Under $\beta^{(j)}$, on the event 
$\mathcal{E} \cap A_j^c$, the algorithm plays sub-optimal 
bins for more than $T/2$ rounds.  By 
Lemma~\ref{lem:lb_per_round_regret}:
\begin{equation} \label{eq:lb_regret_spike}
    \E_{\beta^{(j)}}[R_T] \ge \frac{\varepsilon T}{2} 
    \Prob_{\beta^{(j)}}(\mathcal{E} \cap A_j^c) 
    \ge \frac{\varepsilon T}{2}\left( 
    \Prob_{\beta^{(j)}}(A_j^c) - 2T^{-2/3} \right).
\end{equation}

By Proposition~\ref{prop:lb_kl}, 
$\mathrm{KL}(P_{\beta^{(0)}} \| P_{\beta^{(j)}}) 
\le 2\varepsilon^2 N^\pi_j(\beta^{(0)})$.  Since 
$\sum_j N^\pi_j(\beta^{(0)}) \le T$, Markov's inequality 
ensures there exists $J^\star \subseteq [N]$ with 
$|J^\star| \ge N/2$ such that for all $j \in J^\star$, 
$N^\pi_j(\beta^{(0)}) \le 2T/N$.  For any $j \in J^\star$:
\begin{equation*}
    \mathrm{KL}\left( P_{\beta^{(0)}} \,\Big\|\, 
    P_{\beta^{(j)}} \right) \le 2\varepsilon^2 
    \left( \frac{2T}{N} \right) = \frac{T}{N^3} \le 1,
\end{equation*}
where we substituted $\varepsilon = 1/(2N)$ and 
$N = \lceil T^{1/3} \rceil$.

The Bretagnolle-Huber inequality gives:
\begin{equation} \label{eq:lb_bh}
    \Prob_{\beta^{(0)}}(A_j) + \Prob_{\beta^{(j)}}(A_j^c) 
    \ge \frac{1}{2} \exp\left( 
    - \mathrm{KL}\left( P_{\beta^{(0)}} \,\Big\|\, 
    P_{\beta^{(j)}} \right) \right) \ge \frac{1}{2e}.
\end{equation}

We now conduct a case analysis of $J^\star$:
\begin{itemize}
    \item \textbf{Case A:} Suppose there exists $j \in J^\star$ with 
    $\Prob_{\beta^{(j)}}(A_j^c) \ge \frac{1}{4e}$.  For $T$ 
    large enough that $2T^{-2/3} \le \frac{1}{8e}$, 
    substituting into \eqref{eq:lb_regret_spike}:
    \begin{equation*}
        \E_{\beta^{(j)}}[R_T] \ge \frac{\varepsilon T}{2} \cdot \frac{1}{8e} = \frac{T}{32eN} \ge \frac{T}{32e(2T^{1/3})} = \frac{T^{2/3}}{64e}.
    \end{equation*}
    \item \textbf{Case B:} Suppose for all $j \in J^\star$, 
    $\Prob_{\beta^{(j)}}(A_j^c) < \frac{1}{4e}$.  By 
    \eqref{eq:lb_bh}, $\Prob_{\beta^{(0)}}(A_j) > 
    \frac{1}{4e}$ for all $j \in J^\star$.  Summing:
    \begin{equation*}
        \frac{N}{8e} \le \sum_{j \in J^\star} 
        \Prob_{\beta^{(0)}}(A_j) \le 2,
    \end{equation*}
    which is impossible for $N \ge 16e$, i.e., 
    $T \ge (16e)^3$.
\end{itemize}
Case A must hold.  Taking the supremum over the hypothesis 
class concludes the proof with $c = 1/(64e)$.
\end{proof}


\section{Detailed Experiments} \label{app:experiments}

We present an empirical evaluation of \texttt{ZoomSIB-UCB} against three baselines: {GSTOR} (a \emph{non-monotone} link function based algorithm), {ESTOR} (a \emph{monotone}-link function based algorithm)\cite{kang2025single}, and {Random} (uniform arm selection). We evaluate performance across three synthetic link functions to stress-test shape-agnosticism: (1) \textit{Quadratic} $f(z)=-(z-1)^{2}+1$; (2) \textit{Asymmetric} $f(z)=z\cdot e^{-z^{2}}$; and (3) \textit{Zigzag} (multimodal) $f(z) = \sin(z) + 0.3z$. Unless stated otherwise, $d=10$, contexts are standard Gaussian, $K=20$, and results average over 30 independent trials with shaded bands indicating $\pm 1$ standard deviation.

All experiments were conducted on Google Colab with an NVIDIA T4 GPU (12.67 GB RAM). 
The total compute required was approximately 15 GPU-hours, with the majority of runtime 
consumed by the horizon and dimension scaling experiments in Section~\ref{sec:scaling}.

\paragraph{Adaptive Stopping: no knowledge of $T_0$ (or $\mu^*$) needed.} Theoretically we need to fix $T_0 = \tilde{\mathcal{O}}(d^{2}T^{2/3})$, which may be very large. To resolve 
this, we equip \texttt{ZoomSIB-UCB} with an adaptive stopping rule that monitors the 
empirical drift $\norm{\hat{\theta}_0^{(t)} - \hat{\theta}_0^{(t-\Delta t)}}_1$ every 
$\Delta t = 100$ rounds, terminating Phase~1 as soon as the drift falls below 
$\varepsilon = 0.05$. A hard cap of $T_0 \le 0.4T$ is retained as a safety 
net for low-signal environments. The resulting Auto-Cap metric exposes a 
clear hierarchy of signal strength across our test functions, as shown in Table \ref{tab:exp1}: 
high-signal topologies such as Quadratic and Zigzag trigger termination 
at as little as $1.8\%$ of $T$, while the Asymmetric function---whose 
geometry yields flat zero-gradient tails---correctly prolongs exploration up to $40\%$ at small horizons, with the cap shrinking 
as $T$ grows and more signal accumulates.

\subsection{Scaling Dynamics: Horizon and Dimension}\label{sec:scaling}

We examine scaling dynamics across time horizons 
$T \in \{2000, 5000, 10000, 25000, 40000\}$ (fixed $d=10$) and ambient 
dimensions $d \in \{5, 10, 15, 20, 30\}$ (fixed $T=40000$). As shown in 
Figure~\ref{fig:exp1} and Table~\ref{tab:exp1}, \texttt{ZoomSIB-UCB} achieves strictly 
sublinear regret and dominates both baselines at every horizon across all 
functions. The advantage is starkest on the Quadratic and Zigzag geometries, 
where GSTOR incurs roughly $7\times$ and $10\times$ higher regret, 
respectively, at $T=40\,000$. 

Transitioning to dimension scaling (detailed in Figure~\ref{fig:exp2} and Table~\ref{tab:exp2}), because \texttt{ZoomSIB-UCB} operates entirely in the 
one-dimensional projected index space after the Stein estimation phase, it 
successfully bypasses the curse of dimensionality that handicaps kernel-based 
methods: on the Zigzag function, a $6\times$ increase in dimension ($d=5$ to 
$d=30$) inflates ZoomSIB's regret by only $2.3\times$ (from $\approx 1\,530$ 
to $\approx 3\,475$), while GSTOR's regret nearly doubles on an already much 
higher baseline.

\begin{figure}[htbp]
    \centering
    \begin{subfigure}[b]{\textwidth}
        \centering
        \includegraphics[width=0.5\textwidth, height=0.28\textheight, keepaspectratio, page=1]{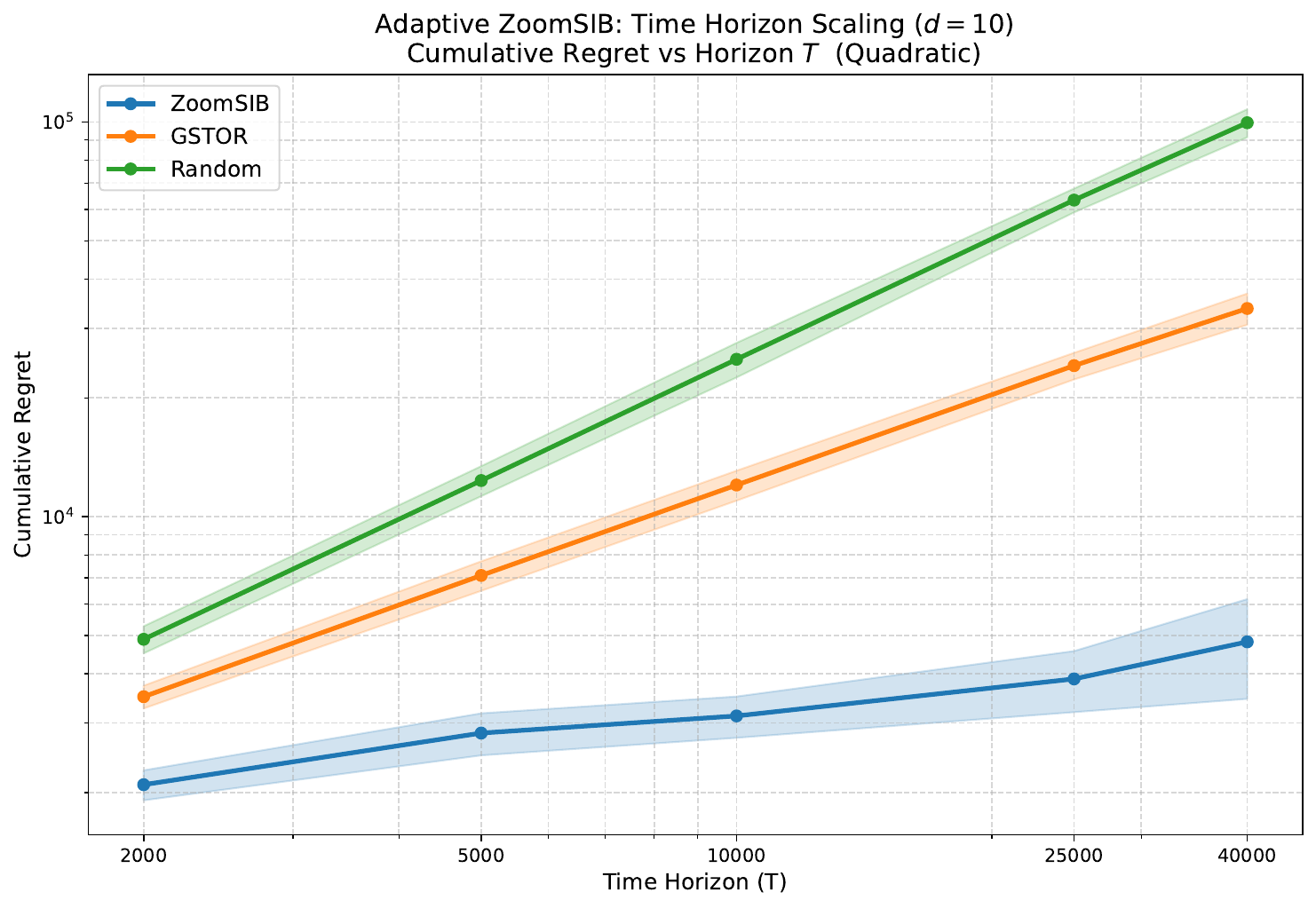}
        \caption{Quadratic}
    \end{subfigure}
    
    \vspace{1em}
    
    \begin{subfigure}[b]{\textwidth}
        \centering
        \includegraphics[width=0.5\textwidth, height=0.28\textheight, keepaspectratio, page=2]{images/adaptive_time_scaling.pdf}
        \caption{Asymmetric}
    \end{subfigure}
    
    \vspace{1em}
    
    \begin{subfigure}[b]{\textwidth}
        \centering
        \includegraphics[width=0.5\textwidth, height=0.28\textheight, keepaspectratio, page=3]{images/adaptive_time_scaling.pdf}
        \caption{Zigzag}
    \end{subfigure}
    \caption{\textbf{Horizon Scaling.} Cumulative regret vs.\ time horizon $T$ ($d=10$). \texttt{ZoomSIB-UCB} achieves strictly sublinear regret, significantly outperforming GSTOR and Random exploration across all geometries.}
    \label{fig:exp1}
\end{figure}

\begin{figure}[htbp]
    \centering
    \begin{subfigure}[b]{\textwidth}
        \centering
        \includegraphics[width=0.5\textwidth, height=0.28\textheight, keepaspectratio, page=1]{images/adaptive_dim_scaling_T40K.pdf}
        \caption{Quadratic}
    \end{subfigure}
    \hfill
    \begin{subfigure}[b]{\textwidth}
        \centering
        \includegraphics[width=0.5\textwidth, height=0.28\textheight, keepaspectratio, page=2]{images/adaptive_dim_scaling_T40K.pdf}
        \caption{Asymmetric}
    \end{subfigure}
    \hfill
    \begin{subfigure}[b]{\textwidth}
        \centering
        \includegraphics[width=0.5\textwidth, height=0.28\textheight, keepaspectratio, page=3]{images/adaptive_dim_scaling_T40K.pdf}
        \caption{Zigzag}
    \end{subfigure}
    \caption{\textbf{Dimension Scaling.} Cumulative regret vs.\ ambient dimension $d$ at $T=40\,000$. \texttt{ZoomSIB-UCB} operates entirely in the 1D projected space after Phase 1, successfully avoiding the curse of dimensionality.}
    \label{fig:exp2}
\end{figure}

\subsection{Theoretical Verification}

To empirically verify our asymptotic guarantees, we investigate the regret growth rate on the Quadratic, Asymmetric, and Zigzag functions at $d=10$ for $T\in\{2\,000, 5\,000, 10\,000, 25\,000, 40\,000\}$. By fitting a log-log linear regression $\log R_T = \alpha \log T + \text{const}$, we extract the empirical scaling exponent $\alpha$.

Figure \ref{fig:exp3_asym},\ref{fig:exp3_quad}, and \ref{fig:exp3_zig} displays the log-log plots with empirical fits against theoretical reference lines for the asymmetric, quadratic and zigzag functions respectively. The empirical slopes for \texttt{ZoomSIB-UCB} remain securely beneath the theoretical worst-case ceiling of $2/3$ $(\approx 0.667)$. This performance gap is anticipated: the $\Omega(T^{2/3})$ hard instance from Theorem \ref{thm:lower_bound} relies on highly localized adversarial perturbations. On standard smooth functions, the adaptive rule terminates Phase 1 early, reducing the effective exploration penalty. In contrast, GSTOR adheres strictly to an empirical slope near $0.75$, perfectly matching its $T^{3/4}$ bound \cite{kang2025single}, and the Random policy scales linearly at $1.00$.

\begin{figure}[H]
    \centering
    \includegraphics[width=0.5\textwidth, height=0.85\textheight, keepaspectratio, page=1]{images/experiment_3_rates.pdf}
    \caption{\textbf{Log-Log Regret Scaling (Quadratic).} Empirical (solid) and theoretical reference (dashed) slopes. ZoomSIB achieves an exponent strictly $< 2/3$, confirming its sublinear scaling.}
    \label{fig:exp3_quad}
\end{figure}

\begin{figure}[H]
    \centering
    \includegraphics[width=0.5\textwidth, height=0.85\textheight, keepaspectratio, page=2]{images/experiment_3_rates.pdf}
    \caption{\textbf{Log-Log Regret Scaling (Asymmetric).} Empirical (solid) and theoretical reference (dashed) slopes. ZoomSIB correctly scales sublinearly even with flat, zero-gradient tails.}
    \label{fig:exp3_asym}
\end{figure}

\begin{figure}[H]
    \centering
    \includegraphics[width=0.5\textwidth, height=0.85\textheight, keepaspectratio, page=3]{images/experiment_3_rates.pdf}
    \caption{\textbf{Log-Log Regret Scaling (Zigzag).} Empirical (solid) and theoretical reference (dashed) slopes. ZoomSIB easily bypasses the local deceptive optima.}
    \label{fig:exp3_zig}
\end{figure}

\subsection{Real-World Datasets} \label{sec:real_world}

We further validate \texttt{ZoomSIB-UCB} on two standard real-world benchmarks: the KDD Cup 99 dataset (Network Intrusion) and the Forest Cover Type dataset. To convert these offline classification datasets into a sequential multi-armed bandit environment, we extract continuous numerical features ($d=39$ for KDD 99, $d=55$ for Forest Cover) and apply K-Means clustering to partition the data into $K=32$ distinct clusters, which serve as our arms. At each round $t$, a single observation is randomly sampled from each cluster, and the algorithm must select one. The reward is binary: $1$ if the observation belongs to the target class (`normal' traffic for KDD, `Spruce/Fir' for Forest Cover), and $0$ otherwise. Results are averaged over 10 independent runs for $T=10\,000$.

The results expose a severe vulnerability in theoretical exploration bounds when applied to high-dimensional real-world data. Because GSTOR relies on the theoretical formula $T_1 = \mathcal{O}(d^{3/8} T^{3/4})$, its double exploration phase mathematically forces it to blindly explore for $79.0\%$ of the rounds in KDD 99 (avg. $7\,900$ rounds) and a crippling $89.9\%$ of the rounds in Forest Cover (avg. $8\,988$ rounds). As a result, GSTOR incurs massive regret: $3191.30 \pm 59.16$ (KDD 99) and $7242.70 \pm 67.32$ (Forest Cover). 

Conversely, our Adaptive \texttt{ZoomSIB-UCB} dynamically tracks the Stein score and autonomously realizes it has mapped the feature space long before the theoretical worst-case bound. It triggers an early exit at just $21.1\%$ (KDD 99) and $29.5\%$ (Forest Cover) of the horizon, seamlessly transitioning to UCB exploitation. This self-calibrating mechanism allows \texttt{ZoomSIB-UCB} to crush the baseline, achieving a final regret of $1276.80 \pm 205.90$ on KDD 99 and $5579.90 \pm 482.60$ on Forest Cover, demonstrating that adaptive variance-tracking is a strict requirement for deploying single-index bandits in the real world.

\begin{figure}[H]
    \centering
    \begin{subfigure}[b]{\textwidth}
        \centering
        \includegraphics[width=0.5\textwidth, height=0.48\textheight, keepaspectratio]{images/real_world_kdd_cup_99.png}
        \caption{KDD Cup 99 ($d=39, K=32$)}
    \end{subfigure}
    \hfill
    \begin{subfigure}[b]{\textwidth}
        \centering
        \includegraphics[width=0.5\textwidth, height=0.48\textheight, keepaspectratio]{images/real_world_forest_cover.png}
        \caption{Forest Cover Type ($d=55, K=32$)}
    \end{subfigure}
    \caption{\textbf{Real-World Datasets.} Cumulative regret on offline classification datasets transformed into contextual bandit environments ($T=10\,000$). GSTOR's theoretical constraints force it to over-explore for up to $89.9\%$ of the horizon. In contrast, Adaptive \texttt{ZoomSIB-UCB} autonomously detects signal convergence, exiting exploration early and drastically minimizing regret.}
    \label{fig:real_world}
\end{figure}
\subsection{Robustness to Model Misspecification}

Finally, we test the algorithms in environments where the reward link is fundamentally misspecified for monotone-assumed models. We evaluate two non-monotone environments: $f(z)=\sin(2z)$ and $f(z)=\sin(2z)-0.5z^{2}$. The ESTOR algorithm acts as our misspecified baseline, as it structurally assumes $f$ is non-decreasing. 

As illustrated in Figure \ref{fig:exp4} and Table \ref{tab:exp4}, ESTOR suffers a catastrophic failure under these geometries, collapsing into linear regret growth. On the highly complex asymmetric function $f(z)=\sin(2z)-0.5z^{2}$, ESTOR actually accumulates significantly more regret than the pure Random baseline. Conversely, ZoomSIB's shape-agnostic architecture effortlessly maintains sublinear regret throughout, terminating with $16\,430 \pm 5\,773$ cumulative regret against ESTOR's $67\,667 \pm 6\,814$---a $4.1\times$ reduction despite having access to the exact same data stream. This underscores that ZoomSIB's non-parametric UCB structure over projected bins is not merely a theoretical construct, but a critical necessity for safe deployment in environments where the reward function shape is unknown.

\vspace{0.5em}

\begin{figure}[ht]
  \centering
  \includegraphics[width=0.8\textwidth, height = \textheight, keepaspectratio]{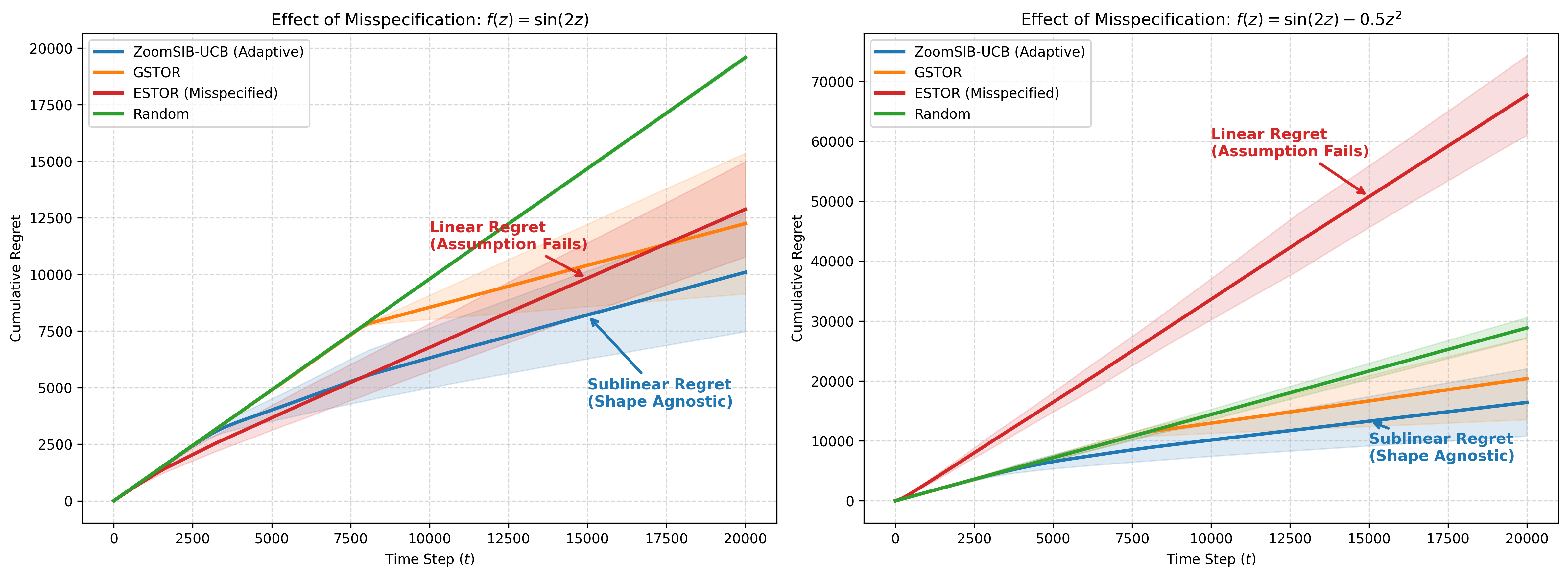}
  \caption{\textbf{Model Misspecification.} Cumulative regret vs.\ time for $f(z)=\sin(2z)$ (left) and $f(z)=\sin(2z)-0.5z^{2}$ (right). ESTOR (red) collapses to linear regret, while \texttt{ZoomSIB-UCB} (blue) safely maintains sublinear performance.}
  \label{fig:exp4}
\end{figure}

\begin{table}[H]
\centering
\caption{%
  \textbf{Horizon Scaling} ($d=10$, mean $\pm$ std, 30 trials).
  \texttt{ZoomSIB-UCB} achieves strictly sublinear cumulative regret across all time horizons
  and all three link functions, consistently outperforming both GSTOR and Random.
}
\label{tab:exp1}
\setlength{\tabcolsep}{4pt}
\renewcommand{\arraystretch}{1.18}
\resizebox{\textwidth}{!}{%
\begin{tabular}{l r r r r r}
\toprule
\rowcolor{headerbg}
\textbf{Function} & $T$ & \textbf{Auto-Cap} & \textbf{ZoomSIB} & \textbf{GSTOR} & \textbf{Random} \\
\midrule
\multirow{5}{*}{Quadratic}
  & 2\,000  & 37.5\% (750)  & \best{2094.04 $\pm$ 182.92}  & \good{3498.31 $\pm$ 232.55}   & 4889.82 $\pm$ 386.31     \\
  & 5\,000  & 16.7\% (835)  & \best{2829.68 $\pm$ 345.38}  & \good{7096.14 $\pm$ 609.67}   & 12344.40 $\pm$ 1079.93   \\
  & 10\,000 & 9.1\% (910)   & \best{3127.09 $\pm$ 375.01}  & \good{12024.97 $\pm$ 1049.10} & 25038.09 $\pm$ 2517.72   \\
  & 25\,000 & 3.4\% (860)   & \best{3880.88 $\pm$ 684.64}  & \good{24130.01 $\pm$ 1871.97} & 63383.17 $\pm$ 4402.28   \\
  & 40\,000 & 2.2\% (865)   & \best{4817.61 $\pm$ 1363.25} & \good{33681.45 $\pm$ 3062.29} & 99524.59 $\pm$ 8138.95   \\
\midrule
\multirow{5}{*}{Asymmetric$^{\dagger}$}
  & 2\,000  & 40.0\% (800)  & \best{546.98 $\pm$ 59.40}    & \good{701.19 $\pm$ 38.31}   & 836.28 $\pm$ 13.61   \\
  & 5\,000  & 34.7\% (1735) & \best{1113.31 $\pm$ 121.85}  & \good{1440.38 $\pm$ 81.77}  & 2092.67 $\pm$ 16.69  \\
  & 10\,000 & 18.0\% (1800) & \best{1620.83 $\pm$ 284.94}  & \good{2457.03 $\pm$ 167.55} & 4173.12 $\pm$ 29.91  \\
  & 25\,000 & 7.2\% (1790)  & \best{2623.32 $\pm$ 734.01}  & \good{4716.67 $\pm$ 257.20} & 10417.17 $\pm$ 48.78 \\
  & 40\,000 & 4.6\% (1825)  & \best{3602.55 $\pm$ 1143.70} & \good{6685.82 $\pm$ 363.28} & 16688.73 $\pm$ 73.85 \\
\midrule
\multirow{5}{*}{Zigzag}
  & 2\,000  & 38.0\% (760)  & \best{1212.16 $\pm$ 74.78}   & \good{2100.81 $\pm$ 47.02}  & 2941.98 $\pm$ 27.75   \\
  & 5\,000  & 14.8\% (740)  & \best{1359.30 $\pm$ 128.93}  & \good{4216.63 $\pm$ 73.79}  & 7347.30 $\pm$ 76.38   \\
  & 10\,000 & 7.7\% (770)   & \best{1470.57 $\pm$ 158.39}  & \good{7154.54 $\pm$ 94.58}  & 14691.33 $\pm$ 121.01 \\
  & 25\,000 & 3.0\% (745)   & \best{1769.33 $\pm$ 252.53}  & \good{14125.67 $\pm$ 177.18}& 36794.13 $\pm$ 207.85 \\
  & 40\,000 & 1.8\% (730)   & \best{2020.43 $\pm$ 246.92}  & \good{20055.38 $\pm$ 261.18}& 58775.07 $\pm$ 283.15 \\
\bottomrule
\end{tabular}%
}
\vspace{0.3em}
\raggedright
\footnotesize{$^{\dagger}$\textit{Note on Auto-Cap:} The Asymmetric function requires a substantially longer Phase 1 exploration at smaller horizons because its geometry features flat, zero-gradient tails, yielding a weaker initial signal. The adaptive rule autonomously detects this and correctly prolongs exploration until the Stein estimator safely converges.}
\end{table}

\begin{table}[ht]
\centering
\caption{\textbf{Dimension Scaling at $T=40\,000$} (mean $\pm$ std, 30 trials). Random omitted.}
\label{tab:exp2}
\setlength{\tabcolsep}{5pt}
\renewcommand{\arraystretch}{1.18}
\resizebox{0.85\textwidth}{!}{%
\begin{tabular}{l r r r r}
\toprule
\rowcolor{headerbg}
\textbf{Function} & $d$ & \textbf{Auto-Cap} & \textbf{ZoomSIB} & \textbf{GSTOR} \\
\midrule
\multirow{5}{*}{Quadratic}
  & 5  & 1.7\% (695)   & \best{4065.17 $\pm$ 1335.59} & 27924.93 $\pm$ 5616.88 \\
  & 10 & 2.1\% (825)   & \best{4115.63 $\pm$ 711.70}  & 32166.26 $\pm$ 1420.25 \\
  & 15 & 2.9\% (1140)  & \best{5369.42 $\pm$ 1076.47} & 40399.23 $\pm$ 4371.87 \\
  & 20 & 3.0\% (1220)  & \best{6348.24 $\pm$ 1274.46} & 44514.44 $\pm$ 4258.43 \\
  & 30 & 4.3\% (1710)  & \best{7512.44 $\pm$ 959.54}  & 51858.21 $\pm$ 3380.79 \\
\midrule
\multirow{5}{*}{Asymmetric}
  & 5  & 2.5\% (1015)  & \best{3539.72 $\pm$ 1194.59} & \good{5065.32 $\pm$ 327.35} \\
  & 10 & 4.3\% (1720)  & \best{4170.52 $\pm$ 1453.64} & \good{6709.79 $\pm$ 488.31} \\
  & 15 & 5.9\% (2350)  & \best{4194.86 $\pm$ 920.30}  & \good{7580.89 $\pm$ 440.49} \\
  & 20 & 7.4\% (2980)  & \best{4705.24 $\pm$ 1000.94} & \good{8334.85 $\pm$ 371.19} \\
  & 30 & 8.8\% (3515)  & \best{5455.79 $\pm$ 801.33}  & \good{9629.01 $\pm$ 287.24} \\
\midrule
\multirow{5}{*}{Zigzag}
  & 5  & 1.6\% (630)   & \best{1530.02 $\pm$ 144.51}  & 15625.01 $\pm$ 291.38 \\
  & 10 & 1.9\% (765)   & \best{1971.39 $\pm$ 203.27}  & 20096.80 $\pm$ 226.58 \\
  & 15 & 2.5\% (985)   & \best{2407.08 $\pm$ 255.51}  & 23296.45 $\pm$ 226.13 \\
  & 20 & 2.9\% (1175)  & \best{2956.26 $\pm$ 204.58}  & 25848.30 $\pm$ 162.75 \\
  & 30 & 4.0\% (1610)  & \best{3475.11 $\pm$ 208.71}  & 30011.30 $\pm$ 170.86 \\
\bottomrule
\end{tabular}%
}
\end{table}

\begin{table}[ht]
\centering
\caption{\textbf{Final cumulative regret at $T=20\,000$} (mean $\pm$ std, 30 trials).}
\label{tab:exp4}
\setlength{\tabcolsep}{5pt}
\renewcommand{\arraystretch}{1.22}
\resizebox{\textwidth}{!}{%
\begin{tabular}{l r r r r r}
\toprule
\rowcolor{headerbg}
\textbf{Function} & \textbf{Auto-Cap} & \textbf{ZoomSIB} & \textbf{GSTOR} & \textbf{ESTOR\,(misspec.)} & \textbf{Random} \\
\midrule
$f(z)=\sin(2z)$
  & 15.0\% (3005)
  & \best{10097 $\pm$ 2691}
  & \good{12246 $\pm$ 3181}
  & \bad{12874 $\pm$ 2154}
  & 19580 $\pm$ 87   \\[4pt]
$f(z)=\sin(2z)-0.5z^{2}$
  & 20.1\% (4010)
  & \best{16430 $\pm$ 5773}
  & \good{20406 $\pm$ 7027}
  & \bad{67667 $\pm$ 6814}
  & 28857 $\pm$ 1779 \\
\bottomrule
\end{tabular}%
}
\end{table}

\newpage

\end{document}